\def\1{\mathbf{1}}
\def\0{\mathbf{0}}
\def\E{\mathbb{E}}
\def\x{{\bf x}}
\def\y{{\bf y}}
\def\V{\mathbb{V}}
\def\w{{\bf w}}
\def\H{\mathbb{H}}
\def\tr{\mathrm{tr}}
\def\KL{\mathrm{KL}}
\def\Cov{\mathrm{Cov}}
\def\gen{\mathrm{gen}}
\setlist[itemize]{leftmargin=*}
\newtheorem{assumption}{Assumption}
\newtheorem{remark}{Remark}
\newtheorem{definition}{Definition}
\newtheorem{lemma}{Lemma}
\newtheorem{theorem}{Theorem}
\newtheorem{proposition}{Proposition}
\newenvironment{restatetheorem}[1]{%
  \restatetheoreminner
}{\endrestatetheoreminner}
\newenvironment{restateproposition}[1]{%
  \restatepropositioninner
}{\endrestatepropositioninner}
\newcommand*\dif{\mathop{}\!\mathrm{d}}
\DeclarePairedDelimiter{\abs}{\lvert}{\rvert}
\DeclarePairedDelimiter{\norm}{\lVert}{\rVert}
\DeclarePairedDelimiter{\prn}{\lparen}{\rparen}
\DeclarePairedDelimiter{\brk}{\lbrack}{\rbrack}
\DeclarePairedDelimiter{\brc}{\lbrace}{\rbrace}
\DeclarePairedDelimiter{\ang}{\langle}{\rangle}
\title{
Understanding the Generalization Ability of Deep Learning Algorithms: A Kernelized R\'enyi's Entropy Perspective
}
\author{
    Yuxin Dong$^1$\and
    Tieliang Gong$^1$\thanks{Corresponding author.}\and
    Hong Chen$^2$\And
    Chen Li$^1$
    \affiliations
    $^1$School of Computer Science and Technology, Xi'an Jiaotong University, Xi'an 710049, China\\
    $^2$College of Science, Huazhong Agriculture University, Wuhan 430070, China\\
    \emails
    adidasgtl@gmail.com,
    dongyuxin@stu.xjtu.edu.cn,
    chenh@mail.hzau.edu.cn,
    cli@xjtu.edu.cn
}
\begin{document}

\maketitle

\begin{abstract}
     Recently, information-theoretic analysis has become a popular framework for understanding the generalization behavior of deep neural networks. It allows a direct analysis for stochastic gradient / Langevin descent (SGD/SGLD) learning algorithms without strong assumptions such as Lipschitz or convexity conditions. However, the current generalization error bounds within this framework are still far from optimal, while substantial improvements on these bounds are quite challenging due to the intractability of high-dimensional information quantities. To address this issue,  we first propose a novel information theoretical measure: kernelized R\'enyi's entropy, by utilizing operator representation in Hilbert space. It inherits the properties of Shannon's entropy and can be effectively calculated via simple random sampling, while remaining independent of the input dimension. We then establish the generalization error bounds for SGD/SGLD under kernelized R\'enyi's entropy, where the mutual information quantities can be directly calculated, enabling evaluation of the tightness of each intermediate step. We show that our information-theoretical bounds depend on the statistics of the stochastic gradients evaluated along with the iterates, and are rigorously tighter than the current state-of-the-art (SOTA) results. The theoretical findings are also supported by large-scale empirical studies\footnote{Proofs available at \url{https://github.com/Gamepiaynmo/KRE}}.
\end{abstract}

\section{Introduction}
Modern deep neural networks (DNNs) achieve astonishing success through their ability to memorize the entire training data while also generalizing well to unseen data. Generalization bounds in conventional statistical learning theory fail to explain this empirical observation since they attribute the generalization to the constrained complexity of hypothesis spaces, which are usually scale-sensitive \cite{zhang2021understanding}. Instead, recent studies discovered that the algorithmic choice has a significant influence on the generalization behavior of DNNs \cite{hardt2016train,bartlett2017spectrally}, raising broad research interests in investigating the theoretical properties of different learning algorithms \cite{pensia2018generalization,neu2021information,wang2021analyzing,li2022high}.

Stochastic gradient descent (SGD) has become the workhorse behind modern DNNs training. Despite its simplicity, SGD also enables high efficiency in complex and non-convex optimization problems \cite{bottou2018optimization}. This motivates extensive research into provable generalization bounds for deep learning algorithms. The first line of research employs the concept of uniform stability, beginning with \cite{hardt2016train} on investigating convergence in expectation and followed by enormous efforts exploiting similar ideas \cite{bassily2020stability,lei2021stability,yang2021simple,yang2021stability}. Another line of research connects the generalization of DNNs with information-theoretic analysis \cite{xu2017information}, also demonstrating great potential in analyzing noisy and iterative learning algorithms: \cite{pensia2018generalization} is the first to investigate the generalization ability of stochastic gradient Langevin dynamics (SGLD, a variant of SGD that injects Gaussian noise at each iteration), whose result is improved by following studies \cite{negrea2019information,wang2021analyzing}; \cite{neu2021information} then establishes information-theoretic bounds for SGD by introducing virtual noises through an auxiliary weight process, whose bounds are subsequently tightened in \cite{wang2021generalization}. Besides stability and information-theoretic views, researchers also provide PAC-Bayesian \cite{neyshabur2018pac,yang2019fast} and model compression \cite{arora2018stronger,zhou2018non} perspectives for generalization analysis.

Although current efforts on understanding and explaining the generalization of deep learning algorithms have yielded appealing results, these bounds are still restrictive due to their heavy reliance on strong assumptions or dimensionality of hypothesis spaces, making them easily become vacuous when applied to large-scale DNNs. For example, uniform stability-based generalization bounds usually assume Lipschitz continuity and smoothness of the empirical risk function \cite{hardt2016train,lei2021stability} or global optimum assumptions such as convexity and the Polyak-Lojasiewicz (PL) condition \cite{lei2021generalization,li2022high} to guarantee convergence, which is hard to meet in practice. On the contrary, information-theoretic generalization results do not rely on strong assumptions about the risk function, but the dimensionality of the hypothesis space, which often results in severely over-estimated upper bounds. As shown in Figure \ref{fig:tight}, there exists a $10^2$ to $10^3$ gap between the true generalization error and the SOTA information-theoretic generalization bound \cite{wang2021analyzing}. Furthermore, the intractability of high-dimensional information quantities possesses extra obstacles to further tightening these bounds, since it is impractical to evaluate their tightness compared to the actual value of intermediate information quantities used in the proof, neither numerically nor theoretically.

\begin{figure}[t]
    \centering
    \includegraphics[width=0.45\textwidth]{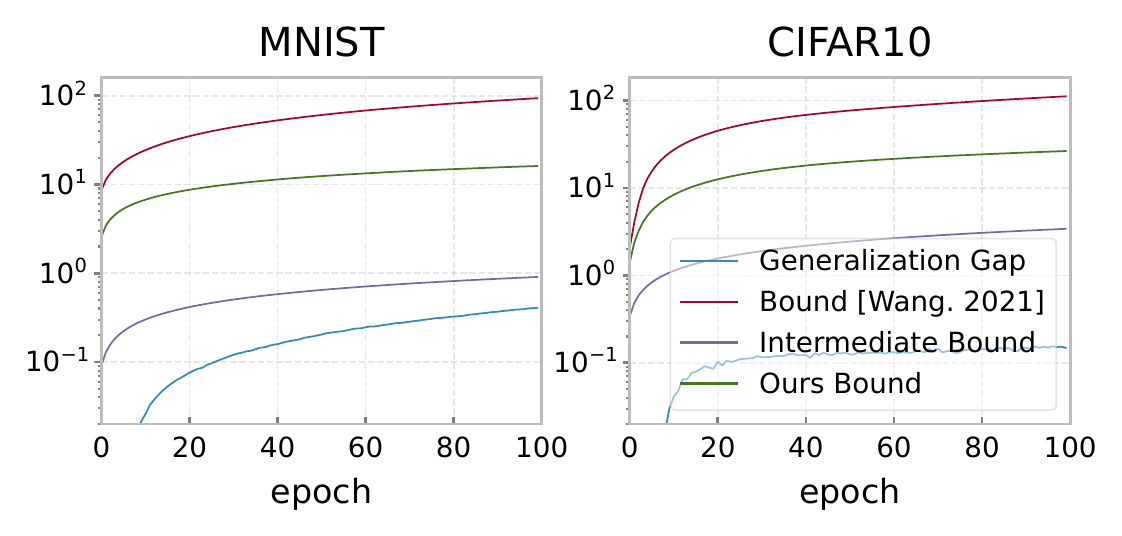}
    \caption{Generalization error of SGLD with MLP and CNN models. We provide more detailed analysis in Section \ref{sec:expr}.}
    \label{fig:tight}
\end{figure}

In this paper, we establish computable information-theoretic bounds for noisy and iterative learning algorithms by adopting an alternative information measure, namely kernelized R\'enyi's entropy. This new information quantity inherits the elegant properties of the original Shannon's definition, while being directly computable from given samples and independent of the input dimensionality. We then bound the expected generalization error of these learning algorithms with kernelized R\'enyi's entropy, where each key information quantity in the bound could be directly accessed and visualized by simple random sampling during the training process. Based on our visualization results, we improve previous information-theoretic bounds by strictly tightened ones. As an example, the above-mentioned work \cite{wang2021analyzing} upper bounds the key mutual information quantity in eq.(\ref{eq:smi_bound}) by the variance of the gradient, which is grossly over-estimated, being $10$ to $10^2$ times looser than the actual value (Intermediate Bound) as shown in Figure \ref{fig:tight}. This motivates us to reduce the gap by incorporating covariance between different dimensions of the gradient vector, which also applies to the work of \cite{pensia2018generalization,wang2021generalization}, showing significant improvement on multiple deep learning benchmarks. In summary, the key contributions of this work include:
\begin{itemize}
    \item We propose kernelized R\'enyi's entropy based on operator representation in Hilbert space. Unlike the classical Shannon's entropy, our information quantity is directly computable regardless of the dimensionality, while still being compatible with existing information-theoretic generalization frameworks.
    \item We establish mutual information generalization bounds for SGD and SGLD under the notion of kernelized R\'enyi's entropy and then visualize them on synthetic and real-world learning tasks. Our visualization results indicate multiple potential improvements in previous information-theoretic generalization results.
    \item We provide improved bounds based on one of our observations by considering correlations between different dimensions of the gradient vector. Empirical studies then demonstrate that our bounds are $5$ more times tighter compared to previous SOTA results.
\end{itemize}

\section{Preliminaries}
Given random variable $X$, we denote the corresponding sample space by $\mathcal{X}$, samples by lower-case letter $\x$, and probability distribution function (PDF) by $p_X$. We write $\norm{\cdot}$ to denote the Euclidean norm of a vector or the Frobenius norm of a matrix, and $I_d$ to denote the $d$-dimensional identity matrix.

\subsection{Problem Setting}
Let $\mathcal{Z}$ be the instance space of interest and $\mathcal{W}$ be the hypotheses space. Let $S = \{Z_i\}_{i=1}^n$ be a dataset of $n$ i.i.d. samples taking values in $\mathcal{Z}$ and $W \in \mathcal{W} \subset \mathbb{R}^d$  be the output of learning algorithm $\mathcal{A}$ according to some conditional distribution $P_{W|S}$ mapping from $\mathcal{Z}^n$ to $\mathcal{W}$. Let $\ell: \mathcal{W} \times \mathcal{Z} \rightarrow \mathbb{R}$ be a loss function. We aim to seek for a parameter $w \in \mathbb{R}^d$ that minimizes the population risk $L$, defined by
\begin{equation*}
    L(w) \triangleq \E_Z[\ell(w,Z)].
\end{equation*}
Since the data distribution is usually unknown, we turn to minimize the empirical risk
\begin{equation*}
    L_S(w) \triangleq \frac{1}{n} \sum_{i=1}^n \ell(w,Z_i).
\end{equation*}
For a learning algorithm $\mathcal{A}$ characterized by $P_{W|S}$, the corresponding generalization error is defined as the expected  difference between $L(w)$ and $L_S(w)$, i.e.  
\begin{equation*}
    \gen(W,S) \triangleq \E_{W,S} [L(W) - L_S(W)].
\end{equation*}
We assume throughout that $\ell(w,z)$ is differentiable almost everywhere with respect to $w$ for any $Z$, and $\ell(w,Z)$ is $R$-subgaussian for any $w \in \mathcal{W}$. Under these assumptions, \cite{xu2017information} shows that the generalization error of any learning algorithm $\mathcal{A}$ is bounded by
\begin{equation}
    \abs*{\gen(W,S)} \le \sqrt{\frac{2R^2 I(S; W)}{n}}, \label{eq:smi_bound}
\end{equation}
where $I(S; W)$ is the mutual information between the input dataset $S$ and the output parameter vector $W$. Due to the high-dimensional nature of modern DNNs, this quantity is generally uncomputable, possessing extra obstacles to derive tightened generalization bounds.

\subsection{R\'enyi's Entropy and Extensions}
Recall that the R\'enyi's $\alpha$-order entropy $H_\alpha(X)$ is defined on the PDF $p_X$ for a given continuous random variable $X$ in $\mathcal{X}$:
\begin{equation} \label{eq:renyi}
    H_{\alpha}(X) \triangleq \frac{1}{1 - \alpha} \log \int_{\mathcal{X}} p^\alpha(\x) \dif \x,
\end{equation}
where the limit case $\alpha \rightarrow 1$ recovers Shannon's entropy. Exactly calculating this information quantity requires knowledge about the underlying data distribution, which is usually unknown in practice. To alleviate this issue, \cite{giraldo2014measures} proposes a novel measure of entropy by utilizing the Hilbert space representation with finite data points. Specifically, it resembles quantum R\'enyi's entropy in terms of the eigenspectrum of a normalized Hermitian matrix constructed by projecting data points to a reproducing kernel Hilbert space (RKHS). In this paper, we follow this Hilbert space representation framework, with slight restrictions on the associated reproducing kernel:
\begin{assumption} \label{asmp:parzen}
    Let $\kappa(\x, \x^\prime) = \ang{\phi(\x), \phi(\x^\prime)}$ be a reproducing kernel, where $\phi$: $\mathcal{X} \mapsto \mathcal{H}$ is the corresponding feature mapping. Assume that $\kappa$ satisfies
    \begin{itemize}
        \item Normalized: $\kappa(\x, \x) = 1$ for any $\x \in \mathcal{X}$;
        \item Shift invariant: $\kappa(\x, \x^\prime) = f(\norm{\x - \x^\prime})$ for some function $f$: $\mathbb{R}^+ \mapsto \mathbb{R}^+$;
        \item $L_2$ integrable: $\forall \x \in \mathcal{X}$, $\int_\mathcal{X} \kappa^2(\x, \x^\prime) \dif \x^\prime < \infty$.
    \end{itemize}
\end{assumption}
Given random variable $X \in \mathcal{X}$, define linear operator $G_X$: $\mathcal{H} \mapsto \mathcal{H}$ as $G_X f \triangleq \E_X\brk{\phi(\x) \ang{\phi(\x), f}}$. One can verify that $\tr(G_X) = 1$ when the kernel $\kappa$ is normalized, so that the eigenvalues of $G_X$ constitute a probability distribution which is a natural density estimator for the distribution of $X$.

\section{Kernelized R\'enyi's Entropy: An Alternative Information Measure} \label{sec:kre}
In this section, we introduce kernelized R\'enyi's Entropy by extending the work of \cite{giraldo2014measures} from finite-sample cases to infinite-sample cases, enabling  direct analysis of entropy quantities based on the PDF, uninfluenced by the actual sampling process. Our definition inherits the elegant properties of the original Shannon's entropy by setting $\alpha \rightarrow 1$, while still being able to be directly accessed via simple random sampling.
\begin{proposition} \label{prop:analogue}
    Given linear operator $G_X$ defined as above on random variable $X \in \mathcal{X}$ with PDF $p_X$, we have
    \begin{multline*}
        \lim_{\alpha \rightarrow 1} \frac{1}{1-\alpha} \log \tr(G_X^\alpha) = -\tr(G_X \log G_X) \\
        = -\iint_{\mathcal{X}^2} p_X(\x) \log p_X(\x^\prime) \kappa^2(\x, \x^\prime) \dif \x \dif \x^\prime.
    \end{multline*}
\end{proposition}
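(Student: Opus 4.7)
The plan is to split the proposition into its two equalities, treating the first as a standard spectral-theoretic limit and the second as a more delicate manipulation of the integral representation of $G_X$.

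For the first equality, I would invoke the spectral theorem. Under Assumption~\ref{asmp:parzen}, a direct computation gives $\tr(G_X) = \int p_X(\x)\kappa(\x,\x)\,d\x = 1$, and $G_X$ is a positive, self-adjoint, trace-class operator on $\mathcal{H}$ (using $L_2$-integrability of $\kappa$). Hence $G_X = \sum_i \lambda_i\, e_i \otimes e_i$ with $\{\lambda_i\}$ a discrete probability distribution, and $\tr(G_X^\alpha) = \sum_i \lambda_i^\alpha$, $\tr(G_X \log G_X) = \sum_i \lambda_i \log \lambda_i$ by functional calculus. Applying L'H\^opital's rule to the indeterminate form $(1-\alpha)^{-1}\log\sum_i \lambda_i^\alpha$ at $\alpha=1$, with term-wise differentiation justified by dominated convergence, yields the stated limit $-\sum_i \lambda_i \log \lambda_i$.

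For the second equality, my approach is to reduce to a pointwise operator identity via trace cyclicity. Substituting $G_X = \int p_X(\x)\,\phi(\x)\otimes\phi(\x)\,d\x$ into $\tr(G_X A)$ and using $\tr\!\bigl(A(u\otimes v)\bigr)=\langle v, Au\rangle$ produces the auxiliary identity
\begin{equation*}
\tr(G_X A) \;=\; \int p_X(\x)\,\langle \phi(\x), A\,\phi(\x)\rangle\,d\x
\end{equation*}
for any bounded self-adjoint $A$. Taking $A = \log G_X$ reduces the claim to
\begin{equation*}
\langle \phi(\x), \log(G_X)\phi(\x)\rangle \;=\; \int \log p_X(\x')\,\kappa^2(\x,\x')\,d\x'.
\end{equation*}
Starting from the elementary action $G_X \phi(\x) = \int p_X(\x')\kappa(\x,\x')\phi(\x')\,d\x'$, I would iterate to obtain a closed form for $\langle \phi(\x), G_X^k \phi(\x)\rangle$, verify the analogue of the target identity for each monomial $t \mapsto t^k$, and then extend to the operator logarithm via polynomial/analytic approximation on the spectrum of $G_X$ (which is contained in $[0,1]$ by $\tr(G_X)=1$).

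The main obstacle is rigorously justifying the pointwise identity displayed above. The family $\{\phi(\x')\}_{\x' \in \mathcal{X}}$ behaves like a continuous ``spectral basis'' for $G_X$ with formal eigenvalue $p_X(\x')$ at each $\x'$, but it is not orthonormal, so functional calculus does not transfer directly from the integral representation to the operator logarithm. Collapsing the iterated kernel integrals $\int p_X(\x_1)\cdots p_X(\x_k)\kappa(\x,\x_1)\kappa(\x_1,\x_2)\cdots\kappa(\x_k,\x)\,d\x_1\cdots d\x_k$ into a single integral against $\log p_X(\x')\,\kappa^2(\x,\x')$ after summing a series for $\log$ is the technical heart of the argument, and I expect to handle it by a careful convergence/limiting argument exploiting the normalization $\kappa(\x,\x)=1$ and the $L_2$-integrability of $\kappa$ to dominate the cross-terms arising from the overlaps $\kappa(\x_i,\x_j)$.
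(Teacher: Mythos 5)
Your treatment of the first equality (spectral theorem, $\tr(G_X)=1$, L'H\^opital with term-wise differentiation) is fine and is exactly how the paper gets $\lim_{\alpha\to 1}\frac{1}{1-\alpha}\log\tr(G_X^\alpha)=-\tr(G_X\log G_X)$. The reduction $\tr(G_X A)=\int p_X(\x)\ang{\phi(\x),A\phi(\x)}\dif\x$ also matches the paper's basis-expansion computation. The problem is the step you correctly identify as the technical heart, and your plan for it would fail. The monomial analogue you intend to verify is false already at $k=2$: $\ang{\phi(\x),G_X^2\phi(\x)}=\iint p_X(\x_1)p_X(\x_2)\,\kappa(\x,\x_1)\kappa(\x_1,\x_2)\kappa(\x_2,\x)\dif\x_1\dif\x_2$, which does not equal $\int p_X^2(\x^\prime)\kappa^2(\x,\x^\prime)\dif\x^\prime$ in general, precisely because $\{\phi(\x^\prime)\}$ is not an orthogonal family and the overlap factors $\kappa(\x_1,\x_2)$ do not collapse; no amount of dominated-convergence bookkeeping removes them. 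Consequently the pointwise identity $\ang{\phi(\x),\log(G_X)\phi(\x)}=\int\log p_X(\x^\prime)\kappa^2(\x,\x^\prime)\dif\x^\prime$, with $\log G_X$ meaning the genuine operator logarithm from functional calculus, is not something you can reach by summing a power series, and it is not true in general (it would require $\{(\phi(\x^\prime),p_X(\x^\prime))\}$ to be an actual eigensystem of $G_X$, which only happens in degenerate cases such as the Dirac-delta kernel).

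It is worth knowing that the paper does not close this gap either: its proof performs the formal substitution $\log G_X\,\phi(\x)=\int_\mathcal{X}\log p_X(\x^\prime)\,\phi(\x^\prime)\ang{\phi(\x^\prime),\phi(\x)}\dif\x^\prime$ without justification, i.e., it treats $\log G_X$ as the operator built from the scalar kernel $\log p_X$ in the same integral form that defines $G_X$ from $p_X$. In effect the second equality of the proposition is a definitional analogue (this is how $S_1$ in Definition~\ref{def:ke} should be read) rather than a theorem about the spectral quantity $-\sum_i\lambda_i\log\lambda_i$. So to "prove" the statement as written you must either adopt the same convention for the meaning of $\log G_X$ applied to $\phi(\x)$, or state explicitly that the double integral defines the kernelized entropy and differs in general from the von Neumann entropy of $G_X$; pursuing your polynomial-approximation route as a rigorous bridge between the two will not succeed.
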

Proposition \ref{prop:analogue} directly implies the following definition of kernelized R\'enyi's entropy of order $\alpha \rightarrow 1$:
\begin{definition} \label{def:ke}
    Given continuous random variable $X$ and its PDF $p_X$, the kernelized R\'enyi's entropy for $X$ of order $\alpha \rightarrow 1$ is defined as
    \begin{gather*}
        S_1(X) \triangleq -C_\kappa \iint_{\mathcal{X}^2} p_X(\x) \log p_X(\x^\prime) \kappa^2(\x, \x^\prime) \dif \x \dif \x^\prime.
    \end{gather*}
    where $C_\kappa = 1/\int_\mathcal{X} \kappa^2(0, \x) \dif \x > 0$ is the normalizing factor that let the squared kernel function integrate to $1$.
\end{definition}
Compared with the classical Shannon's definition which is intractable for high-dimensional distributions, Definition \ref{def:ke} could be directly accessed regardless of the dimensionality. To this end, one can randomly sample $m$ data points $\{\x_i\}_{i=1}^m$ from $p_X$, and denote $\hat{G}_X$: $\mathcal{H} \mapsto \mathcal{H}$ as an empirical version of  $G_X$ by $\hat{G}_X f \triangleq \frac{1}{m}\sum_{i=1}^m \phi(\x_i)\ang{\phi(\x_i),f}$. It can be verified that $\hat{G}_X$ is an unbiased estimate of $G_X$, which further implies the following finite-sample approximation to kernelized R\'enyi's entropy:
\begin{proposition} \label{prop:concentrate}
    Let $\{\x_i\}_{i=1}^m$ be i.i.d. data points sampled from $X$, and let $K \in \mathbb{R}^{m \times m}$ be the kernel matrix constructed by $K_{ij} = \frac{1}{m}\kappa(\x_i,\x_j)$. Then with confidence $1 - \delta$,
    \begin{equation}
        \abs{S_1(X) - \hat{S}_1(X)} \le \frac{9 C_\kappa \sqrt{2\log\frac{2}{\delta}}}{\sqrt[3]{m}}, \label{eq:concentrate}
    \end{equation}
    where $\hat{S}_1(X) = -C_\kappa \tr(K \log K)$.
\end{proposition}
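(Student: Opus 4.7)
Since the nonzero eigenvalues of $\hat G_X = \frac{1}{m}\sum_{i=1}^m\phi(\x_i)\otimes\phi(\x_i)$ coincide with those of the kernel matrix $K$, we have $\tr(K\log K) = \tr(\hat G_X\log\hat G_X)$, so the proposition reduces to bounding $|\tr(G_X\log G_X) - \tr(\hat G_X\log\hat G_X)|$. My plan is to first obtain an operator-level concentration of $\hat G_X$ around $G_X$ and then translate it into concentration of the scalar entropic functional.

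For the operator step, the normalization in Assumption~\ref{asmp:parzen} gives $\|\phi(\x_i)\otimes\phi(\x_i)\|_{HS} = \kappa(\x_i,\x_i) = 1$, so $\hat G_X$ is an average of i.i.d.\ bounded positive self-adjoint operators. A Hilbert-space Hoeffding (or Bernstein) inequality then yields $\|\hat G_X - G_X\|_{HS} = O(\sqrt{\log(1/\delta)/m})$ with probability at least $1-\delta$, and by Hoffman-Wielandt the sorted spectra $\{\lambda_i\}$ of $G_X$ and $\{\hat\lambda_i\}$ of $\hat G_X$ are then $\ell^2$-close at the same rate.

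For the entropic step, I need to control $|\sum_i g(\lambda_i) - \sum_i g(\hat\lambda_i)|$ for $g(x) = -x\log x$. Since $g$ has a logarithmic singularity in its derivative at $0$, a global Lipschitz argument fails, so I would either (i) truncate the spectrum at a threshold $\tau$, treating the bulk via the local Lipschitz constant $O(\log(1/\tau))$ of $g$ on $[\tau,1]$ (with at most $1/\tau$ bulk eigenvalues from $\tr G_X = 1$) and the tail via a Fannes-style concavity estimate using $\sum_i\lambda_i = 1$; or (ii) approximate $g$ uniformly by a polynomial of degree $N$, estimate each moment $\tr(G_X^k) = \E\bigl[\prod_{j=1}^k\kappa(\x_j,\x_{j+1})\bigr]$ via its V-statistic analogue $\tr(\hat G_X^k)$ at rate $O(\sqrt{\log(N/\delta)/m})$, and union-bound over $k\le N$. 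In either route, the free parameter ($\tau$ or $N$) is tuned as a function of $m$ to balance approximation error against statistical error; a calculation shows this balance occurs at level $O(m^{-1/3})$, and tracking absolute constants through the Hilbert-space concentration gives the stated prefactor $9\sqrt{2}C_\kappa$.

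The main obstacle is exactly the non-Lipschitz behavior of $g$ at $0$: without it, the parametric $m^{-1/2}$ operator concentration would translate directly into a matching entropic bound, but the log-singularity forces the slower $m^{-1/3}$ rate through the truncation/approximation trade-off. A secondary subtlety is that $G_X$ may have infinitely many nonzero eigenvalues accumulating at $0$ in an infinite-dimensional RKHS, so all tail arguments must rely on the global trace constraint $\tr G_X = 1$ rather than on a finite rank bound.
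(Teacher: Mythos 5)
Your reduction to bounding $\abs{\tr(G_X\log G_X)-\tr(\hat G_X\log \hat G_X)}$ and your overall architecture (spectral concentration at rate $m^{-1/2}$ followed by a continuity argument for $t\mapsto -t\log t$) do match the paper, but the two steps you propose do not close, and your account of where the rate $m^{-1/3}$ and the prefactor come from is not how they actually arise. The spectral input you plan to use is too weak: Hilbert-space Hoeffding plus Hoffman--Wielandt only gives $\ell^2$ closeness of the sorted spectra at scale $m^{-1/2}$, and with $m$ eigenvalue pairs this is compatible with $\sum_i\abs{\lambda_i-\mu_i}$ of order $1$ (e.g.\ $m$ differences each equal to $1/m$), in which case $\sum_i\eta(\abs{\lambda_i-\mu_i})$ with $\eta(t)=-t\log t$ is of order $\log m$ and no vanishing entropy bound follows. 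The paper instead imports, from the proof of Theorem 6.2 of \cite{giraldo2014measures} with $\varphi(x)=\abs{x}$, the \emph{$\ell^1$} bound $\sum_i\abs{\lambda_i-\mu_i}\le\sqrt{2\log(2/\delta)/m}$; given that, no truncation or polynomial approximation is needed at all: the scalar estimate $\abs{\lambda\log\lambda-\mu\log\mu}\le\max\{\eta(\abs{\lambda-\mu}),\eta(1-\abs{\lambda-\mu})\}$ together with concavity of $\eta$ (worst case: all $m$ differences equal to $\frac1m\sqrt{2\log(2/\delta)/m}$) already gives $\sqrt{2\log(2/\delta)/m}\,\log\sqrt{m^3/(2\log(2/\delta))}$, i.e.\ an $m^{-1/2}\log m$ bound.

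Beyond this, your two concrete routes have gaps. In the truncation route, the tail cannot be handled ``using $\sum_i\lambda_i=1$'': the tail contribution $\sum_{\lambda_i<\tau}\lambda_i\log(1/\lambda_i)$ is of order $\log(1/\tau)$ times the tail mass, and the tail mass of a trace-one $G_X$ need not be small (all of the trace can sit below $\tau$ when eigenvalues decay slowly), so without extra spectral-decay assumptions no choice of $\tau$ balances to $m^{-1/3}$; moreover with only $\ell^2$ spectral control the bulk term picks up an extra $\sqrt{1/\tau}$ from Cauchy--Schwarz, which already destroys the claimed rate. In the polynomial route, the uniform approximation error of $x\log x$ is multiplied by the number of nonzero eigenvalues ($m$ for $\hat G_X$, possibly infinite for $G_X$), and the coefficients of a degree-$N$ approximant grow with $N$, so the moment-wise union bound does not deliver the claimed estimate. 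Finally, in the paper the exponent $1/3$ and the constant $9$ are not the outcome of any bias--variance balance: they come from deliberately weakening the $m^{-1/2}\log m$ bound through the elementary inequality $\log y\le s\,y^{1/s}$ with the choice $s=9$, so even a repaired version of your plan would produce a different (and generically worse) constant than the one stated in the proposition rather than ``$9\sqrt{2}C_\kappa$ by tracking absolute constants.''
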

Note that the above concentration result only involves the number of samples while remaining independent of the dimension, which allows our kernelized entropy to be directly accessed in high-dimensional cases. This property is a significant benefit in analyzing the behavior of modern DNNs, which usually involve thousands or even millions of parameters. One can also notice that Definition \ref{def:ke} can be easily extended to multivariate joint entropy by taking $\kappa = \kappa_X \otimes \kappa_Y$ as the kernel function for the joint distribution $P_{X,Y}$. With these settings, kernelized R\'enyi's divergence and mutual information can be derived accordingly:
\begin{definition} \label{def:kd}
    Given probability measures $P$, $Q$ on $\mathcal{X}$ and their PDF $p$, $q$, the kernelized R\'enyi's divergence between $P$ and $Q$ is defined as:
    \begin{gather*}
        D_1(P \parallel Q) \triangleq C_\kappa \iint_{\mathcal{X}^2} p(\x) \log \frac{p(\x^\prime)}{q(\x^\prime)} \kappa^2(\x, \x^\prime) \dif \x \dif \x^\prime.
    \end{gather*}
\end{definition}
\begin{definition} \label{def:kmi}
    Given normalized kernels $\kappa_X$, $\kappa_Y$, continuous random variables $X$, $Y$ and their PDF $p_X$, $p_Y$, the kernelized R\'enyi's mutual information between $X$ and $Y$ is defined as:
    \begin{multline*}
        I_1(X; Y) \triangleq C_{\kappa_X} C_{\kappa_Y} \iiiint_{\mathcal{Y}^2 \times \mathcal{X}^2} p_{X,Y}(\x, \y) \cdot \\
        \log \frac{p_{X,Y}(\x^\prime, \y^\prime)}{p_X(\x^\prime)p_Y(\y^\prime)} \kappa_X^2(\x, \x^\prime) \kappa_Y^2(\y, \y^\prime) \dif \x \dif \x^\prime \dif \y \dif \y^\prime.
    \end{multline*}
\end{definition}
The main difference between Shannon's entropy and kernelized R\'enyi's entropy lies in the kernel function $\kappa$. Specifically, our definition recovers the original Shannon's entropy when $\kappa$ is the Dirac-Delta function. To characterize the difference between them, we introduce the discrepancy function
\begin{equation*}
    u_X^\kappa(\x) \triangleq C_\kappa \int_\mathcal{X} \brk*{\log p_X(\x) - \log p_X(\x^\prime)} \kappa^2(\x, \x^\prime) \dif \x^\prime,
\end{equation*}
and its expected version
\begin{equation*}
    E_X^\kappa(p) \triangleq \abs*{ \int_\mathcal{X} p(\x) u_X(\x) \dif \x }.
\end{equation*}
We simply denote the \textbf{expected discrepancy}  $E_X^\kappa(p_X)$ by ${E_X^\kappa}^\prime$ and $E_X^\kappa(\hat{p}_X)$ by $E_X^\kappa$ for convenience, where $\hat{p}_X(\x) \triangleq C_\kappa \int_\mathcal{X} p_X(\x^\prime) \kappa^2(\x, \x^\prime) \dif \x^\prime$.
\begin{proposition} \label{prop:finite}
    Let $\kappa(\x, \x^\prime) = \mathbbm{1}_{\norm{\x - \x^\prime} < c}$. Assume that the PDF $p_X(\cdot)$ satisfies:
    \begin{itemize}
        \item Continuous: $\forall \x \in \mathcal{X}$, $\lim_{\x^\prime \rightarrow \x} p_X(\x^\prime) = p_X(\x)$;
        \item Positive: $\forall \x \in \mathcal{X}$, $\lim_{\x^\prime \rightarrow \x} p_X(\x^\prime) > 0$;
    \end{itemize}
    then we have $\lim_{c \rightarrow 0} E_X^\kappa \rightarrow 0$ and $\lim_{c \rightarrow 0} {E_X^\kappa}^\prime \rightarrow 0$.
\end{proposition}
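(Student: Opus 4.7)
The plan is to exploit the very concrete form of the indicator kernel, for which $\kappa^2=\kappa$ and $C_\kappa = 1/(V_d c^d)$ with $V_d$ the volume of the unit Euclidean ball. Under this choice, $\hat{p}_X(\x)$ reduces to the uniform average of $p_X$ over the ball $B(\x,c)$, and $u_X^\kappa(\x)$ is simply $\log p_X(\x)$ minus the uniform average of $\log p_X$ on the same ball. Because $p_X$ is continuous and strictly positive, $\log p_X$ is continuous as well, so $u_X^\kappa(\x)\to 0$ pointwise by the Lebesgue differentiation theorem, and likewise $\hat{p}_X(\x)\to p_X(\x)$ pointwise.

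For the first claim, ${E_X^\kappa}^\prime = \lvert\int p_X(\x) u_X^\kappa(\x)\,d\x\rvert$, the key trick is a symmetrization based on the fact that $\kappa(\x,\x')$ is symmetric in its arguments. Swapping $\x\leftrightarrow\x'$ under the double integral and averaging the two expressions yields
\begin{equation*}
\int p_X(\x)\, u_X^\kappa(\x)\,d\x \;=\; \frac{C_\kappa}{2}\iint_{\mathcal{X}^2} \bigl(p_X(\x)-p_X(\x')\bigr)\bigl(\log p_X(\x)-\log p_X(\x')\bigr)\kappa(\x,\x')\,d\x\,d\x'.
\end{equation*}
The integrand is pointwise non-negative by monotonicity of $\log$, so the outer absolute value becomes redundant. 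Continuity of $p_X$ and $\log p_X$ then forces the product $(p_X(\x)-p_X(\x'))(\log p_X(\x)-\log p_X(\x'))$ to shrink uniformly on $B(\x,c)$; combining this with the normalization $C_\kappa V_d c^d =1$ and dominated convergence gives ${E_X^\kappa}^\prime\to 0$.

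For the second claim, I would either bound $E_X^\kappa$ directly or reduce it to ${E_X^\kappa}^\prime$. The direct route: the integrand $\hat{p}_X(\x)u_X^\kappa(\x)$ tends to $p_X(\x)\cdot 0 = 0$ pointwise, and can be dominated by $\hat{p}_X(\x)$ times a local-oscillation bound of $\log p_X$ on $B(\x,c)$. The reduction route: writing $\hat{p}_X = p_X + (\hat{p}_X-p_X)$ splits $E_X^\kappa \le {E_X^\kappa}^\prime + \lvert\int(\hat{p}_X-p_X)u_X^\kappa\,d\x\rvert$, and the second term is controlled by $\|\hat{p}_X-p_X\|_{L^1}$ times $\|u_X^\kappa\|_\infty$ locally, both of which vanish as $c\to 0$.

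The main obstacle will be justifying the exchange of limit and integration when $\mathcal{X}$ is not assumed compact: pointwise convergence of the integrand to zero is straightforward from continuity, but one needs an integrable envelope or uniform tail control on the oscillation of $\log p_X$. Under the stated assumptions this can be handled by local uniform continuity (e.g.\ restricting to a large compact set capturing all but $\epsilon$ of the mass of $p_X$, controlling the complement by the finite $L^1$ norm of $p_X$ and $\hat p_X$, and then letting $c\to 0$ on the compact part). This two-scale argument is the only non-routine ingredient; everything else is bookkeeping.
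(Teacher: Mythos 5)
Your argument is sound and reaches the same analytic core as the paper's, but by a genuinely different route. The paper proves the statement in one stroke for an arbitrary outer density $p$ (so that $p=p_X$ and $p=\hat p_X$ are covered simultaneously): it uses shift invariance to rewrite $E_X^\kappa(p)$ as $C_\kappa\iint p(\x)\,[\log p_X(\x)-\log p_X(\x+\x')]\,\kappa^2(0,\x')\,\mathrm{d}\x\,\mathrm{d}\x'$, pushes the limit $c\to 0$ inside the integral, and invokes continuity of $\log p_X$ (positivity is what makes $\log p_X$ continuous and finite) to conclude the inner integral vanishes. You instead symmetrize the ${E_X^\kappa}'$ integral using the symmetry of $\kappa$, obtaining $\frac{C_\kappa}{2}\iint (p_X(\x)-p_X(\x'))(\log p_X(\x)-\log p_X(\x'))\kappa(\x,\x')\,\mathrm{d}\x\,\mathrm{d}\x'$ with a pointwise nonnegative integrand, and handle $E_X^\kappa$ separately (directly, or by reduction to ${E_X^\kappa}'$ plus an $L^1$ error term); the symmetrization is correct and buys you sign information that makes the absolute value harmless, whereas the paper's change-of-variables argument is shorter and avoids treating the two discrepancies differently. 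One caution: your two-scale tail argument, as sketched, controls the complement of the compact set only by the residual mass of $p_X$ (and $\hat p_X$), which suffices only if the oscillation of $\log p_X$ over balls of radius $c$ is uniformly bounded there; in general (e.g. rapidly decaying tails) that factor is unbounded, so you need it to admit an integrable envelope against $p_X$. This is exactly the interchange-of-limit-and-integral step that the paper's own proof asserts without justification (note also that $C_\kappa\sim c^{-d}$ depends on $c$ and cannot simply be pulled outside the limit), so your proposal is no less rigorous than the published argument---indeed it is more explicit about where the missing domination hypothesis enters---but neither fully closes that step under only the stated continuity and positivity assumptions.
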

Proposition \ref{prop:finite} indicates that when the normalized kernel function $C_\kappa \cdot \kappa^2$ has a very peaked bump, i.e. $c$ is small, the expected discrepancy terms $E_X^\kappa$ and ${E_X^\kappa}^\prime$ both tend to $0$. As we will show in Proposition \ref{prop:property}, setting $c \rightarrow 0$ corresponds to the case where kernelized R\'enyi's entropy recovers the original Shannon's definition. The continuity assumption above is easily satisfied when $X$ is a continuous random variable. The positiveness assumption is also naturally satisfied when $X$ is truncated between some interval $[a, b]$ so that $p_X(x) > 0$ for any $x \in [a, b]$ (e.g. randomly sampled image data are always truncated by $[0, 255]$), or the distribution of $X$ is tailed so that $p_X(\x) > 0$ for any finite $\x \in \mathcal{X}$ (e.g. the Gaussian distribution is widely used for model parameter initialization), which are common cases in modern DNNs.

\begin{proposition} \label{prop:property}
   Let $X, X^\prime \in \mathcal{X}$, $Y \in \mathcal{Y}$, $Z \in \mathcal{Z}$ be continuous random variables with probability measures $P_X$, $P_{X^\prime}$, $P_Y$ and $P_Z$ respectively. Then
    \begin{enumerate}
        \item \label{prop:ke_bound} $H(X) \le S_1(X) \le H(X) + {E_X^\kappa}^\prime$.
        \item \label{prop:kl_positive} $D_1(P_X \parallel P_{X^\prime}) \ge -E_X^\kappa$.
        \item \label{prop:mi_kl} $I_1(X; Y) = D_1(P_{X,Y} \parallel P_X \otimes P_Y) \ge 0$.
        \item \label{prop:mi_ke} $I_1(X; Y) = S_1(X) + S_1(Y) - S_1(X, Y)$.
        \item \label{prop:cond_mi} $I_1(X;Y|Z) = I_1(X;Y,Z) - I_1(X;Z)$.
        \item \label{prop:data_proc} Let $X, Y, Z$ form Markov chain $X \rightarrow Y \rightarrow Z$, then $I_1(X;Y) \ge I_1(X;Z)$ and $I_1(Y;Z) \ge I_1(X;Z)$.
    \end{enumerate}
\end{proposition}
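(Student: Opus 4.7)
My plan is to group the six items by proof technique. Items (1)--(2) follow from Jensen's inequality applied to the discrepancy $u_X$; items (3)--(5) are algebraic identities exploiting the product structure of $\kappa_X \otimes \kappa_Y$; and item (6) is reduced via the chain rule (5) to showing that the Markov factorization forces one conditional mutual information term to vanish.

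For (1), note that $q_{\x}(\x') \triangleq C_\kappa \kappa^2(\x, \x')$ is a probability density in $\x'$ by Assumption \ref{asmp:parzen}, so $u_X(\x) = \log p_X(\x) - \E_{q_{\x}}[\log p_X(\x')]$. Concavity of $\log$ gives $u_X(\x) \ge \log(p_X(\x)/\hat{p}_X(\x))$, and integrating against $p_X$ yields $S_1(X) - H(X) = \int p_X u_X \, \dif\x \ge \KL(p_X \| \hat{p}_X) \ge 0$. Since this integral is non-negative, it equals ${E_X^\kappa}^\prime$ by definition, establishing both inequalities. For (2), Fubini and the symmetry of $\kappa^2$ rewrite $D_1(P_X \| P_{X^\prime})$ as $\int \hat{p}_X \log(p_X/p_{X^\prime}) \, \dif\x$, which decomposes as $\KL(\hat{p}_X \| p_{X^\prime}) - \KL(\hat{p}_X \| p_X) \ge -\KL(\hat{p}_X \| p_X)$; a second Jensen bound comparing $\int \hat{p}_X u_X$ with $-\KL(\hat{p}_X \| p_X)$ is then used to control this residual by $E_X^\kappa$.

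For (3)--(5), I exploit the factorization $C_{\kappa_X \otimes \kappa_Y} = C_{\kappa_X} C_{\kappa_Y}$: the integrand of $I_1(X;Y)$ matches that of $D_1(P_{X,Y} \| P_X \otimes P_Y)$ verbatim, yielding (3). Splitting the log-ratio into three pieces and using $C_{\kappa_Y} \int \kappa_Y^2(\y, \y') \, \dif\y' = 1$ to marginalize the redundant coordinates collapses the joint integral into $S_1(X) + S_1(Y) - S_1(X,Y)$, proving (4); (5) is then pure algebra under the natural definition $I_1(X;Y|Z) \triangleq S_1(X,Z) + S_1(Y,Z) - S_1(X,Y,Z) - S_1(Z)$. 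Non-negativity of $I_1$ in (3) follows from the operator form $S_1(X) = -\tr(G_X \log G_X)$ together with Klein's inequality, which delivers subadditivity $S_1(X,Y) \le S_1(X) + S_1(Y)$.

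The main obstacle is (6). Applying (5) twice gives $I_1(X;Y) - I_1(X;Z) = I_1(X;Y|Z) - I_1(X;Z|Y)$. For Markov $X \to Y \to Z$, substituting $\log p_{X,Y,Z}(\x', \y', \z') = \log p_{X,Y}(\x', \y') + \log p_{Y,Z}(\y', \z') - \log p_Y(\y')$ into the integral definition of $S_1(X,Y,Z)$ and successively collapsing redundant kernel integrals via $C_\kappa \int \kappa^2 = 1$ should yield the exact identity $S_1(X,Y,Z) = S_1(X,Y) + S_1(Y,Z) - S_1(Y)$, and hence $I_1(X;Z|Y) = 0$. Combined with strong subadditivity (Lieb--Ruskai in the operator representation) giving $I_1(X;Y|Z) \ge 0$, this delivers $I_1(X;Y) \ge I_1(X;Z)$; the companion bound $I_1(Y;Z) \ge I_1(X;Z)$ follows by the symmetric argument in the reversed chain. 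The most delicate technical step is lifting Lieb--Ruskai from finite-dimensional quantum systems to the trace-class operators $G_\bullet$ on the infinite-dimensional RKHS, which I plan to handle through finite-rank spectral approximation using the $L_2$-integrability of $\kappa$.
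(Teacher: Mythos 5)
For items (1), (4) and (5) your plan coincides with the paper's own proofs: the Jensen step giving $S_1(X)-H(X)=\int p_X u_X^\kappa \,\dif\x \ge \KL(p_X\,\|\,\hat p_X)\ge 0$ (your added remark that this non-negative integral \emph{is} ${E_X^\kappa}^\prime$ is a nice strengthening to an equality), and the marginalization trick $C_{\kappa_Y}\int\kappa_Y^2(\y,\y^\prime)\dif\y^\prime=1$ for (4)--(5), are exactly what the paper does. For (2), your rewriting $D_1(P_X\|P_{X^\prime})=\int\hat p_X\log(p_X/p_{X^\prime})\,\dif\x=\KL(\hat p_X\|p_{X^\prime})-\KL(\hat p_X\|p_X)$ is a cleaner route to the same reduction the paper reaches via a Lagrange-multiplier variational argument (whose minimizer is precisely $\hat q=\hat p_X$). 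Be aware, though, that the deferred ``second Jensen bound'' is the delicate point: the natural convexity bound $\int\hat p_X\log\hat p_X\le\int p_X\log p_X$ controls $\KL(\hat p_X\|p_X)$ by $\int p_X u_X^\kappa={E_X^\kappa}^\prime$, i.e.\ by the \emph{primed} discrepancy, whereas the statement requires $E_X^\kappa=\abs{\int\hat p_X u_X^\kappa\,\dif\x}$; the paper's corresponding step at (\ref{eq:kl_positive1}) is equally terse, so you would need to spell this out rather than cite Jensen in passing.

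The genuine gap sits in (3) and (6), exactly where you depart from the paper. You invoke Klein's inequality to get ``subadditivity $S_1(X,Y)\le S_1(X)+S_1(Y)$'' and Lieb--Ruskai to get $I_1(X;Y|Z)\ge 0$, via the operator form $-\tr(G\log G)$. But $S_1$ in Definition \ref{def:ke} is not the bare operator entropy: it carries the normalizing constant, and that constant differs across spaces ($C_{\kappa_X}$ for $S_1(X)$, $C_{\kappa_X}C_{\kappa_Y}$ for $S_1(X,Y)$, $C_{\kappa_X}C_{\kappa_Y}C_{\kappa_Z}$ for the triple). Writing $S_1(X)=C_{\kappa_X}\tilde S(G_X)$, $S_1(X,Y)=C_{\kappa_X}C_{\kappa_Y}\tilde S(G_{XY})$, and so on (granting the Proposition \ref{prop:analogue}-type identification, which itself would need to be extended to joint and relative quantities), Klein and strong subadditivity yield inequalities among the unnormalized $\tilde S$'s, e.g.\ $\tilde S(G_{XZ})+\tilde S(G_{YZ})\ge\tilde S(G_{XYZ})+\tilde S(G_Z)$; the combinations required by (3) and (6), however, weight each $\tilde S$ by a \emph{different} product of constants, and the inequality does not survive this reweighting (already with $C_{\kappa_X}=C_{\kappa_Y}=2$ and all $\tilde S$'s of comparable size the reweighted version of strong subadditivity fails to follow). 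So ``Klein delivers subadditivity of $S_1$'' and ``Lieb--Ruskai delivers $I_1(X;Y|Z)\ge 0$'' are non sequiturs as stated, independently of the infinite-dimensional approximation issue you flag. The paper sidesteps this by importing non-negativity of $I_1$ from the finite-sample matrix result of Giraldo et al.\ with a limiting argument, and, in (6), by simply asserting positiveness of $I_1(X;Y|Z)$ (a hole you correctly identify as the crux); your reduction $I_1(X;Z|Y)=0$ via $S_1(X,Y,Z)=S_1(X,Y)+S_1(Y,Z)-S_1(Y)$ is equivalent to the paper's conditional-density factorization and is fine, but the positivity input you propose for the remaining term does not go through without repairing the constant bookkeeping (e.g.\ by proving the inequality directly for the $C_\kappa$-weighted integral quantities rather than transferring it from the operator picture).
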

Proposition \ref{prop:property} shows that kernelized R\'enyi's entropy inherits the essential properties of the original Shannon's entropy, thus guaranteeing compatibility with existing information theoretical analysis frameworks. Property \ref{prop:ke_bound} verifies that when $c \rightarrow 0$ in Proposition \ref{prop:finite}, kernelized R\'enyi's entropy recovers the original Shannon's entropy. Combining with the following properties, this conclusion also applies to divergence and mutual information quantities. Property \ref{prop:kl_positive} indicates that although kernelized R\'enyi's divergence is not guaranteed to be positive, it cannot be significantly less than $0$ when the kernel $\kappa$ is chosen properly. Property \ref{prop:mi_ke} and \ref{prop:cond_mi} imply that an estimate of the mutual information quantity could be acquired by estimating the value of multiple individual entropy quantities. Property \ref{prop:data_proc} is the kernelized R\'enyi's entropy version of the data processing inequality.

In the sequel, we will use Gaussian kernel in kernelized R\'enyi's information quantities to bound the expected generalization error, i.e.
\begin{equation*}
    \kappa(\x, \x^\prime) = \exp\prn{-\norm{\x - \x^\prime}_2^2/2\sigma_\kappa^2},
\end{equation*}
where $\sigma_\kappa$ is the kernel width. Note that there is a trade-off for the choice of $\sigma_\kappa$: A small $\sigma_\kappa$ implies $E_X^\kappa \approx 0$ and reduces to Shannon's entropy. However, this will cause a large normalization factor $C_\kappa$ and result in a large estimation error as shown in Proposition \ref{prop:concentrate}. In practice, we usually select $\sigma_\kappa$ according to the top 10\% to 20\% Euclidean distances between all pairwise data points as suggested by \cite{yu2019multivariate}.

\section{Generalization Bounds with Kernelized R\'enyi's Entropy} \label{sec:gen}
This section presents information-theoretic generalization bounds for iterative and noisy learning algorithms under kernelized R\'enyi's entropy. Firstly, we show that the mutual information bound for expected generalization error in eq.(\ref{eq:smi_bound}) also holds for our kernelized one:
\begin{theorem} \label{thm:gen_mi}
    Suppose that $\ell(w, Z)$ is $R$-subgaussian with respect to $Z$ for every $w \in \mathcal{W}$, then
    \begin{align*}
        \abs{\E_{S,W}[L(W) - L_S(W)]} &\le \sqrt{\frac{2R^2 \hat{I}_1(S; W)}{n}}, \quad \textrm{and} \\
        \E_{S,W}[L(W) - L_S(W)]^2 &\le \frac{4R^2(\hat{I}_1(S; W) + \log 3)}{n},
    \end{align*}
    where $\hat{I}_1(S; W) = I_1(S; W) + E_{S,W}^\kappa$.
\end{theorem}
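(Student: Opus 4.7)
The plan is to mimic the Donsker--Varadhan strategy of Xu and Raginsky for Shannon's mutual information, and then lift the resulting bound to kernelized R\'enyi's mutual information using Proposition \ref{prop:property}.

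I would first observe that, because $\ell(w, Z)$ is $R$-subgaussian in $Z$ for every $w \in \mathcal{W}$, the centered empirical gap $f(w, S) := L_S(w) - L(w) = \frac{1}{n}\sum_{i=1}^n (\ell(w, Z_i) - \E_Z \ell(w, Z))$ is $(R/\sqrt{n})$-subgaussian in $S$ with zero mean, giving the log-moment generating bound $\log \E_{P_W \otimes P_S}\bigl[e^{\lambda f(W, S)}\bigr] \le \lambda^2 R^2/(2n)$ for all $\lambda \in \mathbb{R}$. Applying the Donsker--Varadhan variational representation of the KL divergence between $P_{W,S}$ and $P_W \otimes P_S$ to the test function $\lambda f$, and optimizing over $\pm \lambda$, recovers the classical first-moment bound $|\E[L(W) - L_S(W)]| \le \sqrt{2R^2 I(S;W)/n}$. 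The squared-expectation statement with the additive $\log 3$ term comes from re-running the argument with $\lambda f^2$ in place of $\lambda f$ and invoking the squared-subgaussian moment generating bound $\E_{P_S}\bigl[e^{\lambda f(w, S)^2}\bigr] \le (1 - 2R^2\lambda/n)^{-1/2}$, choosing $\lambda$ so that the right-hand side equals $\sqrt{3}$ and then rearranging.

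The next step is to upgrade both inequalities from $I$ to $\hat{I}_1$ by proving $I(S; W) \le \hat{I}_1(S; W)$. Using Property \ref{prop:mi_ke} to decompose $I_1(S; W) = S_1(S) + S_1(W) - S_1(S, W)$ together with its Shannon counterpart $I(S; W) = H(S) + H(W) - H(S, W)$, and applying the two-sided comparison $H(X) \le S_1(X) \le H(X) + {E_X^\kappa}'$ of Property \ref{prop:ke_bound} term-wise, one sees that the only contribution with the right sign is $S_1(S, W) - H(S, W)$, whose magnitude is exactly an expected-discrepancy functional on the joint distribution. Substituting the resulting inequality into the Shannon bounds of the previous paragraph yields the two claims of the theorem with $\hat{I}_1$ in place of $I$.

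The main obstacle I anticipate is matching the precise discrepancy quantity appearing in $\hat{I}_1$: the three-entropies argument naturally produces ${E_{S,W}^\kappa}'$ (evaluated at $p_{S,W}$), whereas the statement is phrased in terms of $E_{S,W}^\kappa$ (evaluated at the kernel-smoothed $\hat{p}_{S,W}$). Recovering the unprimed version likely requires passing through the divergence-level identity $D_1(P \parallel Q) = D(\hat{P} \parallel Q) - D(\hat{P} \parallel P)$ applied to $(P, Q) = (P_{S,W}, P_W \otimes P_S)$, so that the Donsker--Varadhan estimate is carried out against the smoothed measure $\hat{P}_{S,W}$ and the remainder $D(\hat{P} \parallel P)$ becomes the unprimed discrepancy $E_{S,W}^\kappa$ directly, rather than emerging from a term-by-term comparison of the three entropies.
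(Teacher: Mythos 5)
Your first-moment and second-moment machinery (decoupled subgaussian MGF bound for $f(W,S')$, optimization over $\lambda$, and the $\chi^2$-type bound $\E[e^{\lambda f^2}]\le(1-2\lambda R^2/n)^{-1/2}$, which in fact yields a constant slightly better than $4R^2(\cdot+\log 3)$) is sound and parallels the paper's use of Lemmas~\ref{lm:exp_subgauss} and~\ref{lm:subgauss}. The genuine gap is in the lift from $I$ to $\hat I_1$. Your three-entropy comparison via Properties~\ref{prop:ke_bound} and~\ref{prop:mi_ke} proves $I(S;W)\le I_1(S;W)+{E_{S,W}^\kappa}'$, i.e.\ with the \emph{primed} discrepancy evaluated at $p_{S,W}$; the theorem's $\hat I_1$ uses the unprimed $E_{S,W}^\kappa$ evaluated at the smoothed density $\hat p_{S,W}$, and neither quantity dominates the other in general, so you would be proving a variant of the statement, as you yourself anticipate. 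The proposed repair does not close this gap. In the identity $D_1(P\parallel Q)=D(\hat P\parallel Q)-D(\hat P\parallel P)$ the remainder $D(\hat P\parallel P)$ is \emph{not} the unprimed discrepancy: the only Jensen comparison available (concavity of $\log$ applied to $\hat p(\x)=C_\kappa\int p(\x')\kappa^2(\x,\x')\dif\x'$) gives $\log\hat p(\x)\ge C_\kappa\int\log p(\x')\kappa^2(\x,\x')\dif\x'$, hence the \emph{lower} bound $D(\hat P\parallel P)\ge-\int\hat p\,u_{S,W}^\kappa\ge-E_{S,W}^\kappa$, whereas your argument needs the upper bound $D(\hat P\parallel P)\le E_{S,W}^\kappa$, which is not established. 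Moreover, even granting that, running Donsker--Varadhan on $D(\hat P_{S,W}\parallel P_S\otimes P_W)$ produces $\E_{\hat P_{S,W}}[\lambda f]$, the generalization gap averaged under the kernel-smoothed joint; converting this back to $\E_{P_{S,W}}[f]$ requires a regularity argument about $f$ (not an information quantity) that the proposal does not supply.

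For contrast, the paper never compares $I$ with $I_1$ at all. It proves a kernelized Donsker--Varadhan inequality directly at the level of $D_1$ (Lemma~\ref{lm:dv}): introducing the exponentially tilted measure $Q^X$ with $\dif Q^X\propto e^X\dif Q$, invoking Property~\ref{prop:kl_positive} in the form $D_1(P\parallel Q^X)\ge-E_P^\kappa$, and evaluating the cross term using normalization and shift invariance of $\kappa$, so that the unprimed discrepancy enters once, at the divergence level. Theorem~\ref{thm:gen_mi} then follows by applying this lemma to $D_1(P_{W,S}\parallel P_W\otimes P_S)$ with test statistics $\lambda f$ and $\lambda f^2$ together with the subgaussian bounds, exactly the ingredients of your first paragraph. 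So what is missing from your proposal is a correct mechanism for making $E_{S,W}^\kappa$ (rather than ${E_{S,W}^\kappa}'$ or $D(\hat P\parallel P)$) appear; in the paper that role is played by Lemma~\ref{lm:dv}, not by a term-wise comparison of Shannon and kernelized entropies.
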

\begin{remark}
    Theorem \ref{thm:gen_mi} provides a kernelized R\'enyi's entropy perspective for information-theoretic generalization \cite{xu2017information}. As indicated by Proposition \ref{prop:finite}, $E_{S,W}^\kappa$ is the expected discrepancy of the joint distribution between $S$ and $W$ associated with $\kappa$, which vanishes when $\sigma_\kappa \rightarrow 0$. It is worth noting that Theorem \ref{thm:gen_mi} upper bounds both the expectation and the variance of $\gen(W,S)$ by the same quantity $I_1(S;W)$, thus also yields high-probability bounds for the generalization error through concentration inequalities e.g. Markov's and Chebyshev's inequalities.
\end{remark}
Next, we apply our generalization result on mini-batched iterative and noisy learning algorithms for empirical risk minimization. Suppose algorithm $\mathcal{A}$ finishes in $T$ steps, and let $W_0 \in \mathcal{W}$ be the initial parameter vector. At the $t$-th step, a batch of data points $B_t \subset S$ independent from the current parameter vector is randomly selected and used to compute a direction for gradient descent:
\begin{equation*}
    g(w, B_t) \triangleq \frac{1}{\abs{B_t}} \sum_{z \in B_t} \nabla_w \ell(w, z).
\end{equation*}
Then the updating rule can be formalized by
\begin{equation} \label{eq:SGLD}
    W_t = W_{t-1} - \eta_t g(W_{t-1}, B_t) + \xi_t,
\end{equation}
where $W_t$ denotes the parameter vector at $t$-th step, $\eta_t$ is the learning rate and $\xi_t \in \mathcal{W}$ is a random vector independent from $W_{t-1}$ and $B_t$. Obviously, $W_0 \rightarrow W_1 \rightarrow \cdots \rightarrow W_T$ forms a Markov chain.

\subsection{Stochastic Gradient Langevin Dynamics}
The SGLD algorithm is a variant of the classical SGD algorithm by injecting random noises in each gradient update as shown in eq.(\ref{eq:SGLD}). A common choice is the isotropic Gaussian noise, i.e. $\xi_t \sim N(0, \sigma_t^2 I_d)$, since it has the maximum entropy for a fixed variance $\sigma_t^2$ which leads to the tightest upper bound. \cite{pensia2018generalization} derived the following information-theoretic generalization bound for SGLD:
\begin{lemma} \label{lm:gen_sgld}
    Let $W_T$ be the parameter vector acquired by the SGLD algorithm after $T$ updates, then
    \begin{equation}
        I(W_T;S) \le \sum_{t=1}^T \frac{d}{2}\log\prn*{\frac{\eta_t^2L}{d\sigma_t^2} + 1}, \label{eq:expr_sgld_1}
    \end{equation}
    where $L = \max_{w \in \mathcal{W}, z \in \mathcal{Z}} \norm{g(w,z)}_2^2$.
\end{lemma}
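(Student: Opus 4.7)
The plan is to follow the classical Markov-chain argument of Pensia et al., adapted to the paper's notation. First, since $W_T$ is a deterministic function of the trajectory $(W_0,\ldots,W_T)$, the data-processing inequality combined with the chain rule for mutual information gives
\[
I(W_T; S) \le I(W_{0:T}; S) = \sum_{t=1}^T I(W_t; S \mid W_{0:t-1}).
\]
The update rule \eqref{eq:SGLD} shows that given $W_{t-1}$ and $S$, the randomness in $W_t$ comes only from the batch $B_t \subset S$ and the fresh noise $\xi_t$, both independent of $W_{0:t-2}$. Hence $H(W_t \mid W_{0:t-1}, S) = H(W_t \mid W_{t-1}, S)$, while $H(W_t \mid W_{0:t-1}) \le H(W_t \mid W_{t-1})$ because conditioning reduces entropy, so each summand satisfies $I(W_t; S \mid W_{0:t-1}) \le I(W_t; S \mid W_{t-1})$.

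Second, I would bound each per-step term by a one-shot Gaussian-channel argument. Writing $U_t \triangleq W_t - W_{t-1} = -\eta_t g(W_{t-1}, B_t) + \xi_t$, translation by the $W_{t-1}$-measurable vector $W_{t-1}$ is invertible, so $I(W_t; S \mid W_{t-1}) = I(U_t; S \mid W_{t-1})$. Because $\xi_t \sim N(0, \sigma_t^2 I_d)$ is independent of $(S, W_{t-1}, B_t)$,
\[
H(U_t \mid W_{t-1}, S) \ge H(U_t \mid W_{t-1}, S, B_t) = H(\xi_t) = \tfrac{d}{2}\log(2\pi e \sigma_t^2),
\]
while the maximum-entropy property of the Gaussian (among all distributions on $\mathbb{R}^d$ with a given trace of covariance, the isotropic Gaussian is entropy-maximizing) yields pointwise in $W_{t-1}=w$,
\[
H(U_t \mid W_{t-1}=w) \le \tfrac{d}{2}\log\!\prn*{2\pi e\,\E[\|U_t\|^2 \mid W_{t-1}=w]/d}.
\]

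The second moment splits by independence of $g$ and $\xi_t$: $\E[\|U_t\|^2 \mid W_{t-1}=w] = \eta_t^2\,\E[\|g(w,B_t)\|^2] + d\sigma_t^2 \le \eta_t^2 L + d\sigma_t^2$, where the bound on $\|g(w,B_t)\|^2$ follows from a Jensen-type estimate applied to the per-sample average together with the definition of $L$. Combining the two entropy bounds gives $I(U_t; S \mid W_{t-1}=w) \le \tfrac{d}{2}\log\!\prn*{1 + \eta_t^2 L / (d\sigma_t^2)}$ pointwise, and averaging over $W_{t-1}$ preserves the inequality by concavity of $\log$ and monotonicity. Summing in $t$ yields the stated bound.

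The step I expect to be the main obstacle is the reduction $I(W_t; S \mid W_{0:t-1}) \le I(W_t; S \mid W_{t-1})$ in the first paragraph. This is not the classical Markov property applied to $W_{0:T}$ alone (which would concern only $I(W_t; W_{0:t-2} \mid W_{t-1})$), but a combination of that independence structure with the general fact that $H(\cdot \mid \text{more}) \le H(\cdot \mid \text{less})$; one must keep careful track of which variables are conditioned in the two entropy terms comprising the mutual information. Everything downstream — the Gaussian upper and lower entropy bounds, the second-moment estimate using the gradient-norm bound, and the final telescoping — is routine once this decomposition is in place.
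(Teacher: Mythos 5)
Your proof is correct, and it is essentially the classical argument (Pensia et al.) that this lemma is quoted from: note that the paper itself does not prove Lemma~\ref{lm:gen_sgld} at all --- it is imported from \cite{pensia2018generalization} --- so the only in-paper point of comparison is the proof of Theorem~\ref{thm:gen_sgld}, which is the tightened analogue of the same statement. Your route and the paper's differ slightly in the decomposition: you expand $I(W_T;S)\le I(W_{0:T};S)=\sum_t I(W_t;S\mid W_{0:t-1})$ and then argue the conditional independence $W_t \perp W_{0:t-2}\mid(W_{t-1},S)$ (batch indices and $\xi_t$ fresh) together with ``conditioning reduces entropy'' to get $\le \sum_t I(W_t;S\mid W_{t-1})$, whereas the paper first applies data processing through the Markov chain $S\to(B_1,\dots,B_T)\to(W_0,\dots,W_T)$ and reduces to $\sum_t I_1(W_t;B_t\mid W_{t-1})$ before the per-step Gaussian-channel bound; both reductions are valid, and your worry about that step is resolved exactly by the conditional-independence argument you give (the two entropy identities you state hold, and all differential entropies involved are finite because of the injected Gaussian noise). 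The per-step bound is then the same maximum-entropy computation in both cases: lower-bound the conditional entropy by $H(\xi_t)=\frac d2\log(2\pi e\sigma_t^2)$, upper-bound the unconditional one via the isotropic-Gaussian max-entropy bound with second moment $\eta_t^2\E\|g(w,B_t)\|^2+d\sigma_t^2\le \eta_t^2L+d\sigma_t^2$ (your Jensen step for the mini-batch average is fine), except that the paper's Theorem~\ref{thm:gen_sgld} keeps the full covariance $\V_t$ and a determinant (Lemmas~\ref{lm:max_ke} and~\ref{lm:mi_bound1}) rather than collapsing to the trace/worst-case constant $L$, which is precisely what makes it strictly tighter (Proposition~\ref{prop:tight}). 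One cosmetic remark: your appeal to concavity of $\log$ when averaging over $W_{t-1}$ is unnecessary here, since after replacing $\E\|g(w,B_t)\|^2$ by $L$ the per-step bound is constant in $w$.
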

Note that the constant $L$ in Lemma \ref{lm:gen_sgld} is upper bounded by the square of the Lipschitz constant if $\ell(w, z)$ is Lipschitz continuous with regard to $w$, and the bound is dimension-dependent. This result is then improved in \cite{wang2021analyzing} by removing the Lipschitz assumption:
\begin{lemma} \label{lm:gen_sgld2}
    Under the same conditions of Lemma \ref{lm:gen_sgld}:
    \begin{equation}
        I(W_T;S) \le \sum_{t=1}^T \frac{\eta_t^2V_t}{2\sigma_t^2}, \label{eq:expr_sgld_1_2}
    \end{equation}
    where $V_t$ is the \textbf{gradient variance} at step $t$, defined by
    \begin{equation*}
        V_t \triangleq \E_{W_{t-1},B_t}\brk*{\norm{g(W_{t-1},B_t) - \E_{B_t}[g(W_{t-1},B_t)]}_2^2}.
    \end{equation*}
\end{lemma}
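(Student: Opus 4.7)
The plan is to mirror the original Pensia-style argument but replace its dimension-dependent step with a tighter Gaussian-channel bound that retains the full gradient covariance rather than collapsing it to a worst-case norm. The opening move is a telescoping decomposition via data processing and the chain rule:
\[
    I(W_T; S) \le I(W_0, W_1, \ldots, W_T; S) = \sum_{t=1}^T I(W_t; S \mid W_{0:t-1}),
\]
where $I(W_0; S) = 0$ because $W_0$ is independent of the training set.

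The per-step bound exploits the fresh Gaussian noise $\xi_t \sim N(0, \sigma_t^2 I_d)$ injected at step $t$. Conditioning on $W_{0:t-1}$, the iterate has the form $W_t = W_{t-1} - \eta_t g(W_{t-1}, B_t) + \xi_t$ with $\xi_t$ independent of everything else. Decomposing $I(W_t; S \mid W_{0:t-1}) = h(W_t \mid W_{0:t-1}) - h(W_t \mid W_{0:t-1}, S)$, the second term is at least $h(\xi_t)$ via further conditioning on $B_t$, while the first is upper-bounded by the Gaussian maximum entropy with the matching conditional covariance $\sigma_t^2 I_d + \eta_t^2 \Cov(g(W_{t-1}, B_t) \mid W_{0:t-1})$. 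Applying the elementary inequality $\log\det(I_d + A) \le \tr(A)$ for positive semidefinite $A$ then yields
\[
    I(W_t; S \mid W_{0:t-1}) \le \frac{\eta_t^2}{2\sigma_t^2}\, \E_{W_{0:t-1}}\!\brk*{\tr\!\prn*{\Cov(g(W_{t-1}, B_t) \mid W_{0:t-1})}}.
\]
By the law of total variance, the right-hand side is dominated by $\eta_t^2 V_t / (2\sigma_t^2)$. Summing over $t$ produces the claimed bound.

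The critical step is the Gaussian-channel inequality: tracking the full conditional covariance of the stochastic gradient, instead of replacing it by a worst-case gradient norm, is precisely what removes the Lipschitz assumption of Lemma \ref{lm:gen_sgld} and trades the dimension factor $d$ for the data-dependent quantity $V_t$. No smoothness or convexity of the loss enters the argument; only the freshness and Gaussianity of the injected noise are needed, which places the resulting bound within the same broad regime as \cite{wang2021analyzing}.
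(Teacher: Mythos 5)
Your argument is correct. One thing to be aware of: the paper never proves Lemma \ref{lm:gen_sgld2} itself --- it is imported from \cite{wang2021analyzing} as a baseline --- so the only in-paper material to compare against is the proof of Theorem \ref{thm:gen_sgld} (via Lemmas \ref{lm:max_ke} and \ref{lm:mi_bound1}) and Proposition \ref{prop:tight}. Your route is essentially that same Gaussian-channel argument, just carried out for Shannon entropy: the chain-rule/data-processing telescoping into $\sum_t I(W_t;S\mid W_{0:t-1})$, the lower bound $h(W_t\mid W_{0:t-1},S)\ge h(\xi_t)$ by further conditioning on $B_t$, and the maximum-entropy bound with the conditional covariance $\sigma_t^2 I_d+\eta_t^2\Cov(g\mid W_{0:t-1})$ are exactly the Shannon analogues of the steps in Lemma \ref{lm:mi_bound1}. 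In fact, before you apply $\log\det(I_d+A)\le\tr(A)$, your intermediate bound $\sum_t \tfrac12\,\E\brk*{\log\abs{I+\tfrac{\eta_t^2}{\sigma_t^2}\Cov(g\mid W_{0:t-1})}}$ is (after Jensen and the law of total variance, using that the conditional mean is centered) the paper's improved bound in eq.~(\ref{eq:expr_sgld_3}); the final relaxation to $\eta_t^2 V_t/(2\sigma_t^2)$ is precisely the content of $\theta_c(\V_t)\le\theta_v(V_t)$ in Proposition \ref{prop:tight} combined with $\log(1+x)\le x$. Your use of the law of total variance to pass from conditioning on the whole history $W_{0:t-1}$ to conditioning on $W_{t-1}$ only (matching the definition of $V_t$) is valid since $\sigma(W_{t-1})\subseteq\sigma(W_{0:t-1})$. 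So the proposal is sound and, if anything, makes explicit why the cited variance bound is the loose endpoint of the covariance-based analysis the paper advocates.
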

At first glance, the above bound does not depend on the dimensionality $d$. The gradient variance $V_t$, however, actually relies on $d$ since it is the summation of the variance raised by each dimension of the stochastic gradient vector. The main reason is that they use isotropic Gaussian distributions to upper bound the entropy of stochastic gradient, being severely over-estimated according to our empirical results (see Figure \ref{fig:regress} and \ref{fig:realdata}). To address this issue, we consider using the correlation between different dimensions of the gradient, which yields strictly tighter bounds for SGLD:
\begin{theorem} \label{thm:gen_sgld}
    Under the same conditions of Lemma \ref{lm:gen_sgld}:
    \begin{align}
        I_1(W_T;S) &\le \sum_{t=1}^T I_1\prn*{W_t;B_t|W_{t-1}} \label{eq:expr_sgld_2} \\
        &\le \sum_{t=1}^T \prn*{\frac{1}{2} \log\abs*{\frac{\eta_t^2}{\sigma_t^2}\V_t + I} + E_{W_t|W_{t-1}}^\kappa}, \label{eq:latter_step} \\
        \begin{split}
            I(W_T;S) \le \sum_{t=1}^T \frac{1}{2} \log\abs*{\frac{\eta_t^2}{\sigma_t^2}\V_t + I}, \label{eq:expr_sgld_3}
        \end{split}
    \end{align}
    where $\V_t = \Cov[g(W_{t-1}, B_t)]$ is the \textbf{gradient covariance} matrix and $\abs{\cdot}$ denotes the matrix determinant.
\end{theorem}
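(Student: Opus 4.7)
The plan is to prove the three chained inequalities using a Markov decomposition followed by a Gaussian maximum-entropy bound tailored to the SGLD update rule (\ref{eq:SGLD}). Throughout, I rely on the fact that, given $W_{t-1}$, the iterate $W_t$ depends on $S$ only through $B_t$, and that $\xi_t \sim N(0, \sigma_t^2 I)$ is independent of $(W_{t-1}, B_t)$.

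For inequality (\ref{eq:expr_sgld_2}), I would first apply Proposition \ref{prop:property}.\ref{prop:data_proc} along the Markov chain $S \to W_{0:T} \to W_T$ to obtain $I_1(W_T; S) \le I_1(W_{0:T}; S)$. Iterating the identity of Proposition \ref{prop:property}.\ref{prop:cond_mi} yields the decomposition $I_1(W_{0:T}; S) = I_1(W_0; S) + \sum_{t=1}^T I_1(W_t; S \mid W_{0:t-1})$, whose first term vanishes since $W_0$ is independent of $S$. For each remaining term, conditional on $W_{0:t-1}$ we have the Markov chain $S \to B_t \to W_t$, so Proposition \ref{prop:property}.\ref{prop:data_proc} gives $I_1(W_t; S \mid W_{0:t-1}) \le I_1(W_t; B_t \mid W_{0:t-1})$. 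Finally, since $(B_t, W_t)$ is independent of $W_{0:t-2}$ given $W_{t-1}$, the conditioning reduces to $W_{t-1}$ alone.

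For (\ref{eq:latter_step}) I would first convert each kernelized conditional MI into a Shannon one. Applying the entropic expansion of Proposition \ref{prop:property}.\ref{prop:mi_ke} to the conditional distribution given $W_{t-1}$, together with the one-sided bound $H \le S_1 \le H + E^\kappa$ from Proposition \ref{prop:property}.\ref{prop:ke_bound}, yields $I_1(W_t; B_t \mid W_{t-1}) \le I(W_t; B_t \mid W_{t-1}) + E_{W_t\mid W_{t-1}}^\kappa$. For the Shannon term, fix $W_{t-1} = w$: given $B_t$, the iterate $W_t$ is Gaussian with covariance $\sigma_t^2 I$, so $h(W_t \mid W_{t-1}, B_t) = \tfrac{d}{2}\log(2\pi e \sigma_t^2)$, and the conditional law of $W_t$ given $W_{t-1} = w$ (a mixture over $B_t$) has covariance $\eta_t^2 \Cov_{B_t}[g(w, B_t)] + \sigma_t^2 I$. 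The Gaussian maximum-entropy principle followed by Jensen's inequality on the concave map $A \mapsto \log|A|$, combined with the interpretation of $\V_t$ as the expected within-batch covariance of the stochastic gradient, then delivers $I(W_t; B_t \mid W_{t-1}) \le \tfrac{1}{2}\log\abs*{(\eta_t^2/\sigma_t^2)\V_t + I}$.

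Inequality (\ref{eq:expr_sgld_3}) is obtained by rerunning the same argument with classical Shannon quantities throughout, where the chain rule and data processing hold unconditionally and no discrepancy terms appear. The main obstacle is the careful handling of conditioning for kernelized R\'enyi quantities, since Proposition \ref{prop:property} states the chain rule and data processing for unconditional entropies; I would address this by viewing conditional kernelized MI as an expectation over the conditioning variable of unconditional kernelized MIs of the conditional laws, so that Propositions \ref{prop:property}.\ref{prop:ke_bound}, \ref{prop:property}.\ref{prop:mi_ke}, \ref{prop:property}.\ref{prop:cond_mi}, and \ref{prop:property}.\ref{prop:data_proc} can be applied fiberwise before averaging over $W_{t-1}$.
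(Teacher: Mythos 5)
Your proposal is correct in substance but reaches eq.~(\ref{eq:latter_step}) by a different route than the paper. For eq.~(\ref{eq:expr_sgld_2}) the difference is cosmetic: you chain over the $W_t$'s with $S$ in the second slot and apply data processing $S \to B_t \to W_t$ conditionally at each step, whereas the paper first applies data processing to pass from $S$ to $(B_1,\dots,B_T)$ and then invokes the chain rule; both hinge on the same reduction of the conditioning set to $W_{t-1}$, which you justify fiberwise exactly as the paper implicitly does. The real divergence is in the per-step bound: the paper stays entirely inside kernelized quantities, writing $I_1(W_t;B_t|W_{t-1}=w)=S_1(W_t|w)-S_1(\xi_t)$, computing $S_1(\xi_t)$ for Gaussian noise in closed form and invoking a kernelized Gaussian maximum-entropy lemma (Lemma~\ref{lm:max_ke}), so that the extra $\sigma_\kappa^2\,\tr[\cdot]$ terms cancel and only $E^\kappa_{W_t|W_{t-1}}$ survives; you instead convert the kernelized conditional MI to its Shannon counterpart via Proposition~\ref{prop:property} (properties \ref{prop:ke_bound} and \ref{prop:mi_ke}) and then run the classical Gaussian max-entropy/Jensen/law-of-total-covariance argument. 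Your route is simpler in that it needs neither Lemma~\ref{lm:max_ke} nor the exact kernelized Gaussian entropy, and it delivers eq.~(\ref{eq:expr_sgld_3}) for free; the paper's route buys a cleaner bookkeeping of the kernel-width corrections. Two points of care for your version: (i) to avoid a spurious discrepancy term attached to $B_t$, you must use the one-sided bound $S_1 \ge H$ on the noise-conditional entropy (fiberwise in $B_t$) and $S_1 \le H + {E^\kappa}^\prime$ only on $S_1(W_t|W_{t-1}=w)$; done this way your correction term is the primed discrepancy ${E^\kappa_{W_t|W_{t-1}}}^\prime$ rather than the unprimed $E^\kappa_{W_t|W_{t-1}}$ stated in the theorem, a harmless but real mismatch with the paper's statement (both vanish as $\sigma_\kappa \to 0$). (ii) Your identification of $\V_t$ with the expected within-batch covariance matches the paper's usage (obtained there by centering $g$ at its conditional mean so that the law of total covariance applies exactly); if one instead reads $\V_t$ as the total covariance, the bound still holds by positive semi-definite monotonicity of the log-determinant, so nothing breaks.
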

\begin{remark}
    Theorem \ref{thm:gen_sgld} asserts that the kernelized R\'enyi's mutual information $I_1(W_T;S)$ is upper bounded by the determinant of the gradient covariance matrix, which involves the full correlation between different dimensions of the gradient vector. The limit case $\sigma_\kappa \rightarrow 0$ implies an upper bound for Shannon's mutual information $I(W_T;S)$ in eq.(\ref{eq:expr_sgld_3}). Note that the kernelized R\'enyi's information quantities in eq.(\ref{eq:expr_sgld_2}) can be directly calculated from $B_t$, enabling us to validate the tightness of these intermediate bounds. Combining with Theorem \ref{thm:gen_mi}, one can obtain upper bounds for the expected generalization error of SGLD. 
\end{remark}
The following proposition shows that our bound is strictly tighter than that of Lemma \ref{lm:gen_sgld} and \ref{lm:gen_sgld2}.
\begin{proposition} \label{prop:tight}
    Given $\V_t$, $V_t$ and $L$ defined as above, let $\{c_i\}_{i=1}^r$ be a disjoint partition of $\{n\}$, i.e. $c_1 \cup \cdots \cup c_r = \{n\}$ and $c_i \cap c_j = \Phi$ for any $i \ne j$. Let $\V_t^i$ be the sub-matrix of $\V_t$ with columns and rows indexed by $c_i$, and define
    \begin{gather*}
        \theta_c(\V) = \frac{1}{2}\log\abs*{\frac{\eta_t^2}{\sigma_t^2}\V+I}, \quad \theta_v(V) = \frac{d}{2}\log\prn*{\frac{\eta_t^2V}{d\sigma_t^2}+1}, \\
        \textrm{then}\qquad\theta_c(\V_t) \le \sum_{i=1}^r\theta_c(\V_t^i) \le \theta_v(V_t) \le \theta_v(L).
    \end{gather*}
\end{proposition}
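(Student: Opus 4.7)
The proof decomposes the chain into three separate inequalities, each of which I would treat by a distinct classical tool. Let me denote $a_t = \eta_t^2/\sigma_t^2$ throughout so that $\theta_c(\V) = \tfrac{1}{2}\log|a_t\V + I|$ and $\theta_v(V) = \tfrac{d}{2}\log(a_tV/d + 1)$.

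For the leftmost inequality $\theta_c(\V_t) \le \sum_i \theta_c(\V_t^i)$, the plan is to invoke \emph{Fischer's determinantal inequality}. The matrix $a_t\V_t + I$ is positive definite (since $\V_t$ is a covariance matrix, hence positive semidefinite, and $a_t > 0$). Permuting rows and columns so that coordinates within the same block $c_i$ are adjacent presents $a_t\V_t + I$ as a block matrix whose $i$-th diagonal block is exactly $a_t\V_t^i + I$. Fischer's inequality (applied inductively across the $r$ blocks) then gives $|a_t\V_t + I| \le \prod_{i=1}^r |a_t\V_t^i + I|$; taking logarithms and halving yields the claim.

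For the middle inequality $\sum_i \theta_c(\V_t^i) \le \theta_v(V_t)$, I would rewrite each block's determinant spectrally. If $\lambda_1^{(i)}, \ldots, \lambda_{|c_i|}^{(i)}$ are the eigenvalues of $\V_t^i$, then $|a_t\V_t^i + I| = \prod_j (1 + a_t\lambda_j^{(i)})$, so
\begin{equation*}
    \sum_{i=1}^r \theta_c(\V_t^i) = \frac{1}{2}\sum_{i=1}^r \sum_{j=1}^{|c_i|} \log(1 + a_t\lambda_j^{(i)}).
\end{equation*}
The collection of all such $\lambda_j^{(i)}$ consists of exactly $d$ nonnegative numbers (since $\sum_i |c_i| = d$), and their total sum equals $\sum_i \tr(\V_t^i) = \tr(\V_t) = V_t$. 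Applying Jensen's inequality to the concave function $x \mapsto \log(1 + a_t x)$ with uniform weights $1/d$ over these $d$ values produces $\tfrac{1}{d}\sum_{i,j}\log(1 + a_t \lambda_j^{(i)}) \le \log(1 + a_t V_t/d)$; multiplying by $d/2$ recovers $\theta_v(V_t)$.

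For the rightmost inequality $\theta_v(V_t) \le \theta_v(L)$, I would first observe that $\theta_v$ is monotone increasing in its argument (its derivative is $\tfrac{a_t/2}{a_tV/d + 1} > 0$), so it suffices to verify $V_t \le L$. Since $V_t = \tr(\Cov[g(W_{t-1}, B_t)]) = \E\|g(W_{t-1}, B_t) - \E[g(W_{t-1}, B_t)]\|_2^2 \le \E\|g(W_{t-1}, B_t)\|_2^2$, and since $g(w, B_t)$ is an average over $|B_t|$ per-sample gradients each of squared norm at most $L$, a triangle/Jensen argument gives $\|g(w, B_t)\|_2^2 \le L$ pointwise, hence $V_t \le L$. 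The main technical care is in Fischer's inequality; the other two steps are essentially bookkeeping around Jensen and monotonicity.
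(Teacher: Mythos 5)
Your proof is correct and follows essentially the same route as the paper: your appeal to Fischer's determinantal inequality is exactly the paper's Lemma \ref{lm:det} (proved there via a Schur-complement factorization and Weyl's inequality and applied recursively across the blocks), and your remaining steps are the paper's AM--GM/Jensen argument on the eigenvalues together with $V_t \le L$ and monotonicity of $\theta_v$. The only cosmetic difference is that you apply Jensen once to all $d$ eigenvalues pooled across the blocks, whereas the paper first bounds each $\theta_c(\V_t^i)$ by a $\theta_v$-type quantity built from $\tr[\V_t^i]$ and then recombines; both are the same concavity argument, and yours is arguably the cleaner bookkeeping.
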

\begin{remark}
    The quantities $\theta_c$ and $\theta_v$ correspond to the upper bounds in Theorem \ref{thm:gen_sgld} and Lemma \ref{lm:gen_sgld2} respectively. When the model size $d$ is large, it is infeasible to calculate the entire covariance matrix $\V_t$ due to limited memory. Proposition \ref{prop:tight} suggests an alternative upper bound for $\theta_c(\V_t)$, which could be calculated using much lower memory by dividing the parameter vector into different groups according to their correlation (e.g. let each layer of the model be a group), calculating $\theta_c(\V_t^i)$ for each group and then summing them up. In the limit case where every single parameter of $W$ represents a group, calculating $\theta_c$ requires no more memory than $\theta_v$, while still being strictly tighter than the latter one.
\end{remark}

\begin{figure}[t]
    \centering
    \includegraphics[width=0.45\textwidth]{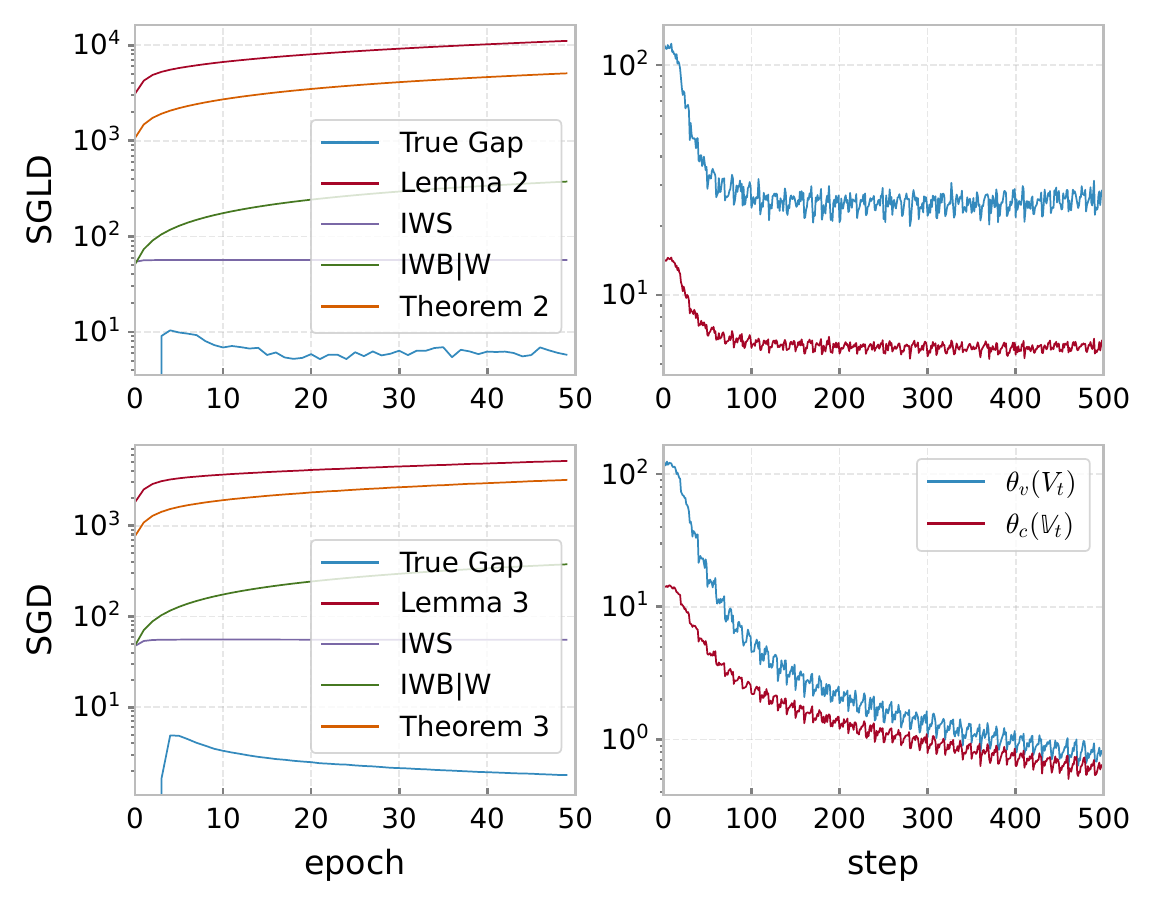}
    \caption{Comparison of generalization bounds on synthetic data.}
    \label{fig:regress}
\end{figure}

\begin{figure*}[t]
    \centering
    \includegraphics[width=0.85\textwidth]{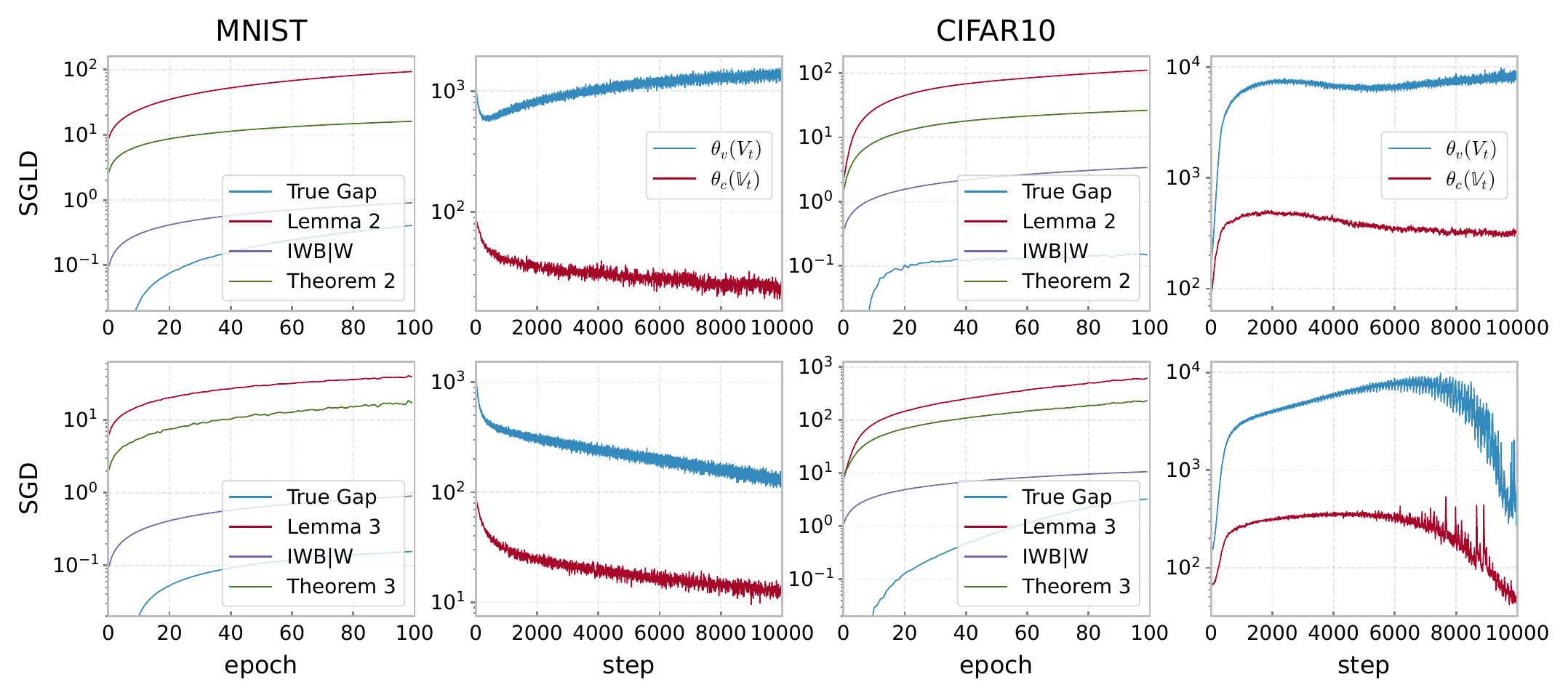}
    \caption{Visualization and comparison of information-theoretic generalization bounds for SGLD and SGD on MNIST and CIFAR10.}
    \label{fig:realdata}
\end{figure*}

\subsection{Stochastic Gradient Descent}
Unlike SGLD, the SGD algorithm does not involve random noises in each update, i.e. $\xi_t = 0$. This actually causes extra difficulty for information-theoretic generalization analysis as the strategy used to derive the SGLD bound is no longer available. To circumvent the issue, \cite{neu2021information} proposes to introduce an auxiliary weight process $\tilde{W}_t$ that manually includes virtual noises, and then bridges the differences between these two processes ($W_t$ and $\tilde{W}_t$). Let
\begin{align*}
    \tilde{W}_0 &= W_0, \quad \textrm{and} \\
    \tilde{W}_t &= \tilde{W}_{t-1} - \eta_t g(W_{t-1}, B_t) + \tilde{\xi}_t, \quad \textrm{for } t > 0,
\end{align*}
where $\tilde{\xi}_t \sim N(0, \sigma_t^2I)$ are random Gaussian vectors. Obviously, we have $\tilde{W}_t = W_t + \Delta_t$, where $\Delta_t = \sum_{i=1}^t \tilde{\xi}_i$. The recent work \cite{wang2021generalization} establishes the following information-theoretic generalization bound for SGD:
\begin{lemma} \label{lm:gen_sgd}
    Assume that $L(W_T) \le \E_{\Delta_t}[L(W_T+\Delta_t)]$ and $\ell$ is twice differentiable. Then for any $\sigma_1$, $\cdots$, $\sigma_T > 0$, we have
    \begin{gather}
        \gen(W_T;S) \le \frac{1}{2}\sum_{t=1}^T \sigma_t^2 \E_{W_T}[\H(W_T)] + \abs{\gen(\tilde{W}_T;S)}, \nonumber \\
        \textrm{and}\qquad I\prn*{\tilde{W}_T;S} \le \sum_{t=1}^T \frac{d}{2} \log\prn*{\frac{\eta_t^2V_t}{d\sigma_t^2}+1}, \label{eq:expr_sgd_1}
    \end{gather}
    where $\H(W_T) = \E_Z[\tr(H_{W_T}(Z))]$ and $H_{W_T}(Z)$ is the Hessian matrix of the loss $\ell(W_T,Z)$ with respect to $W_T$.
\end{lemma}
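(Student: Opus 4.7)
The plan is to split Lemma~\ref{lm:gen_sgd} into its two displayed inequalities and attack them independently, since the first is a purely analytic comparison between $W_T$ and $\tilde{W}_T$, whereas the second is a standard information-theoretic bound on the auxiliary, SGLD-like process $\tilde{W}_T$. For the first inequality, I would begin from
\[
    \gen(W_T;S) = \E[L(W_T)-L_S(W_T)] \le \E[L(\tilde{W}_T)-L_S(W_T)],
\]
which uses the hypothesis $L(W_T)\le \E_{\Delta_T}[L(W_T+\Delta_T)]$ together with $\tilde{W}_T = W_T+\Delta_T$. Adding and subtracting $\E[L_S(\tilde{W}_T)]$ splits the right-hand side into $\gen(\tilde{W}_T;S)+\E[L_S(\tilde{W}_T)-L_S(W_T)]$, so it remains to show that the second summand equals (up to a controllable remainder) $\tfrac{1}{2}\sum_t \sigma_t^2\,\E_{W_T}[\H(W_T)]$ and then to bound $\gen(\tilde{W}_T;S)$ by $\abs{\gen(\tilde{W}_T;S)}$.

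For this Hessian identification I would exploit that $\Delta_T=\sum_{t=1}^T\tilde{\xi}_t \sim N(0,(\sum_t\sigma_t^2)I)$ is independent of $(W_T,S)$, together with the twice-differentiability of $\ell$. The cleanest route is the Gaussian heat-equation identity $\partial_s \E[f(w+\sqrt{s}\,Z)] = \tfrac{1}{2}\E[\tr\nabla^2 f(w+\sqrt{s}\,Z)]$ with $Z\sim N(0,I)$, applied to $f=L_S$ and integrated over $s\in[0,\sum_t\sigma_t^2]$. The linear term in $\Delta_T$ vanishes by mean-zero symmetry, and taking expectation over $S$ swaps $\E_S[\tr\nabla^2 L_S(w)]$ into $\tr\nabla^2 L(w)=\H(w)$, yielding the claimed Hessian term.

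For the bound on $I(\tilde{W}_T;S)$, I would mirror the proof of Lemma~\ref{lm:gen_sgld}. The joint process $(W_t,\tilde{W}_t)$ is Markov, and a standard data-processing plus chain-rule argument on $(W_{0:T},\tilde{W}_{0:T})$ reduces $I(\tilde{W}_T;S)$ to a sum of conditional mutual informations of the form $I(\tilde{W}_t;B_t\mid W_{t-1},\tilde{W}_{t-1})$. Each such term is bounded by the expected KL between the true Gaussian transition $N(\tilde{W}_{t-1}-\eta_t g(W_{t-1},B_t),\sigma_t^2 I)$ and its posterior-mean reference $N(\tilde{W}_{t-1}-\eta_t\E_{B_t}[g(W_{t-1},B_t)],\sigma_t^2 I)$, giving $\eta_t^2\norm{g-\E g}^2/(2\sigma_t^2)$ in expectation over $B_t$. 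Applying Jensen's inequality $\tfrac{1}{d}\log\det A \le \log(\tr A/d)$ to $A=(\eta_t^2/\sigma_t^2)\Cov[g(W_{t-1},B_t)]+I$ then converts this into the stated $\tfrac{d}{2}\log(\eta_t^2V_t/(d\sigma_t^2)+1)$, with the gradient variance $V_t=\tr\Cov[g]$ replacing the Lipschitz-type constant $L$ of Lemma~\ref{lm:gen_sgld} precisely because the reference Gaussian is centered at the posterior mean rather than at a fixed point.

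The main obstacle is controlling the higher-order remainder in the Taylor/heat-equation step: without an a priori bound on the third derivative of $\ell$, the clean $\sigma_t^2 \H$ term is only exact in the quadratic regime, and in general I would rely on the hypothesis $L(W_T)\le \E[L(\tilde{W}_T)]$ to absorb the sign of the remainder into the inequality direction. A secondary delicacy is that $\tilde{W}_t$ is defined through $W_{t-1}$ rather than $\tilde{W}_{t-1}$, so the chain-rule decomposition must be set up on the joint process $(W_{0:T},\tilde{W}_{0:T})$ rather than on $\tilde{W}_{0:T}$ alone, to keep the conditioning compatible with this asymmetric dependence.
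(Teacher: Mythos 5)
You should first note that the paper itself does not prove Lemma \ref{lm:gen_sgd}: it is imported verbatim from \cite{wang2021generalization}, so there is no in-paper proof to compare against; the closest in-paper machinery is Lemma \ref{lm:mi_bound1}, Lemma \ref{lm:mi_bound2}, Theorem \ref{thm:gen_sgd} and Proposition \ref{prop:tight}. Your overall architecture (auxiliary-process decomposition for the first display, per-step information bound for the second) is the standard one, but both halves of your sketch have concrete gaps. For the second display, your per-step bound is the expected KL between $N(\tilde{W}_{t-1}-\eta_t g,\sigma_t^2 I)$ and its posterior-mean reference, which gives exactly $\eta_t^2 V_t/(2\sigma_t^2)$ — the form of Lemma \ref{lm:gen_sgld2} — and this is \emph{larger} than the target $\tfrac{d}{2}\log\prn*{1+\eta_t^2V_t/(d\sigma_t^2)}$, since $\tfrac{d}{2}\log(1+x/d)\le x/2$. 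The "Jensen conversion" you invoke therefore runs in the wrong direction: you never derived a log-determinant quantity to which $\tfrac1d\log\det A\le\log(\tr A/d)$ could be applied. The correct route is the maximum-entropy argument: write the conditional mutual information as a difference of conditional differential entropies, bound $h(\tilde{W}_t\mid\tilde{W}_{t-1},\Delta_{t-1})$ from above by the Gaussian of matching covariance to obtain $\tfrac12\log\abs*{\tfrac{\eta_t^2}{\sigma_t^2}\V_t+I}$, and only then apply AM--GM — exactly Lemma \ref{lm:mi_bound1} plus the middle inequality of Proposition \ref{prop:tight}. Relatedly, a chain rule on the joint trajectory $(W_{0:T},\tilde{W}_{0:T})$ is hazardous for SGD: the $W$-chain is noiseless, so chain-rule terms carrying $W_t$ in the first argument are generically infinite; one must decompose over $\tilde{W}_{0:T}$ alone and then insert $W_{t-1}$ (equivalently $\Delta_{t-1}$) into the conditioning using its independence from the batches, as in step (\ref{eq:mi_bound2_1}).

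For the first display, the decomposition $\gen(W_T;S)\le\abs{\gen(\tilde{W}_T;S)}+\E[L_S(\tilde{W}_T)-L_S(W_T)]$ is correct, but the Hessian identification is not established as stated. The heat-equation identity gives $\E[L_S(W_T+\Delta_T)-L_S(W_T)]=\tfrac12\int_0^{\sum_t\sigma_t^2}\E\brk*{\tr\nabla^2 L_S(W_T+\sqrt{s}\,Z)}\dif s$, i.e.\ the \emph{empirical} Hessian evaluated along the smoothed path. Passing to $\tfrac12\sum_t\sigma_t^2\,\E_{W_T}[\H(W_T)]$ requires both moving the evaluation point back to $W_T$ and replacing $\nabla^2 L_S$ by $\nabla^2 L$; the latter is not a free swap because $W_T$ depends on $S$, so $\E_{S,W_T}[\tr\nabla^2 L_S(W_T)]\ne\E_{W_T}\E_Z[\tr H_{W_T}(Z)]$ in general. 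Your proposed remedy — absorbing the remainder via the hypothesis $L(W_T)\le\E_{\Delta_T}[L(W_T+\Delta_T)]$ — cannot work: that hypothesis concerns the population risk and is already spent in the very first inequality, whereas the remainder lives in the empirical-risk term. A self-contained proof of the lemma as written needs either an exactly (locally) quadratic regime, a third-derivative/smoothness control, or an additional flatness-type assumption on $L_S$, which is precisely the approximation made in the cited source and should be stated explicitly rather than waved through.
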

Again, the above mutual information bound relies on $d$. We adopt the same strategy that was explored in Theorem \ref{thm:gen_sgld} to alleviate this issue:
\begin{theorem} \label{thm:gen_sgd}
    Assume $\kappa$ satisfies Assumption \ref{asmp:parzen} and under the same conditions of Lemma \ref{lm:gen_sgd}:
    \begin{gather}
        I_1\prn*{\tilde{W}_T;S} \le \sum_{t=1}^T\prn*{\frac{1}{2}\log\abs*{\frac{\eta_t^2}{\sigma_t^2}\V_t+I} + E_{\tilde{W}_t|\tilde{W}_{t-1}}^\kappa}, \nonumber \\
        I\prn*{\tilde{W}_T;S} \le \sum_{t=1}^T \frac{1}{2}\log\abs*{\frac{\eta_t^2}{\sigma_t^2}\V_t+I}. \label{eq:expr_sgd_3}
    \end{gather}
\end{theorem}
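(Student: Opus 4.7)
The plan is to apply the argument underlying Theorem \ref{thm:gen_sgld} to the auxiliary process $\tilde{W}_t$, exploiting the fact that the update
\[
\tilde{W}_t = \tilde{W}_{t-1} - \eta_t g(W_{t-1}, B_t) + \tilde{\xi}_t,\qquad \tilde{\xi}_t \sim N(0, \sigma_t^2 I),
\]
has exactly the Gaussian-noise form of the SGLD iteration in eq.(\ref{eq:SGLD}). Because the bound in Theorem \ref{thm:gen_sgld} was derived from this per-step noise structure together with the definition of the gradient covariance $\V_t$, the same machinery should port over, with $W_{t-1}$ rather than $\tilde{W}_{t-1}$ serving as the linearization point for the stochastic gradient.

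I would proceed in three steps. First, using the chain rule for kernelized mutual information (Property \ref{prop:cond_mi}) together with the data processing inequality (Property \ref{prop:data_proc}), I would decompose
\[
I_1(\tilde{W}_T; S) \le \sum_{t=1}^T I_1(\tilde{W}_t; B_t \mid \tilde{W}_{t-1}, W_{t-1}),
\]
where the extra conditioning on $W_{t-1}$ is harmless since $W_{t-1}$ is a deterministic function of $(W_0, B_{1:t-1})$ and independent of the noises $\tilde{\xi}_{t:T}$; this restores the Markov-like structure that the $\tilde{W}$-chain lacks on its own.

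Second, for each term I would analyze the conditional law of $\tilde{W}_t$: given $(\tilde{W}_{t-1}, W_{t-1}, B_t)$, $\tilde{W}_t$ is isotropic Gaussian with covariance $\sigma_t^2 I$, whereas given only $(\tilde{W}_{t-1}, W_{t-1})$ the combined randomness of $B_t$ and $\tilde{\xi}_t$ produces covariance $\eta_t^2 \V_t + \sigma_t^2 I$. Upper-bounding the marginal kernelized R\'enyi's entropy by the Gaussian entropy of matching covariance (invoking Property \ref{prop:ke_bound} to absorb the Shannon-vs-kernelized gap into the additive term $E_{\tilde{W}_t \mid \tilde{W}_{t-1}}^\kappa$), and lower-bounding the fully-conditioned one by the Gaussian entropy of $\sigma_t^2 I$, the difference collapses exactly to $\tfrac{1}{2}\log\bigl|\tfrac{\eta_t^2}{\sigma_t^2}\V_t + I\bigr|$. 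Summing over $t$ yields the first claim.

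The Shannon bound in eq.(\ref{eq:expr_sgd_3}) then follows either by taking $\sigma_\kappa \to 0$ in the kernelized bound, where Proposition \ref{prop:finite} drives $E_{\tilde{W}_t \mid \tilde{W}_{t-1}}^\kappa \to 0$, or by rerunning the argument directly with Shannon's entropy and the classical fact that Gaussians maximize differential entropy under a covariance constraint. The main obstacle I anticipate is the bookkeeping around the mismatch between $W_{t-1}$ (where the gradient is evaluated) and $\tilde{W}_{t-1}$ (the auxiliary state), which breaks the Markov property of the $\tilde{W}$-chain alone; once the joint conditioning on $(\tilde{W}_{t-1}, W_{t-1})$ is in place, the remainder reduces to routine Gaussian mutual-information computations mirroring those of Theorem \ref{thm:gen_sgld}, and no fundamentally new idea beyond that theorem appears to be needed.
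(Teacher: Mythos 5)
Your outline follows essentially the same route as the paper: the paper first proves an auxiliary decomposition (Lemma \ref{lm:mi_bound2}) giving $I_1(\tilde{W}_T;S)\le\sum_t I_1(\tilde{W}_t;B_t\mid\tilde{W}_{t-1})$, and then applies Lemma \ref{lm:mi_bound1} with $Y=\tilde{W}_{t-1}$, $\Delta=-\Delta_{t-1}$, which conditions on exactly the same information as your pair $(\tilde{W}_{t-1},W_{t-1})$ and finishes with the Gaussian maximum-entropy bound (Lemma \ref{lm:max_ke}) and the $\sigma_\kappa\to 0$ limit, just as you propose.

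Two points in your sketch are imprecise, though both are repaired by precisely the paper's Lemma \ref{lm:mi_bound1}. First, the inequality $I_1(\tilde{W}_T;S)\le\sum_t I_1(\tilde{W}_t;B_t\mid\tilde{W}_{t-1},W_{t-1})$ does not follow from a bare chain rule plus data processing: a naive chain over the joint trajectory either puts $W_t$ (a deterministic function of $(W_{t-1},B_t)$) into the first argument or conditions on future noise, and both make the per-step terms degenerate. The correct mechanism is to decompose conditioning only on $\tilde{W}_{t-1}$ and then add $\Delta_{t-1}=\tilde{W}_{t-1}-W_{t-1}$ to the first argument and transfer it to the conditioning using $I_1(\Delta_{t-1};B_t\mid\tilde{W}_{t-1})=0$; the operative independence is between the accumulated noise and the fresh batch $B_t$, not, as you state, between $W_{t-1}$ and the future noises $\tilde{\xi}_{t:T}$. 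Second, conditional on $(\tilde{W}_{t-1},W_{t-1})$ the relevant covariance is $\eta_t^2\Cov[g(W_{t-1},B_t)\mid W_{t-1}]+\sigma_t^2 I$, which varies with $W_{t-1}$; passing to the unconditional $\V_t$ in the stated bound requires Jensen's inequality for the concave $\log\det$ together with the law of total covariance, as carried out in eqs.\ (\ref{eq:mi_bound2_3})--(\ref{eq:mi_bound2_4}) of the paper's proof. With these two steps made explicit, your argument coincides with the paper's.
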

\begin{remark}
    Theorem \ref{thm:gen_sgd} establishes information-theoretic generalization bound for the SGD algorithm within the framework of kernelized R\'enyi's entropy, where eq.(\ref{eq:expr_sgd_3}) corresponds to the limit case $\sigma_\kappa \rightarrow 0$. We highlight that our result upper bounds the generalization error by gradient covariance, which is strictly tighter than that of Lemma \ref{lm:gen_sgd} since $\theta_c(\V_t) \le \theta_v(V_t)$ as shown in Proposition \ref{prop:tight}. Note that the bounds in Theorem \ref{thm:gen_sgld} and \ref{thm:gen_sgd} could be further improved by taking into account higher order moments of the stochastic gradient, but this would impose a significant computational burden because the $s$-th moment tensor is of size $d^s$, making the potential improvement less meaningful in practice.
\end{remark}

\section{Empirical Studies} \label{sec:expr}
In this section, we visualize the computable generalization bounds for SGLD/SGD derived in the previous sections, and verify the tightness of previous results as well as our improved ones in Theorem \ref{thm:gen_sgld} and \ref{thm:gen_sgd}. For simplicity, we use constant values for learning rates $\eta_t = \eta$ and Gaussian noises $\sigma_t = \sigma$. We ignore the expected discrepancy $E_X^\kappa$ terms in computation since they tend to $0$ by taking appropriate $\sigma_\kappa$ values, and are in fact not computable. Advanced tuning techniques such as momentum, weight decay, and batch normalization are not adopted. To compute $R$ in Theorem \ref{thm:gen_mi}, we collect the loss values of each batch in each epoch and let $R = \frac{1}{2}[\max_t\ell(W_{t-1},B_t) - \min_t\ell(W_{t-1},B_t)]$. To compute $\V_t$ and $V_t$ in the mutual information upper bounds above, we use the BackPack Pytorch library \cite{dangel2020backpack} to acquire an empirical estimate of $\V_t$ and $V_t$ from each batch input of data. To compute $\H(W_T)$ in Theorem \ref{thm:gen_sgd}, we use the PyHessian library to acquire the Hessian matrix. Each experiment is repeated $100$ times to acquire i.i.d. samples of $W_t$ and $B_t$, which are then used to construct the kernel matrix $K$ in eq.(\ref{eq:concentrate}) to compute the kernelized R\'enyi's mutual information in our information-theoretic upper bounds.

\subsection{Synthetic Data}
Our first experiment incorporates a simple linear regression problem
\begin{equation*}
    y = \w^\top\x + \varepsilon,
\end{equation*}
where $\x$ is the $10$ dimensional input vector, $y$ is the regression target, $\w$ is the linear coefficient and $\varepsilon$ is some zero-mean random noise. We train an MLP with one hidden layer of width $10$. For each of the $100$ independent training processes, we generate an independent training dataset of size $n = 100$ using the same strategy. The comparison of existing theoretical generalization bounds against the true generalization gap is shown in the left side of Figure \ref{fig:regress}, in which we denote the information-theoretic bounds by the corresponding key information quantities: $I_1(W_T;S)$ by IWS, and the summation of $I_1(W_t;B_t|W_{t-1})$ by IWB$\vert$W. Note that although eq.(\ref{eq:expr_sgld_2}) is derived under the context of the SGLD algorithm, it still holds for the SGD algorithm since the only prerequisite of this inequality is the Markov chain relationship $S \rightarrow \{B_t\}_{t=1}^T \rightarrow \{W_t\}_{t=1}^T$.

As can be seen, IWS is always smaller than IWB$\vert$W, and both of them consistently fall in the interval between the curve of the True Gap and the bound of eq.(\ref{eq:expr_sgld_3}) for SGLD (or eq.(\ref{eq:expr_sgd_3}) for SGD), indicating that our approximations successfully reflect the actual behavior of these information-theoretic quantities $I(W_t;S)$ and $I(W_t;B_t|W_{t-1})$. We can gain several important insights from the visualization results: 

1) The gap between IWS and the true generalization gap indicates that the sub-gaussian constant $R$ of the loss function $\ell(w, Z)$ with respect to $Z$ is over-estimated. It is natural to assume that well-trained DNNs yield lower loss than a random initialization, and thus the constant $R$ is expected to decrease along with the training process. This observation could be adopted to further tighten the bound in eq.(\ref{eq:smi_bound}) and Theorem \ref{thm:gen_mi}. 

2) IWS quickly reaches the peak and then turns to decrease along with the training process (one can refer to the Appendix for more details), whose behavior matches that of the True Gap curve, while IWB$\vert$W keeps increasing since each $I(W_t;B_t|W_{t-1})$ is always strictly positive. This observation indicates that although the model always learns some knowledge from a new batch (i.e. $I(W_t;B_t|W_{t-1}) \ge 0$), the total information that $W$ contains about the dataset $S$ (i.e. $I(W_t;S)$) quickly reaches the upper limit: the network is actually forgetting information that learned previously. This ``forget" behavior is not captured by the current work of information-theoretic generalization bounds, resulting in a gradually increasing gap between IWS and IWB$\vert$W when the number of training epochs grows large. 

3) The remaining gap between IWB$\vert$W and our improved bounds indicates that current bounds are still far from optimal even if the full correlation of the noisy gradient is considered. This observation is supported by the recent works \cite{gurbuzbalaban2021heavy,camuto2021asymmetric}, who claim that the stochastic gradient vector generated by SGD is heavy-tailed and their entropy is significantly over-estimated by assuming Gaussian distributions. Another conjecture is that some implicit self-regularization mechanisms exist in DNNs \cite{mahoney2019traditional,martin2021implicit}, resulting in the information captured by the weights being much lower than their theoretical capacity.

The right side of Figure \ref{fig:regress} provides an intuitive comparison between the upper bounds $\theta_c$ (gradient covariance matrix) in Theorem \ref{thm:gen_sgld} and $\theta_v$ (gradient variance) in Lemma \ref{lm:gen_sgld2}. It can be seen that $\theta_v(V_t)$ is always larger than $\theta_c(\V_t)$, especially at the beginning of the training process. This observation verifies our claim in Proposition \ref{prop:tight}.

\subsection{Real-world Data}
We then visualize our computable generalization bounds on real-world datasets to demonstrate the scalability of kernelized R\'enyi's entropy. Following the experiment settings in \cite{wang2021analyzing}, we train an MLP with a wider hidden layer on MNIST and a 4-layer CNN on CIFAR10. In each of the $100$ individual training processes, a portion of data pairs is uniformly sampled from the entire dataset as the training dataset to simulate the randomness of $S$. Detailed experiment settings can be found in Appendix.

Similarly, the comparisons between different generalization bounds are reported in Figure \ref{fig:realdata}. As can be seen, the curve of IWB$\vert$W still consistently falls in the correct interval between adjacent bounds, and perfectly reflect the increasing trend of the gradient covariance upper bound. The gradient variance bound in eq.(\ref{eq:expr_sgld_1_2}) is still grossly over-estimated compared to IWB$\vert$W. For comparison, our tightened bound of eq.(\ref{eq:expr_sgld_3}) (or eq.(\ref{eq:expr_sgd_3})) covers a large portion of this gap between the curves of IWB$\vert$W and Lemma \ref{lm:gen_sgld2} for SGLD (or Lemma \ref{lm:gen_sgd} for SGD). Moreover, it can be seen that during the training process of the SGLD algorithm, the curve of $\theta_c$ quickly stops increasing and starts to decrease in the latter epochs, while the curve of $\theta_v$ is consistently increasing along with the whole training process. This observation further verifies the tightness of our improved generalization bounds.

\begin{figure}[t]
    \centering
    \includegraphics[width=0.45\textwidth]{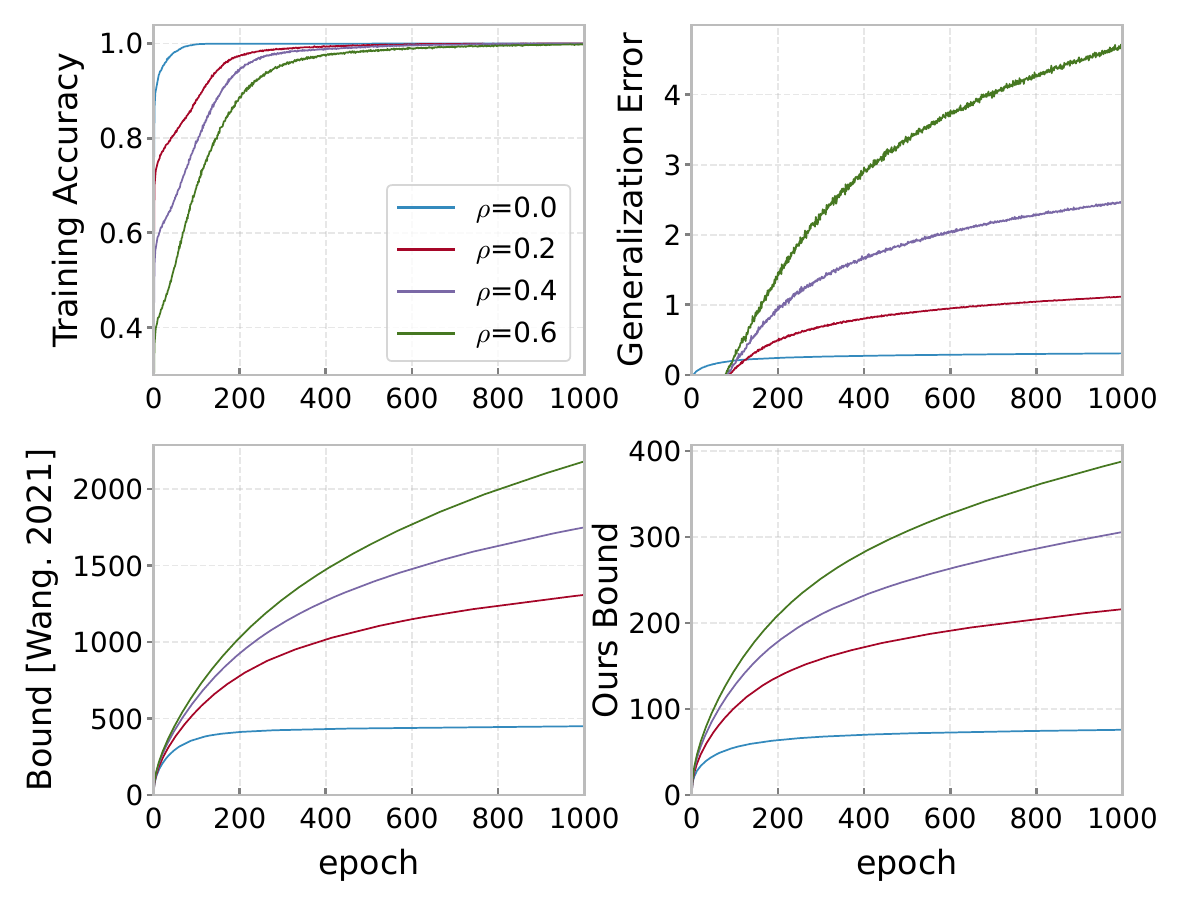}
    \caption{Random label experiment on MNIST.}
    \label{fig:label_mnist}
\end{figure}

Next, we conduct random label experiments to demonstrate the tightness of our improved generalization bounds under different levels of label noises. We keep the same experiment settings as above, while randomly replacing the training labels with noisy labels with a certain probability specified by the hyper-parameter $\rho$. As shown in Figure \ref{fig:label_mnist}, higher levels of label noise lead to higher generalization errors. While both the bound of \cite{wang2021generalization} and ours successfully reflect the trend of the true generalization gap alongside the training process, our bound is $5$ more times tighter than theirs in Lemma \ref{lm:gen_sgd}. We refer the readers to the Appendix for extra experimental results on CIFAR10 and varying model sizes. 

\section{Conclusion}
In this work, we address the common issue that Shannon's information quantities are intractable for estimation in practice. This possesses extra obstacles for information-theoretic generalization analysis, since it is impossible to evaluate the tightness of any intermediate information quantities ($I(W;S)$, $I(W_t;B_t|W_{t-1})$) used by previous generalization bounds \cite{wang2021analyzing,wang2021generalization}, resulting in the corresponding upper bounds (Lemma \ref{lm:gen_sgld2} and \ref{lm:gen_sgd}) being severely over-estimated. To address this issue, we propose an alternative measure of entropy named kernelized R\'enyi's entropy, which could be directly estimated regardless of the dimensionality, and still be compatible with existing generalization analysis frameworks. We successfully apply it to derive and visualize information-theoretic generalization bounds for noisy and iterative learning algorithms, indicating multiple potential directions for further improvement. We then prove tightened bounds for SGLD and SGD based on one of these findings, demonstrating significant improvement over existing works on multiple deep learning benchmarks.

\appendix

\section*{Acknowledgments}
This work was supported by National Key Research and Development Program of China (2021ZD0110700), National Natural Science Foundation of China (62106191, 12071166, 62192781, 61721002), the Research Council of Norway (RCN) under grant 309439, Innovation Research Team of Ministry of Education (IRT\_17R86), Project of China Knowledge Centre for Engineering Science and Technology and Project of Chinese Academy of Engineering (The Online and Offline Mixed Educational Service System for The Belt and Road Training in MOOC China).

\bibliographystyle{named}
\bibliography{references}

\begin{thebibliography}{}

\bibitem[\protect\citeauthoryear{Arora \bgroup \em et al.\egroup
  }{2018}]{arora2018stronger}
Sanjeev Arora, Rong Ge, Behnam Neyshabur, and Yi~Zhang.
\newblock Stronger generalization bounds for deep nets via a compression
  approach.
\newblock In {\em International Conference on Machine Learning}, pages
  254--263. PMLR, 2018.

\bibitem[\protect\citeauthoryear{Bartlett \bgroup \em et al.\egroup
  }{2017}]{bartlett2017spectrally}
Peter~L Bartlett, Dylan~J Foster, and Matus~J Telgarsky.
\newblock Spectrally-normalized margin bounds for neural networks.
\newblock {\em Advances in neural information processing systems}, 30, 2017.

\bibitem[\protect\citeauthoryear{Bassily \bgroup \em et al.\egroup
  }{2020}]{bassily2020stability}
Raef Bassily, Vitaly Feldman, Crist{\'o}bal Guzm{\'a}n, and Kunal Talwar.
\newblock Stability of stochastic gradient descent on nonsmooth convex losses.
\newblock {\em Advances in Neural Information Processing Systems},
  33:4381--4391, 2020.

\bibitem[\protect\citeauthoryear{Bottou \bgroup \em et al.\egroup
  }{2018}]{bottou2018optimization}
L{\'e}on Bottou, Frank~E Curtis, and Jorge Nocedal.
\newblock Optimization methods for large-scale machine learning.
\newblock {\em Siam Review}, 60(2):223--311, 2018.

\bibitem[\protect\citeauthoryear{Camuto \bgroup \em et al.\egroup
  }{2021}]{camuto2021asymmetric}
Alexander Camuto, Xiaoyu Wang, Lingjiong Zhu, Chris Holmes, Mert Gurbuzbalaban,
  and Umut Simsekli.
\newblock Asymmetric heavy tails and implicit bias in gaussian noise
  injections.
\newblock In {\em International Conference on Machine Learning}, pages
  1249--1260. PMLR, 2021.

\bibitem[\protect\citeauthoryear{Dangel \bgroup \em et al.\egroup
  }{2020}]{dangel2020backpack}
Felix Dangel, Frederik Kunstner, and Philipp Hennig.
\newblock Backpack: Packing more into backprop.
\newblock In {\em International Conference on Learning Representations}, 2020.

\bibitem[\protect\citeauthoryear{Giraldo \bgroup \em et al.\egroup
  }{2014}]{giraldo2014measures}
Luis Gonzalo~Sanchez Giraldo, Murali Rao, and Jose~C Principe.
\newblock Measures of entropy from data using infinitely divisible kernels.
\newblock {\em IEEE Transactions on Information Theory}, 61(1):535--548, 2014.

\bibitem[\protect\citeauthoryear{Gurbuzbalaban \bgroup \em et al.\egroup
  }{2021}]{gurbuzbalaban2021heavy}
Mert Gurbuzbalaban, Umut Simsekli, and Lingjiong Zhu.
\newblock The heavy-tail phenomenon in sgd.
\newblock In {\em International Conference on Machine Learning}, pages
  3964--3975. PMLR, 2021.

\bibitem[\protect\citeauthoryear{Hardt \bgroup \em et al.\egroup
  }{2016}]{hardt2016train}
Moritz Hardt, Ben Recht, and Yoram Singer.
\newblock Train faster, generalize better: Stability of stochastic gradient
  descent.
\newblock In {\em International conference on machine learning}, pages
  1225--1234. PMLR, 2016.

\bibitem[\protect\citeauthoryear{Harutyunyan \bgroup \em et al.\egroup
  }{2021}]{harutyunyan2021information}
Hrayr Harutyunyan, Maxim Raginsky, Greg Ver~Steeg, and Aram Galstyan.
\newblock Information-theoretic generalization bounds for black-box learning
  algorithms.
\newblock {\em Advances in Neural Information Processing Systems},
  34:24670--24682, 2021.

\bibitem[\protect\citeauthoryear{Lei \bgroup \em et al.\egroup
  }{2021a}]{lei2021generalization}
Yunwen Lei, Ting Hu, and Ke~Tang.
\newblock Generalization performance of multi-pass stochastic gradient descent
  with convex loss functions.
\newblock {\em Journal of Machine Learning Research}, 22:1--41, 2021.

\bibitem[\protect\citeauthoryear{Lei \bgroup \em et al.\egroup
  }{2021b}]{lei2021stability}
Yunwen Lei, Zhenhuan Yang, Tianbao Yang, and Yiming Ying.
\newblock Stability and generalization of stochastic gradient methods for
  minimax problems.
\newblock In {\em International Conference on Machine Learning}, pages
  6175--6186. PMLR, 2021.

\bibitem[\protect\citeauthoryear{Li and Liu}{2022}]{li2022high}
Shaojie Li and Yong Liu.
\newblock High probability guarantees for nonconvex stochastic gradient descent
  with heavy tails.
\newblock In {\em International Conference on Machine Learning}, pages
  12931--12963. PMLR, 2022.

\bibitem[\protect\citeauthoryear{Mahoney and
  Martin}{2019}]{mahoney2019traditional}
Michael Mahoney and Charles Martin.
\newblock Traditional and heavy tailed self regularization in neural network
  models.
\newblock In {\em International Conference on Machine Learning}, pages
  4284--4293. PMLR, 2019.

\bibitem[\protect\citeauthoryear{Martin and Mahoney}{2021}]{martin2021implicit}
Charles~H Martin and Michael~W Mahoney.
\newblock Implicit self-regularization in deep neural networks: Evidence from
  random matrix theory and implications for learning.
\newblock {\em J. Mach. Learn. Res.}, 22(165):1--73, 2021.

\bibitem[\protect\citeauthoryear{Negrea \bgroup \em et al.\egroup
  }{2019}]{negrea2019information}
Jeffrey Negrea, Mahdi Haghifam, Gintare~Karolina Dziugaite, Ashish Khisti, and
  Daniel~M Roy.
\newblock Information-theoretic generalization bounds for sgld via
  data-dependent estimates.
\newblock {\em Advances in Neural Information Processing Systems}, 32, 2019.

\bibitem[\protect\citeauthoryear{Neu \bgroup \em et al.\egroup
  }{2021}]{neu2021information}
Gergely Neu, Gintare~Karolina Dziugaite, Mahdi Haghifam, and Daniel~M Roy.
\newblock Information-theoretic generalization bounds for stochastic gradient
  descent.
\newblock In {\em Conference on Learning Theory}, pages 3526--3545. PMLR, 2021.

\bibitem[\protect\citeauthoryear{Neyshabur \bgroup \em et al.\egroup
  }{2018}]{neyshabur2018pac}
Behnam Neyshabur, Srinadh Bhojanapalli, and Nathan Srebro.
\newblock A pac-bayesian approach to spectrally-normalized margin bounds for
  neural networks.
\newblock In {\em International Conference on Learning Representations}, 2018.

\bibitem[\protect\citeauthoryear{Pensia \bgroup \em et al.\egroup
  }{2018}]{pensia2018generalization}
Ankit Pensia, Varun Jog, and Po-Ling Loh.
\newblock Generalization error bounds for noisy, iterative algorithms.
\newblock In {\em 2018 IEEE International Symposium on Information Theory
  (ISIT)}, pages 546--550. IEEE, 2018.

\bibitem[\protect\citeauthoryear{Wang and Mao}{2021}]{wang2021generalization}
Ziqiao Wang and Yongyi Mao.
\newblock On the generalization of models trained with sgd:
  Information-theoretic bounds and implications.
\newblock In {\em International Conference on Learning Representations}, 2021.

\bibitem[\protect\citeauthoryear{Wang \bgroup \em et al.\egroup
  }{2021}]{wang2021analyzing}
Hao Wang, Yizhe Huang, Rui Gao, and Flavio Calmon.
\newblock Analyzing the generalization capability of sgld using properties of
  gaussian channels.
\newblock {\em Advances in Neural Information Processing Systems},
  34:24222--24234, 2021.

\bibitem[\protect\citeauthoryear{Xu and Raginsky}{2017}]{xu2017information}
Aolin Xu and Maxim Raginsky.
\newblock Information-theoretic analysis of generalization capability of
  learning algorithms.
\newblock {\em Advances in Neural Information Processing Systems}, 30, 2017.

\bibitem[\protect\citeauthoryear{Yang \bgroup \em et al.\egroup
  }{2019}]{yang2019fast}
Jun Yang, Shengyang Sun, and Daniel~M Roy.
\newblock Fast-rate pac-bayes generalization bounds via shifted rademacher
  processes.
\newblock {\em Advances in Neural Information Processing Systems}, 32, 2019.

\bibitem[\protect\citeauthoryear{Yang \bgroup \em et al.\egroup
  }{2021a}]{yang2021stability}
Zhenhuan Yang, Yunwen Lei, Siwei Lyu, and Yiming Ying.
\newblock Stability and differential privacy of stochastic gradient descent for
  pairwise learning with non-smooth loss.
\newblock In {\em International Conference on Artificial Intelligence and
  Statistics}, pages 2026--2034. PMLR, 2021.

\bibitem[\protect\citeauthoryear{Yang \bgroup \em et al.\egroup
  }{2021b}]{yang2021simple}
Zhenhuan Yang, Yunwen Lei, Puyu Wang, Tianbao Yang, and Yiming Ying.
\newblock Simple stochastic and online gradient descent algorithms for pairwise
  learning.
\newblock {\em Advances in Neural Information Processing Systems},
  34:20160--20171, 2021.

\bibitem[\protect\citeauthoryear{Yu \bgroup \em et al.\egroup
  }{2019}]{yu2019multivariate}
Shujian Yu, Luis Gonzalo~Sanchez Giraldo, Robert Jenssen, and Jose~C Principe.
\newblock Multivariate extension of matrix-based r{\'e}nyi's $\alpha$-order
  entropy functional.
\newblock {\em IEEE transactions on pattern analysis and machine intelligence},
  42(11):2960--2966, 2019.

\bibitem[\protect\citeauthoryear{Zhang \bgroup \em et al.\egroup
  }{2021}]{zhang2021understanding}
Chiyuan Zhang, Samy Bengio, Moritz Hardt, Benjamin Recht, and Oriol Vinyals.
\newblock Understanding deep learning (still) requires rethinking
  generalization.
\newblock {\em Communications of the ACM}, 64(3):107--115, 2021.

\bibitem[\protect\citeauthoryear{Zhou \bgroup \em et al.\egroup
  }{2018}]{zhou2018non}
Wenda Zhou, Victor Veitch, Morgane Austern, Ryan~P Adams, and Peter Orbanz.
\newblock Non-vacuous generalization bounds at the imagenet scale: a
  pac-bayesian compression approach.
\newblock In {\em Advances in Neural Information Processing Systems}, 2018.

\end{thebibliography}

\newpage
\onecolumn

\begin{center}
    \LARGE \bf
    Appendix of ``Understanding the Generalization Ability of Deep Learning Algorithms: A Kernelized R\'enyi's Entropy Perspective''
\end{center}
\vspace{10pt}

\section{Proof of Section \ref{sec:kre}}

\subsection{Proof of Proposition \ref{prop:analogue}}
\begin{restateproposition}{\ref{prop:analogue}}[Restate]
    Given linear operator $G_X$ defined as above on random variable $X \in \mathcal{X}$ with PDF $p_X$, we have
    \begin{equation*}
        \lim_{\alpha \rightarrow 1} \frac{1}{1-\alpha} \log \tr(G_X^\alpha) = -\tr(G_X \log G_X) = -\iint_{\mathcal{X}^2} p_X(\x) \log p_X(\x^\prime) \kappa^2(\x, \x^\prime) \dif \x \dif \x^\prime.
    \end{equation*}
\end{restateproposition}
\begin{proof}
    Let $\{\psi_i\}_{i=1}^{N_\mathcal{H}}$ be a complete orthogonal basis for $\mathcal{H}$. Then by L'Hopital's rule:
    \begin{align*}
        \lim_{\alpha \rightarrow 1} \frac{1}{1-\alpha} \log \tr(G_X^\alpha) &= \lim_{\alpha \rightarrow 1} -\frac{\frac{\partial}{\partial \alpha} \tr(G_X^\alpha)}{\tr(G_X^\alpha)} = -\tr(G_X \log G_X) \\
        &= -\sum_{i=1}^{N_\mathcal{H}} \ang{G_X \psi_i, \log G_X \psi_i} \\
        &= -\sum_{i=1}^{N_\mathcal{H}} \int_\mathcal{X} p_X(\x) \ang{\psi_i, \phi(\x)} \ang{\phi(\x), \log G_X \psi_i} \dif \x \\
        &= -\int_\mathcal{X} p_X(\x) \ang{\log G_X \phi(\x), \sum_{i=1}^{N_\mathcal{H}} \ang{\psi_i, \phi(\x)} \psi_i} \dif \x \\
        &= -\int_\mathcal{X} p_X(\x) \ang{\log G_X \phi(\x), \phi(\x)} \dif \x \\
        &= -\int_\mathcal{X} p_X(\x) \ang{\int_\mathcal{X} \log p_X(\x^\prime) \phi(\x^\prime) \ang{\phi(\x^\prime), \phi(\x)} \dif \x^\prime, \phi(\x)} \dif \x \\
        &= -\iint_{\mathcal{X}^2} p_X(\x) \log p_X(\x^\prime) \ang{\phi(\x^\prime), \phi(\x)} \ang{\phi(\x^\prime), \phi(\x)} \dif \x \dif \x^\prime \\
        &= -\iint_{\mathcal{X}^2} p_X(\x) \log p_X(\x^\prime) \kappa^2(\x, \x^\prime) \dif \x \dif \x^\prime.
    \end{align*}
\end{proof}

\subsection{Proof of Proposition \ref{prop:concentrate}}
\begin{restateproposition}{\ref{prop:concentrate}}[Restate]
    Let $\{\x_i\}_{i=1}^m$ be i.i.d. data points sampled from $X$, and let $K \in \mathbb{R}^{m \times m}$ be the kernel matrix constructed by $K_{ij} = \frac{1}{m}\kappa(\x_i,\x_j)$. Then with confidence $1 - \delta$,
    \begin{equation*}
        \abs{S_1(X) - \hat{S}_1(X)} \le \frac{9 C_\kappa \sqrt{2\log\frac{2}{\delta}}}{\sqrt[3]{m}},
    \end{equation*}
    where $\hat{S}_1(X) = -C_\kappa \tr(K \log K)$.
\end{restateproposition}
\begin{proof}
    Let $\lambda_i$ and $\mu_i$, $i \in [1,m]$ be the eigenvalues of $G_X$ and $\hat{G}_X$ respectively. Following the proof of Theorem 6.2 in \cite{giraldo2014measures} while taking $\varphi(x) = \abs{x}$, we have that with probability $1 - \delta$,
    \begin{equation*}
        \sum_{i=1}^m \abs{\lambda_i - \mu_i} \le C \sqrt{\frac{2\log\frac{2}{\delta}}{m}},
    \end{equation*}
    where $C = \max_{\x \in \mathcal{X}} \kappa(\x, \x) = 1$. Then for any $s > 0$, we have
    \begin{align}
        \abs*{\tr(G_X \log G_X) - \tr(\hat{G}_X \log \hat{G}_X)} &= \abs*{\sum_{i=1}^m \lambda_i \log\lambda_i - \sum_{i=1}^m \mu_i \log\mu_i} \le \sum_{i=1}^m \abs{\lambda_i \log\lambda_i - \mu_i \log\mu_i} \nonumber \\
        &\le \sum_{i=1}^m \max\prn*{-\abs{\lambda_i - \mu_i}\log\abs{\lambda_i - \mu_i}, -(1-\abs{\lambda_i - \mu_i})\log(1-\abs{\lambda_i - \mu_i})} \label{eq:concentrate1} \\
        &\le \sum_{i=1}^m -\sqrt{\frac{2\log\frac{2}{\delta}}{m^3}} \log \sqrt{\frac{2\log\frac{2}{\delta}}{m^3}} = \sqrt{\frac{2\log\frac{2}{\delta}}{m}} \log \sqrt{\frac{m^3}{2\log\frac{2}{\delta}}} \nonumber \\
        &\le s \sqrt{\frac{2\log\frac{2}{\delta}}{m}} \prn*{\frac{m^3}{2\log\frac{2}{\delta}}}^{\frac{1}{2s}} \le sm^{\frac{3}{2s}-\frac{1}{2}}\sqrt{2\log\frac{2}{\delta}} \label{eq:concentrate2},
    \end{align}
    where (\ref{eq:concentrate1}) is maximized when $\abs{\lambda_1 - \mu_1} = \cdots = \abs{\lambda_n - \mu_n} = \frac{1}{m}\sqrt{\frac{2\log\frac{2}{\delta}}{m}}$, and (\ref{eq:concentrate2}) follows by the fact that for any $t > 0$, $\log x \le x^t/t$. By taking $s = 9$, we have
    \begin{align*}
        \abs{S_1(X) - \hat{S}_1(X)} &= C_\kappa \abs*{\tr(G_X \log G_X) - \tr(K \log K)} \\
        &= C_\kappa \abs*{\tr(G_X \log G_X) - \tr(\hat{G}_X \log \hat{G}_X)} \le \frac{9 C_\kappa \sqrt{2\log\frac{2}{\delta}}}{\sqrt[3]{m}}.
    \end{align*}
\end{proof}

\subsection{Proof of Proposition \ref{prop:finite}}
\begin{restateproposition}{\ref{prop:finite}}[Restate]
    Let $\kappa(\x, \x^\prime) = \mathbbm{1}_{\norm{\x - \x^\prime} < c}$. Assume that the PDF $p_X(\cdot)$ satisfies:
    \begin{itemize}
        \item Continuous: $\forall \x \in \mathcal{X}$, $\lim_{\x^\prime \rightarrow \x} p_X(\x^\prime) = p_X(\x)$;
        \item Positive: $\forall \x \in \mathcal{X}$, $\lim_{\x^\prime \rightarrow \x} p_X(\x^\prime) > 0$;
    \end{itemize}
    then we have $\lim_{c \rightarrow 0} E_X^\kappa \rightarrow 0$ and $\lim_{c \rightarrow 0} {E_X^\kappa}^\prime \rightarrow 0$.
\end{restateproposition}
\begin{proof}
    For any PDF $p(\cdot)$ defined on $\mathcal{X}$:
    \begin{align*}
        \lim_{c \rightarrow 0} E_X^\kappa(p) &= \lim_{c \rightarrow 0} C_\kappa \iint_{\mathcal{X}^2} p(\x) \brk*{\log p_X(\x) - \log p_X(\x^\prime)} \kappa^2(\x, \x^\prime) \dif \x \dif \x^\prime \\
        &= \lim_{c \rightarrow 0} C_\kappa \iint_{\mathcal{X}^2} p(\x) \brk*{\log p_X(\x) - \log p_X(\x + \x^\prime)} \kappa^2(0, \x^\prime) \dif \x \dif \x^\prime \\
        &= C_\kappa \int_\mathcal{X} \brc*{\lim_{c \rightarrow 0} \int_{\norm{\x - \x^\prime}<c} p(\x) \brk*{\log p_X(\x) - \log p_X(\x + \x^\prime)} \dif \x} \dif \x^\prime \\
        &= C_\kappa \int_\mathcal{X} 0 \dif \x^\prime = 0.
    \end{align*}
\end{proof}

\subsection{Proof of Proposition \ref{prop:property}}
\begin{restateproposition}{\ref{prop:property}}[Restate]
    Let $X, X^\prime \in \mathcal{X}$, $Y \in \mathcal{Y}$, $Z \in \mathcal{Z}$ be continuous random variables with probability measures $P_X$, $P_{X^\prime}$, $P_Y$ and $P_Z$ respectively. Then
    \begin{enumerate}
        \item $H(X) \le S_1(X) \le H(X) + {E_X^\kappa}^\prime$.
        \item $D_1(P_X \parallel P_{X^\prime}) \ge -E_X^\kappa$.
        \item $I_1(X; Y) = D_1(P_{X,Y} \parallel P_X \otimes P_Y) \ge 0$.
        \item $I_1(X; Y) = S_1(X) + S_1(Y) - S_1(X, Y)$.
        \item $I_1(X;Y|Z) = I_1(X;Y,Z) - I_1(X;Z)$.
        \item Let $X, Y, Z$ form Markov chain $X \rightarrow Y \rightarrow Z$, then $I_1(X;Y) \ge I_1(X;Z)$ and $I_1(Y;Z) \ge I_1(X;Z)$.
    \end{enumerate}
\end{restateproposition}
\begin{proof}[Proof of Property \ref{prop:ke_bound}]
    \begin{align}
        S_1(X) - H(X) &= \int_\mathcal{X} p_X(\x) \log p_X(\x) \dif \x - C_\kappa \iint_{\mathcal{X}^2} p_X(\x) \log p_X(\x^\prime) \kappa^2(\x, \x^\prime) \dif \x \dif \x^\prime \nonumber\\
        &= \int_\mathcal{X} p_X(\x) \log p_X(\x) \dif \x - \int_\mathcal{X} p_X(\x) \prn*{C_\kappa \int_\mathcal{X} \log p_X(\x^\prime) \kappa^2(\x, \x^\prime) \dif \x^\prime} \dif \x \nonumber\\
        &\ge \int_\mathcal{X} p_X(\x) \log p_X(\x) \dif \x - \int_\mathcal{X} p_X(\x) \prn*{\log C_\kappa \int_\mathcal{X} p_X(\x^\prime) \kappa^2(\x, \x^\prime) \dif \x^\prime} \dif \x \label{eq:ke_lipschitz1}\\
        &= \int_\mathcal{X} p_X(\x) \log \frac{p_X(\x)}{C_\kappa \int_\mathcal{X} p_X(\x^\prime) \kappa^2(\x, \x^\prime) \dif \x^\prime} \dif \x \nonumber\\
        &= \KL(P_X \parallel Q_X) \ge 0, \nonumber
    \end{align}
    where (\ref{eq:ke_lipschitz1}) follows by Jensen's inequality, and $Q_X$ is the probability measure with PDF $q_X(\x) = C_\kappa \int_\mathcal{X} p_X(\x^\prime) \kappa^2(\x, \x^\prime) \dif \x^\prime$. Meanwhile,
    \begin{align*}
        S_1(X) &= -C_\kappa \iint_{\mathcal{X}^2} p_X(\x) \log p_X(\x^\prime) \kappa^2(\x, \x^\prime) \dif \x \dif \x^\prime \nonumber\\
        &= -\int_\mathcal{X} p_X(\x) \prn*{C_\kappa \int_\mathcal{X} \log p_X(\x^\prime) \kappa^2(\x, \x^\prime) \dif \x^\prime} \dif \x \nonumber\\
        &= -\int_\mathcal{X} p_X(\x) \log p_X(\x) \dif \x - \int_\mathcal{X} p_X(\x) \brk*{C_\kappa \int_\mathcal{X} \prn*{\log p_X(\x^\prime) - \log p_X(\x)} \kappa^2(\x, \x^\prime) \dif \x^\prime} \dif \x \nonumber\\
        &\le H(X) + {E_X^\kappa}^\prime. \nonumber
    \end{align*}
\end{proof}
\begin{proof}[Proof of Property \ref{prop:kl_positive}]
    Let $p$ and $q$ be the PDF of $X$ and $X^\prime$ respectively. Then consider the following functional on $q(\x)$:
    \begin{equation*}
        J(q) = C_\kappa \iint_{\mathcal{X}^2} p(\x) \log\frac{p(\x^\prime)}{q(\x^\prime)} \kappa^2(\x, \x^\prime) \dif \x \dif \x^\prime - \eta_0 \prn*{\int_\mathcal{X} q(\x) \dif \x - 1},
    \end{equation*}
    where $\eta_0$ is the Lagrange multiplier that ensure $q(\x)$ is a probability distribution. The divergence $D_1(P \parallel Q)$ attains an extremum when the functional derivative is equal to zero:
    \begin{equation*}
        \frac{\partial J}{\partial q} = -C_\kappa \int_\mathcal{X} \frac{p(\x)}{q(\x^\prime)} \kappa^2(\x, \x^\prime) \dif \x - \eta_0 = -\frac{C_\kappa \int_\mathcal{X} p(\x) \kappa^2(\x, \x^\prime) \dif \x}{q(\x^\prime)} - \eta_0 = 0,
    \end{equation*}
    which indicates that the minimizer $\hat{q}(\x^\prime)$ satisfies $\hat{q}(\x^\prime) \propto C_\kappa \int_\mathcal{X} p(\x) \kappa^2(\x, \x^\prime) \dif \x$. Combing with $\int_\mathcal{X} \hat{q}(\x^\prime) \dif \x^\prime = 1$, we have
    \begin{equation*}
        \hat{q}(\x^\prime) = C_\kappa \int_\mathcal{X} p(\x) \kappa^2(\x, \x^\prime) \dif \x.
    \end{equation*}
    Therefore,
    \begin{align}
        D_1(P \parallel Q) &\ge C_\kappa \iint_{\mathcal{X}^2} p(\x) \log\frac{p(\x^\prime)}{\hat{q}(\x^\prime)} \kappa^2(\x, \x^\prime) \dif \x \dif \x^\prime \nonumber \\
        &= \int_\mathcal{X} \prn*{C_\kappa \int_\mathcal{X} p(\x) \kappa^2(\x, \x^\prime) \dif \x} \log\frac{p(\x^\prime)}{\hat{q}(\x^\prime)} \dif \x^\prime \nonumber \\
        &\ge -\int_\mathcal{X} \hat{q}(\x^\prime) \prn*{C_\kappa \int_\mathcal{X} (\log p(\x^\prime) - \log p(\x)) \kappa^2(\x, \x^\prime) \dif \x} \dif \x^\prime \label{eq:kl_positive1} \\
        &\ge -E_X^\kappa. \nonumber
    \end{align}
    where (\ref{eq:kl_positive1}) follows by Jensen's inequality.
\end{proof}
\begin{proof}[Proof of Property \ref{prop:mi_kl}]
    The first equality directly follows from the definition of kernelized divergence (Definition \ref{def:kd}) and mutual information (Definition \ref{def:kmi}). The positiveness of $I_1(X; Y)$ follows by setting $n \rightarrow \infty$ in Proposition 4.1 of \cite{giraldo2014measures}.
\end{proof}
\begin{proof}[Proof of Property \ref{prop:mi_ke}]
    Notice that
    \begin{align*}
        S_1(X) &= -C_{\kappa_X} \iint_{\mathcal{X}^2} p_X(\x) \log p_X(\x^\prime) \kappa_X^2(\x, \x^\prime) \dif \x \dif \x^\prime \\
        &= -C_{\kappa_X} \iiint_{\mathcal{Y} \times \mathcal{X}^2} p_{X,Y}(\x, \y) \log p_X(\x^\prime) \kappa_X^2(\x, \x^\prime) \dif \x \dif \x^\prime \dif \y \\
        &= -C_{\kappa_X} C_{\kappa_Y} \iiiint_{\mathcal{Y}^2 \times \mathcal{X}^2} p_{X,Y}(\x, \y) \log p_X(\x^\prime) \kappa_X^2(\x, \x^\prime) \kappa_Y^2(\y, \y^\prime) \dif \x \dif \x^\prime \dif \y \dif \y^\prime.
    \end{align*}
    Similarly, we have $S_1(Y) = -C_{\kappa_X} C_{\kappa_Y} \iiiint_{\mathcal{Y}^2 \times \mathcal{X}^2} p_{X,Y}(\x, \y) \log p_Y(\y^\prime) \kappa_X^2(\x, \x^\prime) \kappa_Y^2(\y, \y^\prime) \dif \x \dif \x^\prime \dif \y \dif \y^\prime$. Combining the expressions above finishes the proof.
\end{proof}
\begin{proof}[Proof of Property \ref{prop:cond_mi}]
    Following the definition of kernelized mutual information, we can derive the expression of kernelized conditional mutual information as follows:
    \begin{align}
        I_1(X; Y|Z) &= C_{\kappa_X} C_{\kappa_Y} C_{\kappa_Z} \iint_{\mathcal{Z}^2} \iint_{\mathcal{Y}^2} \iint_{\mathcal{X}^2} p_{X,Y,Z}(x, y, z) \log \frac{p_{X,Y|Z}(x^\prime, y^\prime| z^\prime)}{p_{X|Z}(x^\prime| z^\prime)p_{Y|Z}(y^\prime| z^\prime)} \\
        &\qquad\cdot \kappa_X^2(x, x^\prime) \kappa_Y^2(y, y^\prime) \kappa_Z^2(z, z^\prime) \dif x \dif x^\prime \dif y \dif y^\prime \dif z \dif z^\prime. \label{def:cmi}
    \end{align}
    Then the proof of property \ref{prop:cond_mi} directly follows by the definition of conditional and unconditional kernelized mutual information.
\end{proof}
\begin{proof}[Proof of Property \ref{prop:data_proc}]
    Following property \ref{prop:cond_mi}, we have that
    \begin{equation*}
        I_1(X;Y,Z) = I_1(X;Y|Z) + I_1(X;Z) = I_1(X;Z|Y) + I_1(X;Y).
    \end{equation*}
    From the Markov condition, $X$ and $Z$ are conditionally independent given $Y$, i.e. $p_{X,Z|Y}(x,z|y) = p_{X|Y}(x|y) p_{Z|Y}(z|y)$. By the definition of conditional mutual information, we have $I_1(X;Z|Y) = 0$. Then the first inequality of property \ref{prop:data_proc} follows by the positiveness of $I_1(X;Y|Z)$. Similarly, one can prove the second part of property \ref{prop:data_proc}.
\end{proof}

\section{Proof of Section \ref{sec:gen}}

\subsection{Proof of Theorem \ref{thm:gen_mi}}
\begin{lemma} \label{lm:dv}
    Let $P$, $Q$ be probability measures defined on the same measurable space, where $P$ is absolutely continuous with respect to $Q$. Then
    \begin{equation*}
        D_1(P \parallel Q) + E_P^\kappa \ge \E_P[X] - \log \E_Q[e^X].
    \end{equation*}
    where $X$ is any random variable such that $e^X$ is $Q$-integrable and $\E_P[X]$ exists.
\end{lemma}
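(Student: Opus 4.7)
The plan is to emulate the classical Donsker--Varadhan variational formula while compensating for the kernel smoothing built into $D_1$, with Property \ref{prop:kl_positive} playing the role that nonnegativity of KL divergence plays in the standard argument. First I would introduce the exponentially tilted Gibbs measure $Q_X$ defined by $\frac{\dif Q_X}{\dif Q}(\x) = \frac{e^{X(\x)}}{\E_Q[e^X]}$, which is a valid probability measure by $Q$-integrability of $e^X$. Taking logarithms yields the pointwise identity $\log q_X(\x') - \log q(\x') = X(\x') - \log \E_Q[e^X]$.

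Next I would telescope $\log q_X - \log q$ inside the definition of $D_1$, obtaining
$$D_1(P \parallel Q) = D_1(P \parallel Q_X) + C_\kappa \iint_{\mathcal{X}^2} p(\x)\bigl[\log q_X(\x') - \log q(\x')\bigr]\kappa^2(\x, \x')\,\dif \x\,\dif \x'.$$
Substituting the log-density identity and invoking the kernel normalization $C_\kappa \int \kappa^2(\x, \x')\,\dif \x' = 1$ collapses the cross term to $\int \hat{p}(\x') X(\x')\,\dif \x' - \log \E_Q[e^X]$, where $\hat{p}$ is the kernel-smoothed density from Section \ref{sec:kre}. I would then apply Property \ref{prop:kl_positive} to the pair $(P, Q_X)$, yielding $D_1(P \parallel Q_X) \ge -E_P^\kappa$, and combine with the above decomposition to arrive at
$$D_1(P \parallel Q) + E_P^\kappa \ge \int \hat{p}(\x') X(\x')\,\dif \x' - \log \E_Q[e^X].$$

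The main obstacle is the final bookkeeping step that promotes $\int \hat{p}\,X$ to $\E_P[X]$. The residual $\int (\hat{p} - p) X\,\dif \x$ is the kernel-averaging error on the test function $X$, which is not manifestly controlled by the $\log p$-based discrepancy $E_P^\kappa$ for arbitrary $X$. To close the gap I would either exploit the symmetry of $\kappa^2$ under $\x \leftrightarrow \x'$ to reinterpret the cross term directly as $\E_P[X]$ modulo a discrepancy of the same type as $E_P^\kappa$, or widen the definition of $E_P^\kappa$ in context so that the kernel-averaging error on $X$ is absorbed into the final constant. I expect this last step to be the subtlest piece of the argument, and the only place where the specifics of how $u_X^\kappa$ is defined (relative to $\log p$ versus to $X$) really come to bear.
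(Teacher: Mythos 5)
Your route is the same as the paper's: tilt $Q$ into the Gibbs measure $Q^X$ with $\dif Q^X/\dif Q = e^X/\E_Q[e^X]$, split $D_1(P \parallel Q) = D_1(P \parallel Q^X) + \text{cross term}$, lower-bound $D_1(P \parallel Q^X) \ge -E_P^\kappa$ via Property \ref{prop:kl_positive}, and use the normalization $C_\kappa \int_\mathcal{X} \kappa^2(\x,\x^\prime)\dif\x^\prime = 1$ to extract $-\log\E_Q[e^X]$. The one step you leave open --- promoting the smoothed term to $\E_P[X]$ --- is closed in the paper without any extra discrepancy: the proof writes the cross term as $C_\kappa\iint p(\x)\,\x^\prime\,\kappa^2(\x,\x^\prime)\dif\x\dif\x^\prime$, i.e.\ it reads $X$ as the coordinate variable of the space carrying $P$ and $Q$, and then uses shift invariance of $\kappa$: the normalized bump $C_\kappa\kappa^2(\x,\cdot)$ is symmetric about $\x$, so $C_\kappa\int_\mathcal{X}\x^\prime\kappa^2(\x,\x^\prime)\dif\x^\prime = \x$ exactly, and the cross term collapses to $\E_P[X] - \log\E_Q[e^X]$ with no residual. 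In your notation, this is precisely the statement that the kernel-averaging error $\int(\hat p - p)X$ vanishes identically because smoothing by a symmetric kernel acts as the identity on affine functions; no widening of $E_P^\kappa$ is needed.

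Your reservation about a genuinely nonlinear $X$ is nonetheless well taken: $E_P^\kappa$ is built from $\log p$ discrepancies and does not control $\int(\hat p - p)X$ for an arbitrary test function, so the lemma as stated really rests on the coordinate (or pushforward-measure) reading of $X$. This is not a defect of your argument relative to the paper's --- it is exactly the point at which the paper's own proof, and its later use in Theorem \ref{thm:gen_mi} with $X = \lambda f(W,S)$, relies implicitly on that identification, in the same spirit in which the $E^\kappa$ terms are treated as negligible. So your proposal matches the intended proof; the only thing you were missing is the symmetry observation above, which is what lets the paper state the bound with $\E_P[X]$ on the nose.
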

\begin{proof}
    Define $Q^X$ be a probability measure such that
    \begin{equation*}
        Q^X(\Omega) = \int_\Omega \frac{e^X}{\E_Q[e^X]} \dif Q,
    \end{equation*}
    then $Q$ is absolutely continuous with respect to $Q^X$. Observe that
    \begin{align*}
        D_1(P \parallel Q) + E_P^\kappa &= D_1(P \parallel Q^X) + E_P^\kappa + C_\kappa \iint_{\mathcal{X}^2} p_X(x) \log \frac{e^{x^\prime}}{\E_Q[e^X]} \kappa^2(x, x^\prime) \dif x \dif x^\prime \\
        &\ge C_\kappa \iint_{\mathcal{X}^2} p_X(x) \log e^{x^\prime} \kappa^2(x, x^\prime) \dif x \dif x^\prime - C_\kappa \iint_{\mathcal{X}^2} p_X(x) \log \E_Q[e^X] \kappa^2(x, x^\prime) \dif x \dif x^\prime \\
        &= \int_\mathcal{X} p_X(x) \prn*{C_\kappa \int_\mathcal{X} x^\prime \kappa^2(x, x^\prime) \dif x^\prime} \dif x - \log \E_Q[e^X] \int_\mathcal{X} p_X(x) \prn*{C_\kappa \int_\mathcal{X} \kappa^2(x, x^\prime) \dif x^\prime} \dif x \\
        & = \int_\mathcal{X} p_X(x) x \dif x - \log \E_Q[e^X] \int_\mathcal{X} p_X(x) \dif x = \E_P[X] - \log \E_Q[e^X].
    \end{align*}
\end{proof}

\begin{lemma} (Lemma 2 in \cite{harutyunyan2021information}) \label{lm:exp_subgauss}
    Let $X$ be a zero-mean random variable that is $R$-subgaussian, then $\forall \lambda \in \left[0, \frac{1}{4R^2}\right)$:
    \begin{equation*}
        \E\brk*{e^{\lambda X^2}} \le 1 + 8\lambda R^2.
    \end{equation*}
\end{lemma}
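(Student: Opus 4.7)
The plan is to derive the bound via the standard route: subgaussian hypothesis $\to$ two-sided tail bound $\to$ integral representation of $\E[e^{\lambda X^2}] \to$ elementary calculus. First I would convert the subgaussian moment-generating function hypothesis into a concentration inequality. Applying Markov's inequality to $e^{tX}$ for $t > 0$ and optimizing yields $\Pr(X > u) \le e^{-u^2/(2R^2)}$; the symmetric argument on $e^{-tX}$ gives the same bound for $\Pr(X < -u)$, so $\Pr(|X| > u) \le 2e^{-u^2/(2R^2)}$. Equivalently, $\Pr(X^2 > s) \le 2e^{-s/(2R^2)}$ for every $s \ge 0$, which is the key ingredient fed into the MGF computation.

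Next, I would write $\E[e^{\lambda X^2}]$ as a tail integral using the layer-cake identity $e^{\lambda x^2} - 1 = \lambda \int_0^\infty e^{\lambda s} \mathbbm{1}_{s < x^2} \dif s$, swap expectation and integral by Tonelli, and substitute the tail bound:
\begin{equation*}
\E\brk*{e^{\lambda X^2}} - 1 = \lambda \int_0^\infty e^{\lambda s} \Pr(X^2 > s) \dif s \le 2\lambda \int_0^\infty e^{-(1/(2R^2) - \lambda)s} \dif s.
\end{equation*}
The integral is finite precisely when $\lambda < 1/(2R^2)$, in which case it evaluates to $1/(1/(2R^2) - \lambda)$, yielding $\E\brk*{e^{\lambda X^2}} \le 1 + 4\lambda R^2 / (1 - 2\lambda R^2)$.

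To finish, I would restrict to $\lambda \in [0, 1/(4R^2))$, on which $1 - 2\lambda R^2 > 1/2$, so that $4\lambda R^2 / (1 - 2\lambda R^2) \le 8\lambda R^2$, giving the advertised bound $\E\brk*{e^{\lambda X^2}} \le 1 + 8\lambda R^2$.

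There is no real obstacle here; the proof is a short chain of standard estimates, and the only thing requiring care is the bookkeeping around the admissible range of $\lambda$. Convergence of the exponential integral forces the natural constraint $\lambda < 1/(2R^2)$, but the sharper restriction $\lambda < 1/(4R^2)$ in the statement is exactly what permits replacing the denominator $1 - 2\lambda R^2$ by the clean constant factor $1/2$, producing the coefficient $8$ in the final bound.
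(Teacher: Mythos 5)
Your proof is correct. Note that the paper itself does not prove this lemma at all---it is imported verbatim as Lemma 2 of \cite{harutyunyan2021information}---so there is no in-paper argument to compare against. Your derivation is the standard one: the Chernoff--Markov step gives $\Pr(X^2 > s) \le 2e^{-s/(2R^2)}$, the layer-cake identity with Tonelli turns $\E[e^{\lambda X^2}]-1$ into $\lambda\int_0^\infty e^{\lambda s}\Pr(X^2>s)\dif s \le 4\lambda R^2/(1-2\lambda R^2)$, and the restriction $\lambda < 1/(4R^2)$ makes the denominator exceed $1/2$, yielding the constant $8$. All steps and constants check out, and your remark correctly identifies why the admissible range of $\lambda$ is $[0,1/(4R^2))$ rather than the $[0,1/(2R^2))$ forced by mere convergence of the integral.
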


\begin{lemma} (Lemma 3 in \cite{harutyunyan2021information}) \label{lm:subgauss}
    Let $X$ and $Y$ be independent random variables. Let $f$ be a measurable function such that $f(x, Y)$ is $R$-subgaussian and $\E_Y[f(x, Y)] = 0$ for all $x \in \mathcal{X}$, then $f(X, Y)$ is also $R$-subgaussian.
\end{lemma}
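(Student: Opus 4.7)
The plan is to prove that $f(X, Y)$ is $R$-subgaussian by directly verifying the moment generating function bound $\E[e^{\lambda f(X,Y)}] \le e^{\lambda^2 R^2/2}$ for all $\lambda \in \mathbb{R}$. The natural route is to condition on $X$: by the tower property,
\begin{equation*}
    \E[e^{\lambda f(X, Y)}] = \E_X\bigl[\E_Y[e^{\lambda f(X, Y)} \mid X]\bigr],
\end{equation*}
and since $X$ and $Y$ are independent, the inner conditional expectation reduces to $\E_Y[e^{\lambda f(x, Y)}]$ evaluated at the realized value $x$ of $X$. This reduction is the key structural step that lets the fixed-$x$ assumption be leveraged.

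Next I would invoke the hypothesis that $f(x, Y)$ is zero-mean and $R$-subgaussian for every $x \in \mathcal{X}$, which immediately gives $\E_Y[e^{\lambda f(x, Y)}] \le e^{\lambda^2 R^2 / 2}$ pointwise in $x$. Substituting back into the outer expectation,
\begin{equation*}
    \E[e^{\lambda f(X, Y)}] \le \E_X\bigl[e^{\lambda^2 R^2/2}\bigr] = e^{\lambda^2 R^2/2},
\end{equation*}
which is precisely the $R$-subgaussian MGF bound. The zero-mean condition on the composite variable $f(X, Y)$ follows from the same tower argument applied without the exponential, ensuring that the centered convention (which is the one used in Lemma \ref{lm:exp_subgauss} and throughout the paper) is preserved.

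I do not anticipate any serious technical obstacle for this lemma; the argument is essentially a clean application of Fubini together with the definition of subgaussianity. The only point worth double-checking is measurability, so that the conditional expectation and the interchange of expectation and integration over $X$ are justified — but this is granted by the assumption that $f$ is measurable and by Fubini's theorem applied to the nonnegative integrand $e^{\lambda f(x, y)}$.
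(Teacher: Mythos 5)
Your proof is correct: the tower-property reduction using independence, the pointwise MGF bound $\E_Y[e^{\lambda f(x,Y)}] \le e^{\lambda^2 R^2/2}$, and the final averaging over $X$ constitute the standard argument for this statement. The paper itself gives no proof of this lemma (it is imported verbatim as Lemma 3 of the cited reference), and the derivation there is essentially the same conditioning argument you present, so there is nothing to add beyond the measurability remark you already handle via Tonelli/Fubini.
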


\begin{restatetheorem}{\ref{thm:gen_mi}}[Restate]
    Suppose that $\ell(w, Z)$ is $R$-subgaussian with respect to $Z$ for every $w \in \mathcal{W}$, then
    \begin{align*}
        \abs{\E_{S,W}[L(W) - L_S(W)]} &\le \sqrt{\frac{2R^2 \hat{I}_1(S; W)}{n}}, \quad \textrm{and} \\
        \E_{S,W}[L(W) - L_S(W)]^2 &\le \frac{4R^2(\hat{I}_1(S; W) + \log 3)}{n},
    \end{align*}
    where $\hat{I}_1(S; W) = I_1(S; W) + E_{S,W}^\kappa$.
\end{restatetheorem}
\begin{proof}
    Let $f(w, s) = L(w) - L_s(w)$ and let $W^\prime$ and $S^\prime$ be independent copies of $W$ and $S$ and $\lambda \in [0, \infty)$, then
    \begin{align}
        I_1(W; S) + E_{W,S}^\kappa &= D_1(P_{W,S} \parallel P_W \otimes P_S) + E_{W,S}^\kappa \nonumber\\
        &\ge \E_{W,S}[\lambda f(W,S)] - \log \E_{W^\prime, S^\prime}\brk*{e^{\lambda f(W^\prime, S^\prime)}} \nonumber\\
        &= \E_{W,S}[\lambda f(W,S)] - \log \E_{W, S^\prime}\brk*{e^{\lambda f(W, S^\prime)}} \label{eq:gen_mi1}
    \end{align}
    by Lemma \ref{lm:dv} and the fact that $W, W^\prime, S^\prime$ are independent. Notice that $f(w, S)$ is $R/\sqrt{n}$-subgaussian for each $w \in \mathcal{W}$, since $L_S(w)$ is the average of $n$ i.i.d. $R$-subgaussian random variables. Moreover, $\E_S[f(w, S)] = 0$ for any fixed $w$. Then by Lemma \ref{lm:subgauss}, $f(W, S)$ is $R/\sqrt{n}$-subgaussian. Therefore,
    \begin{gather*}
        \log \E_{W, S^\prime}\brk*{e^{\lambda f(W,S^\prime) - \lambda \E_{W, S^\prime}[f(W,S^\prime)]}} \le \frac{\lambda^2R^2}{2n}, \\
        \log \E_{W, S^\prime}\brk*{e^{\lambda f(W,S^\prime)}} \le \frac{\lambda^2R^2}{2n}.
    \end{gather*}
    Plugging into (\ref{eq:gen_mi1}), we have
    \begin{equation*}
        I_1(W; S) + E_{W,S}^\kappa \ge \lambda\E_{W,S}[f(W,S)] - \frac{\lambda^2R^2}{2} \ge \frac{n}{2R^2} \E_{W,S}^2[f(W,S)].
    \end{equation*}
    This finishes the proof of the first part. For the second part, let $\tilde{f}(w, s) = \prn*{L(w) - L_s(w)}^2$ and $\lambda \in \left[0, \frac{1}{4R^2}\right)$, then
    \begin{align*}
        I_1(W; S) + E_{W,S}^\kappa &\ge \E_{W,S}\brk*{\lambda \tilde{f}(W,S)} - \log \E_{W^\prime, S^\prime}\brk*{e^{\lambda \tilde{f}(W^\prime, S^\prime)}} \\
        &\ge \E_{W,S}\brk*{\lambda \prn*{L(W) - L_S(W)}^2} - \log (1 + 8\lambda R^2) \\
        &\ge \frac{1}{4R^2} \E_{W,S}\brk*{\prn*{L(W) - L_S(W)}^2} - \log 3,
    \end{align*}
    by Lemma \ref{lm:dv}, \ref{lm:exp_subgauss} and taking $\lambda = \frac{1}{4R^2}$. This finishes the proof of the second part.
\end{proof}

\subsection{Proof of Theorem \ref{thm:gen_sgld}}
\begin{lemma} \label{lm:max_ke}
    Let random variable $X \sim N(0, \Sigma)$ and $X^\prime$ be any continuous random variable that satisfies $\E[X^\prime] = 0$ and $\Cov[X^\prime] = \Sigma$, then $S_1(X^\prime) \le S_1(X) + E_{X^\prime}^\kappa$. Furthermore if $\kappa$ is Gaussian with kernel width $\sigma_\kappa$, then $S_1(X) = \frac{d}{2} \log(2\pi e) + \frac{1}{2}\log\abs{\Sigma} + \frac{\sigma_\kappa^2}{4}\tr[\Sigma^{-1}]$.
\end{lemma}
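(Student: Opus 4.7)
The plan is to reduce the inequality to Property~\ref{prop:kl_positive} of Proposition~\ref{prop:property} by introducing a kernelized cross-entropy between $X'$ and the reference Gaussian $X$, then exploit the quadratic form of the Gaussian log-density so that the cross-entropy term collapses exactly to $S_1(X)$.

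First I would rewrite $S_1(X')$ in a form that isolates $p_{X'}$ in the logarithm. Performing the $\x$-integral in Definition~\ref{def:ke}, we have $S_1(X')=-\int_{\mathcal X}\hat p_{X'}(\x')\log p_{X'}(\x')\dif\x'$, where $\hat p_{X'}$ is the smoothed density defined in the text just before Proposition~\ref{prop:finite}. Introduce the ``kernelized cross-entropy''
\begin{equation*}
S_1(X';X)\;\triangleq\;-C_\kappa\iint_{\mathcal X^2}p_{X'}(\x)\log p_X(\x')\kappa^2(\x,\x')\dif\x\dif\x'
\;=\;-\int_{\mathcal X}\hat p_{X'}(\x')\log p_X(\x')\dif\x'.
\end{equation*}
Then by direct subtraction and Definition~\ref{def:kd},
\begin{equation*}
S_1(X')-S_1(X';X)\;=\;-D_1(P_{X'}\parallel P_X),
\end{equation*}
so Property~\ref{prop:kl_positive} of Proposition~\ref{prop:property} ($D_1(P_{X'}\parallel P_X)\ge-E_{X'}^\kappa$) yields $S_1(X')\le S_1(X';X)+E_{X'}^\kappa$.

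Next I would show that, under the stated moment conditions, $S_1(X';X)$ actually equals $S_1(X)$. The key observation is that $\log p_X(\x')=-\tfrac{d}{2}\log(2\pi)-\tfrac{1}{2}\log\abs{\Sigma}-\tfrac{1}{2}\x'^\top\Sigma^{-1}\x'$ is quadratic in $\x'$, so $S_1(X';X)$ depends on $X'$ only through $\E_{\hat p_{X'}}[\x'\x'^\top]$. For the Gaussian kernel with width $\sigma_\kappa$, the squared kernel $C_\kappa\kappa^2(\x,\x')$ is exactly the $N\!\left(0,\tfrac{\sigma_\kappa^2}{2}I\right)$ density in $\x-\x'$, so $\hat p_{X'}$ is the law of $X'+\eta$ with $\eta\sim N(0,\tfrac{\sigma_\kappa^2}{2}I)$ independent of $X'$. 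Since $\E[X']=0$ and $\Cov[X']=\Sigma$, independence gives $\E_{\hat p_{X'}}[\x'\x'^\top]=\Sigma+\tfrac{\sigma_\kappa^2}{2}I$, hence
\begin{equation*}
S_1(X';X)\;=\;\tfrac{d}{2}\log(2\pi)+\tfrac{1}{2}\log\abs{\Sigma}+\tfrac{1}{2}\tr\!\bigl(\Sigma^{-1}(\Sigma+\tfrac{\sigma_\kappa^2}{2}I)\bigr)\;=\;\tfrac{d}{2}\log(2\pi e)+\tfrac{1}{2}\log\abs{\Sigma}+\tfrac{\sigma_\kappa^2}{4}\tr(\Sigma^{-1}).
\end{equation*}
Specialising to $X'=X$ gives the stated closed-form $S_1(X)=\tfrac{d}{2}\log(2\pi e)+\tfrac{1}{2}\log\abs{\Sigma}+\tfrac{\sigma_\kappa^2}{4}\tr(\Sigma^{-1})$, and combining with step one delivers $S_1(X')\le S_1(X)+E_{X'}^\kappa$.

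The only subtle point is the identification $S_1(X';X)=S_1(X)$: I expect the main obstacle to be spotting that it is valid to use an arbitrary reference density $p_X$ inside the cross-entropy form and that the Gaussian kernel's convolution structure makes $\hat p_{X'}$ inherit $\Cov[X']$ additively. Everything else is the standard Gaussian entropy calculation plus one invocation of the already-proved divergence lower bound, so the argument is short once the cross-entropy decomposition is in place.
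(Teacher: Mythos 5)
Your argument is correct and is essentially the paper's own proof in different packaging: the paper adds and subtracts $\log p_X$ inside $S_1(X)-S_1(X')$, kills the moment-difference term using the matched mean and covariance together with the kernel's $\tfrac{\sigma_\kappa^2}{2}I$ convolution covariance, and invokes the bound $D_1(P_{X'}\parallel P_X)\ge -E_{X'}^\kappa$, which is exactly your cross-entropy decomposition $S_1(X')=S_1(X';X)-D_1(P_{X'}\parallel P_X)$ followed by the identification $S_1(X';X)=S_1(X)$. Your formulation has the minor virtue of delivering the Gaussian closed form and the inequality from a single computation (the paper evaluates $S_1(X)$ by a separate integral), but the key lemma, the moment-matching step, and the use of the Gaussian kernel's convolution structure are the same.
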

\begin{proof}
    Let $p(\cdot)$ and $q(\cdot)$ be the PDF of $X$ and $X^\prime$ respectively. Notice that $p_\kappa = C_\kappa \kappa^2(0, \cdot)$ integrates to $1$, thus could be treated as a probability distribution whose covariance matrix is $\frac{1}{2}\sigma_\kappa^2I$. Then we have
    \begin{align*}
        S_1(X) - S_1(X^\prime) &= C_\kappa \iint_{\mathcal{X}^2} \brk*{q(\x) \prn*{\log p(\x^\prime) + \log \frac{q(\x^\prime)}{p(\x^\prime)}} - p(\x) \log p(\x^\prime)} \kappa^2(\x, \x^\prime) \dif \x \dif \x^\prime \\
        &= C_\kappa \iint_{\mathcal{X}^2} \brk*{q(\x) - p(\x)} \log p(\x^\prime) \kappa^2(\x, \x^\prime) \dif \x \dif \x^\prime + C_\kappa \iint_{\mathcal{X}^2} q(\x) \log \frac{q(\x^\prime)}{p(\x^\prime)} \kappa^2(\x, \x^\prime) \dif \x \dif \x^\prime \\
        &= C_\kappa \iint_{\mathcal{X}^2} \brk*{q(\x) - p(\x)} \prn*{-\frac{d}{2} \log(2\pi) -\frac{1}{2}\log\abs{\Sigma} - \frac{1}{2}{\x^\prime}^\top\Sigma^{-1}\x^\prime} \kappa^2(\x, \x^\prime) \dif \x \dif \x^\prime + D_1(Q \parallel P) \\
        &\ge -\prn*{\frac{d}{2} \log(2\pi) +\frac{1}{2}\log\abs{\Sigma}} \int_\mathcal{X} \brk*{q(\x) - p(\x)} \prn*{C_\kappa \int_\mathcal{X} \kappa^2(\x, \x^\prime) \dif \x^\prime} \dif \x \\
        &\qquad- \frac{1}{2} \tr\brc*{\brk*{\int_\mathcal{X} \brk*{q(\x) - p(\x)} \prn*{C_\kappa \int_\mathcal{X}\x^\prime{\x^\prime}^\top \kappa^2(\x, \x^\prime) \dif \x^\prime} \dif \x}\Sigma^{-1}} - E_{X^\prime}^\kappa \\
        &= -\prn*{\frac{d}{2} \log(2\pi) +\frac{1}{2}\log\abs{\Sigma}} \int_\mathcal{X} \brk*{q(\x) - p(\x)} \dif \x \\
        &\qquad- \frac{1}{2} \tr\brc*{\brk*{\int_\mathcal{X} \brk*{q(\x) - p(\x)} \prn*{\x\x^\top + \frac{1}{2}\sigma_\kappa^2I} \dif \x}\Sigma^{-1}} - E_{X^\prime}^\kappa \\
        &= -\frac{1}{2} \tr\brk*{\prn*{\int_\mathcal{X} \brk*{q(\x) - p(\x)} \x\x^\top \dif \x}\Sigma^{-1}} - E_{X^\prime}^\kappa = -\frac{1}{2} \tr\brk*{\prn*{\Sigma - \Sigma}\Sigma^{-1}} - E_{X^\prime}^\kappa =  - E_{X^\prime}^\kappa.
    \end{align*}
    Consider the case that kernel $\kappa$ is Gaussian, i.e. $\kappa^2(\x, \x^\prime) = \exp\prn*{-\norm{\x-\x^\prime}_2^2/\sigma_\kappa^2}$ and $C_\kappa = (\pi\sigma_\kappa^2)^{-d/2}$, we have
    \begin{align*}
        S_1(X) &= \frac{1}{\sqrt{(2\pi)^d\abs{\Sigma}}} \frac{1}{\sqrt{(\pi\sigma_\kappa^2)^d}} \iint_{\mathcal{X}^2} \exp\prn*{-\frac{1}{2} \x^\top \Sigma^{-1} \x} \brk*{\frac{d}{2} \log(2\pi) +\frac{1}{2}\log\abs{\Sigma} + \frac{1}{2} {\x^\prime}^\top\Sigma^{-1}\x^\prime} \exp\prn*{-\frac{\norm{\x-\x^\prime}_2^2}{\sigma_\kappa^2}} \dif \x \dif \x^\prime \\
        &= \frac{d}{2} \log(2\pi) + \frac{1}{2}\log\abs{\Sigma} + \frac{1}{\sqrt{(2\pi)^d\abs{\Sigma}}} \int_\mathcal{X} \frac{1}{2} \exp\prn*{-\frac{1}{2} \x^\top \Sigma^{-1} \x} \prn*{\frac{1}{\sqrt{(\pi\sigma_\kappa^2)^d}} \int_\mathcal{X} {\x^\prime}^\top\Sigma^{-1}\x^\prime \exp\prn*{-\frac{\norm{\x-\x^\prime}_2^2}{\sigma_\kappa^2}} \dif \x^\prime} \dif \x \\
        &= \frac{d}{2} \log(2\pi) + \frac{1}{2}\log\abs{\Sigma} + \frac{1}{\sqrt{(2\pi)^d\abs{\Sigma}}} \int_\mathcal{X} \frac{1}{2} \exp\prn*{-\frac{1}{2} \x^\top \Sigma^{-1} \x} \tr\brk*{\prn*{\x\x^\top + \frac{1}{2}\sigma_\kappa^2I}\Sigma^{-1}} \dif \x \\
        &= \frac{d}{2} \log(2\pi) + \frac{1}{2}\log\abs{\Sigma} + \frac{1}{2}\tr\brk*{\prn*{\Sigma + \frac{1}{2}\sigma_\kappa^2I}\Sigma^{-1}} = \frac{d}{2} \log(2\pi e) + \frac{1}{2}\log\abs{\Sigma} + \frac{\sigma_\kappa^2}{4}\tr[\Sigma^{-1}].
    \end{align*}
\end{proof}

\begin{lemma} \label{lm:mi_bound1}
    Let $X$, $Y$, $\Delta$ and $\xi \sim N(0, \sigma^2I)$ be independent random variables. Let $f$: $\mathcal{W} \times \mathcal{Z}^b \rightarrow \mathcal{W}$ be a determinant function and let $\Omega(\cdot) = \E_X[f(\cdot,X)]$. Then
    \begin{align*}
        I_1(f(Y+\Delta,X) + \xi;X|Y) &\le \frac{1}{2}\log\abs*{\frac{1}{\sigma^2} \Cov[g(Y,\Delta,X)] + I} + E_{f(Y+\Delta,X)-\Omega(Y+\Delta)+\xi|Y,\Delta}^\kappa.
    \end{align*}
\end{lemma}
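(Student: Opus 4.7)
The plan is to reduce $I_1(U; X \mid Y)$ in three stages: enlarge the conditioning set to include $\Delta$, center $U$ by the deterministic shift $\Omega(Y+\Delta)$, and then compare the resulting conditional distribution to a Gaussian surrogate of matching covariance using Lemma \ref{lm:max_ke}.

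For the first stage, since $X$ is independent of $(Y, \Delta)$, a direct evaluation of Definition \ref{def:kmi} gives $I_1(X; \Delta \mid Y) = 0$. Iterating Property \ref{prop:cond_mi} twice yields the kernelized chain rule
\[
I_1(X; U, \Delta \mid Y) = I_1(X; U \mid Y, \Delta) + I_1(X; \Delta \mid Y) = I_1(X; U \mid Y) + I_1(X; \Delta \mid U, Y),
\]
so combined with non-negativity of kernelized mutual information (Property \ref{prop:mi_kl}) I obtain $I_1(U; X \mid Y) \le I_1(U; X \mid Y, \Delta)$. For the second stage, $\Omega(Y+\Delta)$ is deterministic given $(Y, \Delta)$ and the kernel $\kappa$ is shift invariant by Assumption \ref{asmp:parzen}, so a change of variables inside the integrand of Definition \ref{def:cmi} shows that replacing $U$ by the centered variable $\tilde U := f(Y+\Delta, X) - \Omega(Y+\Delta) + \xi$ leaves the conditional mutual information unchanged.

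For the third stage, I decompose $I_1(\tilde U; X \mid Y, \Delta)$ as a difference of conditional kernelized entropies using the conditional analog of Property \ref{prop:mi_ke}. Since $\xi$ is Gaussian and independent of $(X, Y, \Delta)$, the inner term $S_1(\tilde U \mid X, Y, \Delta)$ collapses to $S_1(\xi)$, whose closed form is supplied by Lemma \ref{lm:max_ke}. For the outer term $S_1(\tilde U \mid Y, \Delta)$, I apply Lemma \ref{lm:max_ke} pointwise in $(y, \delta)$: the conditional covariance is $\Sigma_{y,\delta} = \Cov[g(y, \delta, X)] + \sigma^2 I$, so a Gaussian surrogate gives the Gaussian-entropy formula plus the discrepancy $E_{\tilde U \mid Y, \Delta}^\kappa$. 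Because $\Sigma_{y,\delta} \succeq \sigma^2 I$, the trace correction $\tfrac{\sigma_\kappa^2}{4}\bigl(\tr[\Sigma_{y,\delta}^{-1}] - d/\sigma^2\bigr)$ is non-positive and the $\tfrac{d}{2}\log(2\pi e)$ constants cancel, leaving $\tfrac{1}{2}\log\abs*{\sigma^{-2}\Cov[g(y,\delta,X)] + I}$. Integrating over $(Y, \Delta)$ and using concavity of $A \mapsto \log\abs{A + I}$ on the PSD cone together with the law of total covariance $\E_{Y,\Delta}[\Cov[g(Y,\Delta,X) \mid Y, \Delta]] \preceq \Cov[g(Y,\Delta,X)]$ pushes the expectation inside the $\log$-determinant and produces the stated right-hand side.

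The main technical obstacle will be the third stage: Property \ref{prop:mi_ke} is only stated unconditionally, whereas what I need is the identity $I_1(\tilde U; X \mid Y, \Delta) = S_1(\tilde U \mid Y, \Delta) - S_1(\tilde U \mid X, Y, \Delta)$ with the conditional entropies defined consistently with Definition \ref{def:cmi}, which smooths the conditioning variables by their own kernels rather than evaluating at a fixed value. Re-deriving this identity from the smoothed conditional densities is the delicate step; should it prove inconvenient, a fallback is to bypass conditional entropies entirely by comparing $P_{\tilde U \mid X, Y, \Delta}$ to a Gaussian surrogate through a divergence-based argument along the lines of Lemma \ref{lm:dv}.
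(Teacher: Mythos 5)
Your proposal is correct and follows essentially the same route as the paper's proof: condition on $\Delta$ via the chain rule plus independence, center by $\Omega(Y+\Delta)$ using shift invariance, then bound the conditional entropy by the Gaussian surrogate of Lemma \ref{lm:max_ke}, drop the non-positive trace correction, and push the expectation over $(Y,\Delta)$ inside the log-determinant by Jensen. The only cosmetic difference is the last step, where the paper gets equality of covariances by noting $\Cov[\E_X[g(Y,\Delta,X)\mid Y,\Delta]]=0$ while you invoke the Loewner inequality from the law of total covariance plus monotonicity of the log-determinant, and you flag (rightly) the conditional-entropy decomposition subtlety that the paper itself glosses over.
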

\begin{proof}
    Let $g(Y,\Delta,X) = f(Y+\Delta,X) - \Omega(Y+\Delta)$, then
    \begin{align}
        &\quad I_1(f(Y+\Delta,X) + \xi;X|Y = \y,\Delta = \delta) \\
        &= I_1(g(Y,\Delta,X) + \xi;X|Y = \y,\Delta = \delta) \nonumber \\
        &= S_1(g(Y,\Delta,X) + \xi|Y = \y,\Delta = \delta) - S_1(g(Y,\Delta,X) + \xi|X,Y = \y,\Delta = \delta) \nonumber \\
        &= S_1(g(Y,\Delta,X) + \xi|Y = \y,\Delta = \delta) - S_1(\xi) \nonumber \\
        &= S_1(g(Y,\Delta,X) + \xi|Y = \y,\Delta = \delta) - \frac{d}{2} \log(2\pi e \sigma^2) - \frac{d\sigma_\kappa^2}{4\sigma^2} \nonumber \\
        &\le \frac{d}{2} \log(2\pi e) + \frac{1}{2}\log\abs*{\Cov[g(Y,\Delta,X)|Y=\y,\Delta=\delta] + \sigma^2I} - \frac{d}{2} \log(2\pi e \sigma^2) + E_{g(Y,\Delta,X)+\xi|Y,\Delta}^\kappa \nonumber \\
        &\qquad+ \frac{\sigma_\kappa^2}{4}\tr\brk*{\prn*{\Cov[g(Y,\Delta,X)|Y=\y,\Delta=\delta] + \sigma^2I}^{-1}} - \frac{d\sigma_\kappa^2}{4\sigma^2} \label{eq:mi_bound2} \\
        &\le \frac{1}{2}\log\abs*{\frac{1}{\sigma^2} \Cov[g(Y,\Delta,X)|Y=\y,\Delta=\delta] + I} + E_{g(Y,\Delta,X)+\xi|Y,\Delta}^\kappa \label{eq:mi_bound3}
    \end{align}
    where (\ref{eq:mi_bound2}) follows by Lemma \ref{lm:max_ke} and noticing that
    \begin{align*}
        \Cov[g(Y,\Delta,X)+\xi|Y=\y,\Delta=\delta] &= \Cov[g(Y,\Delta,X)|Y=\y,\Delta=\delta] + \Cov[\xi] \\
        &= \Cov[g(Y,\Delta,X)|Y=\y,\Delta=\delta] + \sigma^2I,
    \end{align*}
    and (\ref{eq:mi_bound3}) follows by the fact that
    \begin{equation*}
        \tr\brk*{\prn*{\Cov[g(Y,\Delta,X)|Y=\y,\Delta=\delta] + \sigma^2I}^{-1}} \le \tr\brk*{\prn*{\sigma^2I}^{-1}}.
    \end{equation*}
    This leads to the following bound:
    \begin{align}
        I_1(f(Y+\Delta,X) + \xi;X|Y) &\le I_1(f(Y+\Delta,X) + \xi,\Delta;X|Y) \\
        &= I_1(f(Y+\Delta,X) + \xi,\Delta;X|Y) - I_1(\Delta;X|Y) \label{eq:mi_bound2_1} \\
        &= I_1(f(Y+\Delta,X) + \xi;X|Y,\Delta) \\
        &= \E_{Y,\Delta}\brk*{I_1(f(Y+\Delta,X) + \xi;X|Y=\y,\Delta=\delta)} \nonumber \\
        &\le \E_{Y,\Delta}\brk*{\frac{1}{2}\log\abs*{\frac{1}{\sigma^2} \Cov[g(Y,\Delta,X)|Y=\y,\Delta=\delta] + I}} + E_{g(Y,\Delta,X)+\xi|Y,\Delta}^\kappa \label{eq:mi_bound2_2} \\
        &\le \frac{1}{2}\log\abs*{\frac{1}{\sigma^2} \E_{Y,\Delta}\brk*{\Cov[g(Y,\Delta,X)|Y=\y,\Delta=\delta]} + I} + E_{g(Y,\Delta,X)+\xi|Y,\Delta}^\kappa \label{eq:mi_bound2_3} \\
        &= \frac{1}{2}\log\abs*{\frac{1}{\sigma^2} \Cov[g(Y,\Delta,X)] + I} + E_{g(Y,\Delta,X)+\xi|Y,\Delta}^\kappa, \label{eq:mi_bound2_4}
    \end{align}
    where (\ref{eq:mi_bound2_1}) follows by the fact that $\Delta$ and $X$ are independent, (\ref{eq:mi_bound2_2}) follows by applying (\ref{eq:mi_bound3}), (\ref{eq:mi_bound2_3}) follows by Jensen's inequality and the concavity of the log-determinant function, and (\ref{eq:mi_bound2_4}) follows by the law of total variance and noticing that
    \begin{equation*}
        \Cov\brk*{\E_X[g(Y,\Delta,X)|Y=\y,\Delta=\delta]} = \Cov\brk*{\E_X[f(Y,\Delta,X) - \E_X[f(Y,\Delta,X)]|Y=\y,\Delta=\delta]} = 0.
    \end{equation*}
\end{proof}

\begin{restatetheorem}{\ref{thm:gen_sgld}}[Restate]
    Under the same conditions of Lemma \ref{lm:gen_sgld}:
    \begin{gather*}
        I_1(W_T;S) \le \sum_{t=1}^T I_1\prn*{W_t;B_t|W_{t-1}} \le \sum_{t=1}^T \prn*{\frac{1}{2} \log\abs*{\frac{\eta_t^2}{\sigma_t^2}\V_t + I} + E_{W_t|W_{t-1}}^\kappa}, \\
        I(W_T;S) \le \sum_{t=1}^T \frac{1}{2} \log\abs*{\frac{\eta_t^2}{\sigma_t^2}\V_t + I},
    \end{gather*}
    where $\V_t = \Cov[g(W_{t-1}, B_t)]$ is the \textbf{gradient covariance} matrix and $\abs{\cdot}$ denotes the matrix determinant.
\end{restatetheorem}
\begin{proof}
    Notice that $S \rightarrow (B_1, \cdots, B_T) \rightarrow (W_1, \cdots, W_T)$ form a Markov chain, then
    \begin{align*}
        I_1(W_T;S) &\le I_1(W_T; B_1, \cdots, B_T) \le I_1(W_0, W_1, \cdots, W_T; B_1, \cdots, B_T) \\
        &= I_1(W_0; B_1, \cdots, B_T) + I_1(W_1; B_1, \cdots, B_T| W_0) + I_1(W_2; B_1, \cdots, B_T| W_0, W_1) \\
        &\qquad + \cdots + I_1(W_T; B_1, \cdots, B_T| W_0, \cdots, W_T).
    \end{align*}
    For each $t \in [1, T]$, we have
    \begin{align*}
        I_1(W_t; B_1, \cdots, B_t| W_0, \cdots, W_{t-1}) &= S_1(W_t| W_0, \cdots, W_{t-1}) - S_1(W_t| B_1, \cdots, B_t, W_0, \cdots, W_{t-1}) \\
        &= S_1(W_t| W_{t-1}) - S_1(W_t| B_t, W_{t-1}) \\
        &= I_1(W_t; B_t| W_{t-1}).
    \end{align*}
    Let $X = B_t$, $Y = W_{t-1}$, $\Delta = 0$, $\xi = \xi_t$ and $f(W_{t-1},B_t) = -\eta_t g(W_{t-1},B_t)$ in Lemma \ref{lm:mi_bound1}, we then have
    \begin{align*}
        I_1(W_t; B_t| W_{t-1}) &= I_1(W_t - W_{t-1}; B_t| W_{t-1}) = I_1(-\eta_t g(W_{t-1},B_t) + \xi_t; B_t| W_{t-1}) \\
        &\le \frac{1}{2}\log\abs*{\frac{\eta_t^2}{\sigma_t^2} \Cov[g(W_{t-1},B_t)] + I} + E_{W_t|W_{t-1}}^\kappa,
    \end{align*}
    which finishes the proof.
\end{proof}

\subsection{Proof of Proposition \ref{prop:tight}}
\begin{lemma} \label{lm:det}
    Let $V$ be an $n \times n$ symmetric positive definite matrix partitioned by
    \begin{equation*}
        V = \begin{bmatrix}
            A & C^\top \\
            C & B
        \end{bmatrix},
    \end{equation*}
    where $A$, $B$ are symmetric matrices of size $n_1 \times n_1$ and $n_2 \times n_2$ respectively. Then $\abs{V} \le \abs{A}\abs{B}$.
\end{lemma}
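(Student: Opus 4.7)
The plan is to reduce the claim to the standard monotonicity of the determinant on positive semi-definite matrices via the Schur complement. Since $V$ is symmetric positive definite, its principal block $A$ is itself positive definite and hence invertible, and the block determinant identity gives
\[
    \abs{V} = \abs{A} \cdot \abs{B - CA^{-1}C^\top}.
\]
Dividing both sides by the strictly positive quantity $\abs{A}$, the desired inequality $\abs{V} \le \abs{A}\abs{B}$ reduces to showing $\abs{B - CA^{-1}C^\top} \le \abs{B}$.

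To establish this, I would first observe that $CA^{-1}C^\top$ is positive semi-definite: for every $x \in \mathbb{R}^{n_2}$, $x^\top CA^{-1}C^\top x = (C^\top x)^\top A^{-1} (C^\top x) \ge 0$ since $A^{-1} \succ 0$. Moreover, the Schur complement $B - CA^{-1}C^\top$ is itself strictly positive definite because $V$ is. Writing $B = (B - CA^{-1}C^\top) + CA^{-1}C^\top$ then expresses $B$ as the sum of a positive definite matrix and a positive semi-definite one, so in the Loewner order one has $0 \prec B - CA^{-1}C^\top \preceq B$.

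The final step invokes monotonicity of the determinant on the positive semi-definite cone: whenever $0 \prec M \preceq N$, conjugation by $N^{-1/2}$ yields $N^{-1/2} M N^{-1/2} \preceq I$, whose eigenvalues therefore all lie in $(0, 1]$; taking determinants gives $\abs{M} \le \abs{N}$. Applied with $M = B - CA^{-1}C^\top$ and $N = B$, this completes the proof. This is the classical Fischer inequality, so no substantive obstacle is anticipated; the only mild care required is to verify strict positive definiteness of each intermediate matrix so that the inverses and square roots used above are well defined.
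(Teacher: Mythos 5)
Your proposal is correct and follows essentially the same route as the paper: both start from the Schur-complement factorization $\abs{V} = \abs{A}\,\abs{B - CA^{-1}C^\top}$ and then reduce the claim to $\abs{B - CA^{-1}C^\top} \le \abs{B}$ using positive semi-definiteness of $CA^{-1}C^\top$. The only difference is the finishing step, where you invoke Loewner monotonicity of the determinant (conjugating by $B^{-1/2}$) while the paper compares the eigenvalues of $B - CA^{-1}C^\top$ and $B$ via Weyl's inequality; both arguments are valid and of comparable weight.
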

\begin{proof}
    Notice that
    \begin{equation*}
        V = D\begin{bmatrix}
            A & 0 \\
            0 & B - CA^{-1}C^\top
        \end{bmatrix}D^\top, \quad \textrm{where } D = \begin{bmatrix}
            I_{n_1} & 0 \\
            CA^{-1} & I_{n_2}
        \end{bmatrix},
    \end{equation*}
    then we have $\abs{V} = \abs{D}\abs{A}\abs{B - CA^{-1}C^\top}\abs{D^\top} = \abs{A}\abs{B - CA^{-1}C^\top}$. Let $D^\dagger$ be the pseudo-inverse of $D$, then for any column vector $\x$ of length $n$:
    \begin{equation*}
        \x^\top \begin{bmatrix}
            A & 0 \\
            0 & B - CA^{-1}C^\top
        \end{bmatrix} \x = (\x^\top D^\dagger) V (\x^\top D^\dagger)^\top \ge 0,
    \end{equation*}
    therefore $B - CA^{-1}C^\top$ is positive semi-definite. Similarly, we can prove that $CA^{-1}C^\top$ is positive semi-definite. Let $\lambda_i$, $\mu_i$, $\nu_i$, $i \in \{1, \cdots, n_2\}$ be the eigenvalues of $B - CA^{-1}C^\top$, $B$ and $CA^{-1}C^\top$ respectively in descending order, then by Weyl's inequality, we have $\lambda_i \le \mu_i - \nu_{n_2} \le \mu_i$ for all $i \in \{1, \cdots, n_2\}$, which implies that $\abs{B - CA^{-1}C^\top} \le \abs{B}$. The proof is complete by combining the result above: $\abs{V} = \abs{A}\abs{B - CA^{-1}C^\top} \le \abs{A}\abs{B}$.
\end{proof}

\begin{restateproposition}{\ref{prop:tight}}[Restate]
    Given $\V_t$, $V_t$ and $L$ defined as above, let $\{c_i\}_{i=1}^r$ be a partition of $\{n\}$, i.e. $c_1 \cup \cdots \cup c_r = \{n\}$ and $c_i \cap c_j = \Phi$ for any $1 \le i < j \le r$. Let $\V_t^i$ be the sub-matrix of $\V_t$ with columns and rows indexed by $c_i$, and define
    \begin{gather*}
        \theta_c(\V) = \frac{1}{2}\log\abs*{\frac{\eta_t^2}{\sigma_t^2}\V+I}, \quad \theta_v(V) = \frac{d}{2}\log\prn*{\frac{\eta_t^2V}{d\sigma_t^2}+1}, \\
        \textrm{then}\qquad\theta_c(\V_t) \le \sum_{i=1}^r\theta_c(\V_t^i) \le \theta_v(V_t) \le \theta_v(L).
    \end{gather*}
\end{restateproposition}
\begin{proof}
    Notice that $V_t = \tr[\V_t]$. Since the covariance matrix is always symmetric positive semi-definite, we can denote the eigenvalues of $\V_t$ by $\lambda_1$, $\cdots$, $\lambda_d \ge 0$, then
    \begin{equation*}
        \log\abs*{\frac{\eta_t^2}{\sigma_t^2}\V_t+I} = \log\brk*{\prod_{i=1}^d \prn*{\frac{\eta_t^2\lambda_i}{\sigma_t^2} + 1}} \le \log\brk*{\frac{1}{d}\sum_{i=1}^d \prn*{\frac{\eta_t^2\lambda_i}{\sigma_t^2} + 1}}^d = d\log\brk*{\frac{\eta_t^2}{d\sigma_t^2}\sum_{i=1}^d \lambda_i + 1} = d\log\brk*{\frac{\eta_t^2V_t}{d\sigma_t^2} + 1},
    \end{equation*}
    where the only inequality follows by the fact that the geometric mean is always less than the arithmetic mean. Let $V_t^i = \tr[\V_t^i]$, then through the same strategy, one can prove that for all $i \in \{1, \cdots, r\}$:
    \begin{equation*}
        \theta_c(\V_t^i) \le \theta_v(V_t^i),
    \end{equation*}
    then by Jensen's inequality, we have:
    \begin{equation*}
        \sum_{i=1}^r \theta_c(\V_t^i) \le \sum_{i=1}^r \theta_v(V_t^i) = \sum_{i=1}^r d\log\brk*{\frac{\eta_t^2V_t^i}{d\sigma_t^2} + 1} \le d\log\brk*{\frac{\eta_t^2}{d\sigma_t^2} \sum_{i=1}^r V_t^i  + 1} = d\log\brk*{\frac{\eta_t^2}{d\sigma_t^2} V_t  + 1} = \theta_v(V_t).
    \end{equation*}
    Next, by applying Lemma \ref{lm:det} recursively, we can prove that
    \begin{equation*}
        \theta_c(\V_t) = \frac{1}{2}\log\abs*{\frac{\eta_t^2}{\sigma_t^2}\V_t+I} \le \frac{1}{2}\log\prod_{i=1}^r\abs*{\frac{\eta_t^2}{\sigma_t^2}\V_t^i+I} = \frac{1}{2}\sum_{i=1}^r \log\abs*{\frac{\eta_t^2}{\sigma_t^2}\V_t^i+I} = \sum_{i=1}^r \theta_c(\V_t^i).
    \end{equation*}
    To prove the last inequality of Proposition \ref{prop:tight}, notice that
    \begin{align*}
        V_t &= \E_{B_t}[\norm{g(W_{t-1},B_t) - \E_{B_t}[g(W_{t-1},B_t)]}_2^2] = \E_{B_t}[\norm{g(W_{t-1},B_t)}_2^2] - \norm{\E_{B_t}[g(W_{t-1},B_t)]}_2^2 \\
        &\le \E_{B_t}[\norm{g(W_{t-1},B_t)}_2^2] \le \max_{w \in \mathcal{W}, z \in \mathcal{Z}}\norm{g(w,z)}_2^2 = L,
    \end{align*}
    which finishes the proof by the monotonicity of the log function.
\end{proof}

\begin{table}[t]
    \centering
    \caption{Hyper-parameters used to train deep learning models.}
    \label{tbl:hparam}
    \begin{tabular}{ l|ccc }
        \hline
        Hyper-parameter & Synthetic & MNIST & CIFAR10 \\
        \hline
        learning rate ($\eta$) & 0.001 & 0.01 & 0.01 \\
        size of training dataset ($n$) & 100 & 5000 & 5000 \\
        epochs & 50 & 100 & 100 \\
        batch size & 10 & 50 & 50 \\
        steps ($T$) & 500 & 10000 & 10000 \\
        variance of the noise ($\sigma_t^2$) & $10^{-3}$ & $10^{-5}$ & $10^{-5}$ \\
        \hline
    \end{tabular}
\end{table}
\begin{figure}[t]
    \centering
    \includegraphics[width=0.55\textwidth]{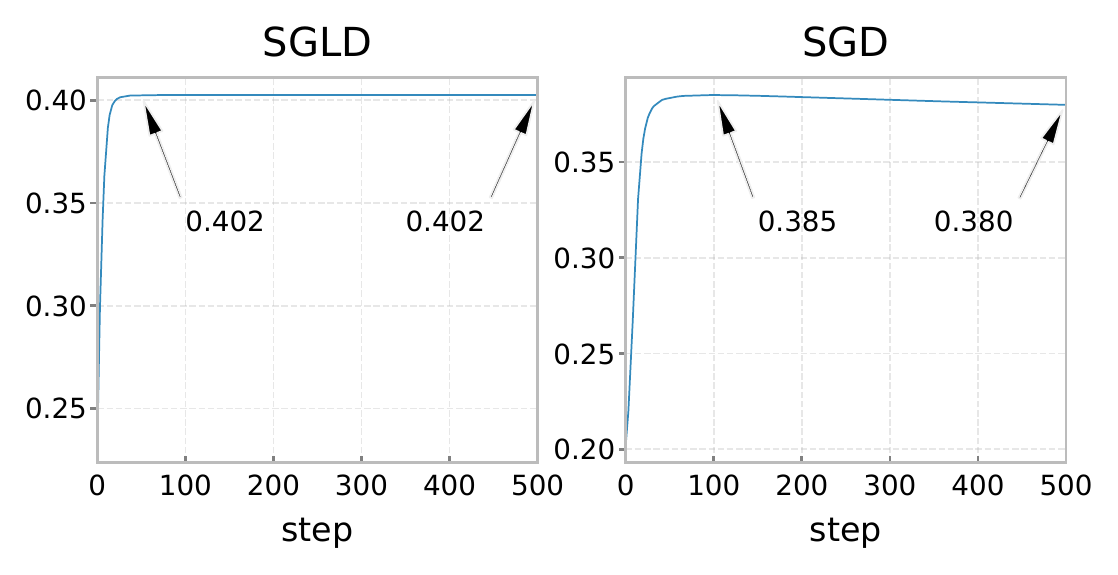}
    \caption{Behavior of $I_1(W_T;S)$ on synthetic data.}
    \label{fig:IWS}
\end{figure}

\subsection{Proof of Theorem \ref{thm:gen_sgd}}
\begin{lemma} \label{lm:mi_bound2}
    The mutual information $I_1(\tilde{W}_T;S) \le \sum_{t=1}^T I_1(-\eta_t g(W_{t-1},B_t) + \tilde{\xi}_t;B_t|\tilde{W}_{t-1})$.
\end{lemma}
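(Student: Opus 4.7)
The plan is to mirror the chain of inequalities used in the SGLD case of Theorem~\ref{thm:gen_sgld} by combining the data processing inequality with the chain rule for kernelized R\'enyi's mutual information (properties \ref{prop:cond_mi} and \ref{prop:data_proc} of Proposition~\ref{prop:property}), then reducing the resulting multi-batch conditional mutual information to the claimed per-step form using shift invariance and the independence of the injected Gaussian noises.

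First I would apply data processing to the Markov chain $S \to (B_1,\ldots,B_T) \to \tilde{W}_T$, which is valid because $\tilde{W}_T$ depends on $S$ only through $B_{1:T}$ and the independent noises $\tilde{\xi}_{1:T}$; this gives $I_1(\tilde{W}_T;S) \le I_1(\tilde{W}_T;B_{1:T})$. A second application, using that $\tilde{W}_T$ is a deterministic function of the full trajectory $\tilde{W}_{0:T}$, then yields $I_1(\tilde{W}_T;B_{1:T}) \le I_1(\tilde{W}_{0:T};B_{1:T})$, and the chain rule peels this off as
\begin{equation*}
I_1(\tilde{W}_{0:T};B_{1:T}) = \sum_{t=1}^{T} I_1\bigl(\tilde{W}_t;B_{1:T}\mid \tilde{W}_{0:t-1}\bigr),
\end{equation*}
with the $t=0$ contribution vanishing because $\tilde{W}_0 = W_0$ is independent of the batches.

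Two simplifications then bring each summand toward the target. Since $B_{t+1:T}$ is independent of everything generated up to step $t$ and of $\tilde{\xi}_t$, it contributes zero conditional mutual information, so $I_1(\tilde{W}_t;B_{1:T}\mid \tilde{W}_{0:t-1}) = I_1(\tilde{W}_t;B_{1:t}\mid \tilde{W}_{0:t-1})$. Moreover, conditioning on $\tilde{W}_{t-1}$ fixes the offset, so shift invariance of kernelized R\'enyi's mutual information lets us replace $\tilde{W}_t$ inside each summand by the increment $\tilde{W}_t - \tilde{W}_{t-1} = -\eta_t g(W_{t-1},B_t) + \tilde{\xi}_t$.

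The main obstacle is the final collapse of the past-trajectory conditioning down to $\tilde{W}_{t-1}$: unlike SGLD, the auxiliary process is genuinely non-Markov on $\tilde{W}_{t-1}$, because $W_{t-1} = \tilde{W}_{t-1} - \Delta_{t-1}$ still depends on the random accumulated noise $\Delta_{t-1} = \sum_{i<t}\tilde{\xi}_i$, so that neither $W_{t-1}$ nor $\Delta_{t-1}$ is recoverable from $\tilde{W}_{t-1}$ alone. To handle this I plan to exploit that $\Delta_{t-1}$, $\tilde{\xi}_t$, and $B_t$ are mutually independent of any residual information in $S$ beyond $B_t$ itself, so the dependence of the increment on $B_{1:t-1}$ and on $\tilde{W}_{0:t-2}$ is mediated entirely by the independent shift $\Delta_{t-1}$; this is precisely the hypothesis under which Lemma~\ref{lm:mi_bound1} (with $Y=\tilde{W}_{t-1}$, $\Delta=-\Delta_{t-1}$, $X=B_t$, $\xi=\tilde{\xi}_t$) was formulated, and applying the marginalization step of that lemma term by term is what is meant to deliver the per-step bound $I_1(-\eta_t g(W_{t-1},B_t) + \tilde{\xi}_t; B_t\mid \tilde{W}_{t-1})$; summation over $t$ then produces the claimed inequality.
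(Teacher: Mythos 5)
Your plan to transplant the SGLD decomposition runs into exactly the obstacle you name at the end, and the fix you propose does not close it. After your chain-rule step you hold terms of the form $I_1(\tilde{W}_t;B_{1:t}\mid\tilde{W}_{0:t-1})$ and must convert them into $I_1(\tilde{W}_t;B_t\mid\tilde{W}_{t-1})$. In the SGLD proof this conversion is an equality coming from the Markov property of $(W_t)$; here it fails because the conditional law of $\tilde{W}_t$ given the past depends on $W_{t-1}=\tilde{W}_{t-1}-\Delta_{t-1}$, which is not a function of $\tilde{W}_{t-1}$. Lemma \ref{lm:mi_bound1} cannot supply the missing step: in the paper it is invoked only \emph{after} the present lemma, to bound a quantity that is already of the form $I_1(\,\cdot\,;B_t\mid\tilde{W}_{t-1})$ by the covariance log-determinant, and its handling of $\Delta$ goes in the opposite direction (it \emph{adds} $\Delta$ to the conditioning set using independence of $\Delta$ and $B_t$; it has no mechanism for removing $\tilde{W}_{0:t-2}$ and $B_{1:t-1}$ from the conditioning). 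Moreover the inequality you need is not evidently true in the required direction: heuristically $I_1(\tilde{W}_t;B_{1:t}\mid\tilde{W}_{0:t-1})\approx S_1(\tilde{W}_t\mid\tilde{W}_{0:t-1})-S_1(\tilde{\xi}_t)$, since the full past together with all batches determines $W_{t-1}$, while $I_1(\tilde{W}_t;B_t\mid\tilde{W}_{t-1})=S_1(\tilde{W}_t\mid\tilde{W}_{t-1})-S_1(\tilde{W}_t\mid B_t,\tilde{W}_{t-1})$ with $S_1(\tilde{W}_t\mid B_t,\tilde{W}_{t-1})\ge S_1(\tilde{\xi}_t)$; comparing the two pits $I_1(\tilde{W}_t;\tilde{W}_{0:t-2}\mid\tilde{W}_{t-1})$ against the residual randomness of $W_{t-1}$ given $(B_t,\tilde{W}_{t-1})$, and neither dominates in general. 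Even granting Shannon-style manipulations, your route would at best deliver $\sum_t\bigl[S_1(\tilde{W}_t\mid\tilde{W}_{t-1})-S_1(\tilde{\xi}_t)\bigr]$, which is \emph{larger} than the lemma's right-hand side, so the stated inequality would not follow.

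The paper sidesteps trajectory conditioning entirely by peeling off one step at a time while keeping $S$ as the second argument: from the Markov chain $S\to(\tilde{W}_{T-1},\,-\eta_T g(W_{T-1},B_T)+\tilde{\xi}_T)\to\tilde{W}_T$, data processing and the chain rule (properties \ref{prop:data_proc} and \ref{prop:cond_mi}) give $I_1(\tilde{W}_T;S)\le I_1(\tilde{W}_{T-1};S)+I_1(-\eta_T g(W_{T-1},B_T)+\tilde{\xi}_T;S\mid\tilde{W}_{T-1})$; iterating down to $I_1(\tilde{W}_0;S)=0$ and then replacing $S$ by $B_t$ in each summand via the conditional Markov chain $S\to B_t\to-\eta_t g(W_{t-1},B_t)+\tilde{\xi}_t$ given $\tilde{W}_{t-1}$ yields the claim. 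Because the conditioning never grows beyond $\tilde{W}_{t-1}$, the non-Markovianity of the auxiliary process never has to be confronted.
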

\begin{proof}
    \begin{align}
        I_1(\tilde{W}_T;S) &= I_1(\tilde{W}_{T-1} - \eta_T g(W_{T-1}, B_T) + \tilde{\xi}_T; S) \nonumber \\
        &\le I_1(\tilde{W}_{T-1}, - \eta_T g(W_{T-1}, B_T) + \tilde{\xi}_T; S) \label{eq:mi_bound4} \\
        &= I_1(\tilde{W}_{T-1}; S) + I_1(- \eta_T g(W_{T-1}, B_T) + \tilde{\xi}_T; S|\tilde{W}_{T-1}) \label{eq:mi_bound5} \\
        &\le I_1(\tilde{W}_{T-2}; S) + I_1(- \eta_{T-1} g(W_{T-2}, B_{T-1}) + \tilde{\xi}_{T-1}; S|\tilde{W}_{T-2}) \nonumber \\
        &\qquad+ I_1(- \eta_T g(W_{T-1}, B_T) + \tilde{\xi}_T; S|\tilde{W}_{T-1}) \nonumber \\
        &\le \cdots \nonumber \\
        &\le I_1(\tilde{W}_0;S) + \sum_{t=1}^T I_1(-\eta_t g(W_{t-1}, B_t) + \tilde{\xi}_t; S|\tilde{W}_{t-1}) \label{eq:mi_bound6} \\
        &= \sum_{t=1}^T I_1(-\eta_t g(W_{t-1}, B_t) + \tilde{\xi}_t; S|\tilde{W}_{t-1}) \label{eq:mi_bound7} \\
        &\le \sum_{t=1}^T I_1(-\eta_t g(W_{t-1}, B_t) + \tilde{\xi}_t; B_t|\tilde{W}_{t-1}) \label{eq:mi_bound8} \\
        &= \sum_{t=1}^T I_1(\tilde{W}_t - \tilde{W}_{t-1}; B_t|\tilde{W}_{t-1}) = \sum_{t=1}^T I_1(\tilde{W}_t; B_t|\tilde{W}_{t-1}), \nonumber
    \end{align}
    where (\ref{eq:mi_bound4}) follows by noticing that $Z \rightarrow (X,Y) \rightarrow f(X,Y)$ forms a Markov chain and then apply property \ref{prop:data_proc} in Proposition \ref{prop:property}, (\ref{eq:mi_bound5}) follows by property \ref{prop:cond_mi} in Proposition \ref{prop:property}, (\ref{eq:mi_bound6}) follows by repeating the steps above (\ref{eq:mi_bound5}), (\ref{eq:mi_bound7}) follows by noticing that $\tilde{W}_0$ and $S$ are independent, and (\ref{eq:mi_bound8}) follows by the fact that $S \rightarrow B_t \rightarrow -\eta_t g(w, B_t) + \tilde{\xi}_t|w = \tilde{W}_{t-1}$ form a Markov chain.
\end{proof}

\begin{restatetheorem}{\ref{thm:gen_sgd}}[Restate]
    Under the same conditions of Lemma \ref{lm:gen_sgd}:
    \begin{gather*}
        I_1\prn*{\tilde{W}_T;S} \le \sum_{t=1}^T\prn*{\frac{1}{2}\log\abs*{\frac{\eta_t^2}{\sigma_t^2}\V_t+I} + E_{\tilde{W}_t|\tilde{W}_{t-1}}^\kappa}, \\
        I\prn*{\tilde{W}_T;S} \le \sum_{t=1}^T \frac{1}{2}\log\abs*{\frac{\eta_t^2}{\sigma_t^2}\V_t+I}.
    \end{gather*}
\end{restatetheorem}
\begin{proof}
    Applying Lemma \ref{lm:mi_bound2}, we have
    \begin{equation*}
        I_1(\tilde{W}_T;S) \le \sum_{t=1}^T\prn*{\frac{1}{2}\log\abs*{\frac{\eta_t^2}{\sigma_t^2}\Cov[g(W_{t-1},B_t)]+I} + E_{\tilde{W}_t|\tilde{W}_{t-1}}^\kappa},
    \end{equation*}
    by applying Lemma \ref{lm:mi_bound1} with $X = B_t$, $Y = \tilde{W}_{t-1}$, $\Delta = -\Delta_{t-1}$, $\xi = \tilde{\xi}_t$ and $f(W_{t-1}, B_t) = -\eta_t g(W_{t-1}, B_t)$.
\end{proof}

\begin{figure}[t]
    \centering
    \includegraphics[width=0.55\textwidth]{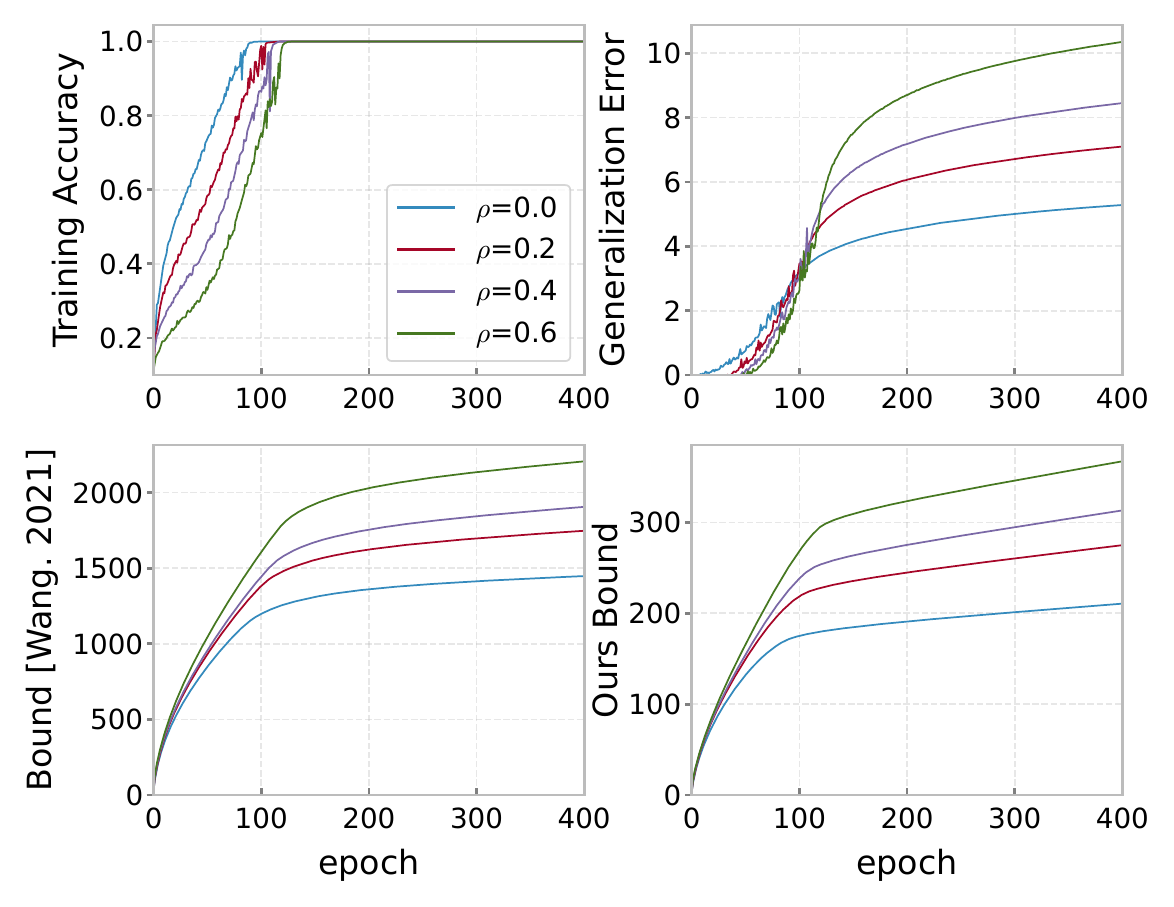}
    \caption{Random label experiment on CIFAR10.}
    \label{fig:label_cifar}
\end{figure}
\begin{figure}[t]
    \centering
    \includegraphics[width=0.55\textwidth]{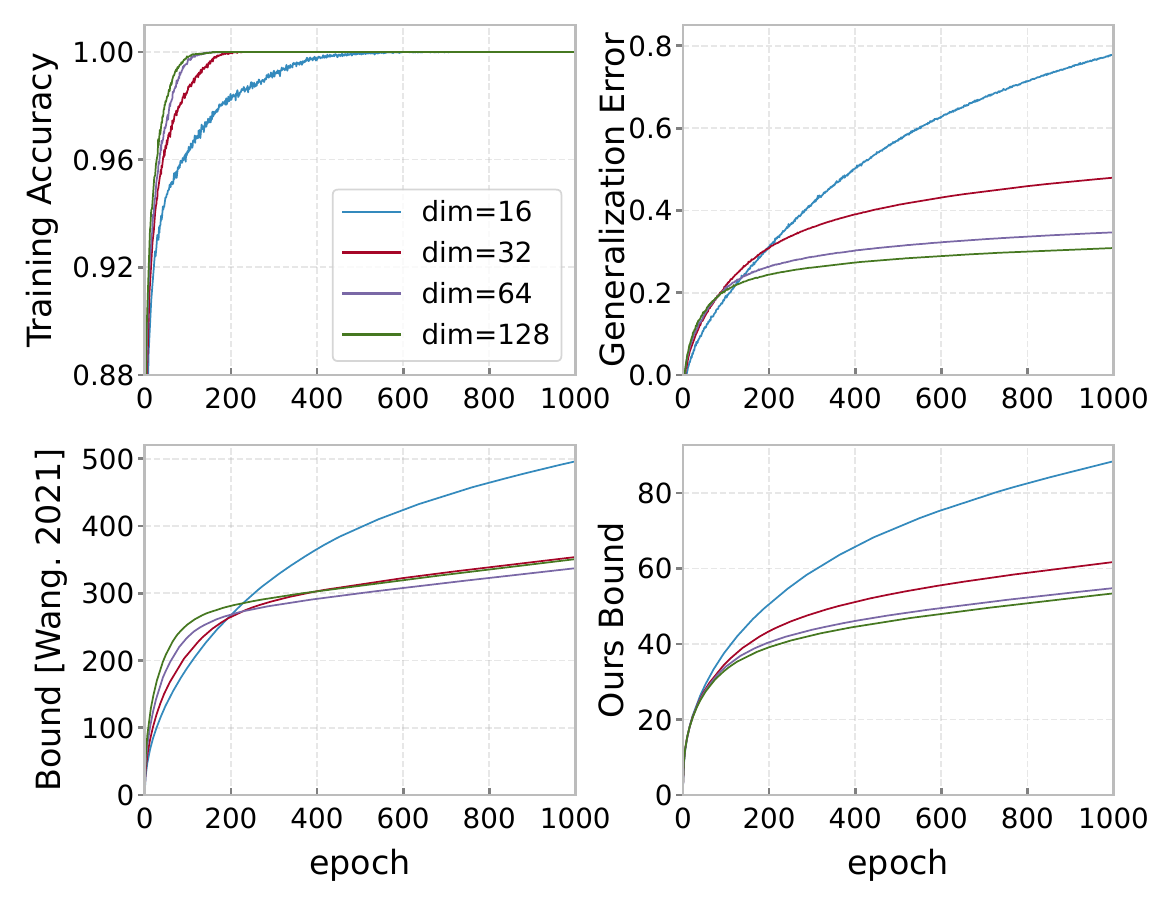}
    \caption{Generalization of MLP on MNIST with different dimensionality of the hidden layer.}
    \label{fig:width_mnist}
\end{figure}

\section{Experiment Details} \label{sec:supp_expr}

Deep learning models are trained with an Intel Xeon CPU (2.10GHz, 48 cores), 256GB memory, and 4 Nvidia Tesla V100 GPU (32GB). For the MNIST data set, we train an MLP with one hidden layer of size 128. For the CIFAR10 dataset, we train a CNN with 4 convolution layers (32, 32, 48, 48) of size 3 $\times$ 3 followed by two fully connected layers of size 48. All of the layers above use ReLU as the activation function. The hyper-parameters used for training are listed in Table \ref{tbl:hparam}.

As can be seen in Figure \ref{fig:IWS}, the curves of IWS soon stop increasing after several epochs and start to decrease. This is consistent with the behavior of the true generalization error, supporting our claim that the estimate of $I_1(W_T;S)$ successfully reflects the behavior of $I(W_T;S)$.

In figure \ref{fig:label_cifar}, we conduct the random label experiment on CIFAR10, and in Figure \ref{fig:width_mnist}, we test the generalization behavior of MLP on the MNIST dataset with different dimensionality of the hidden layer. It can be seen that our bound is still $5$ more times tighter than the bound of Lemma \ref{lm:gen_sgd}.

\end{document}